\newtheorem{result}{Theorem}
\newtheorem{theorem}{Theorem}[section]
\newtheorem{corollary}[theorem]{Corollary}
\newtheorem{lemma}[theorem]{Lemma}
\newtheorem{claim}[theorem]{Claim}
\newtheorem*{remark}{Remark}
\newenvironment{proofof}[1]{\noindent {\em Proof of #1.  }}{\hfill$\Box$}
\newtheorem{definition}[theorem]{Definition}
\newcommand{\card}[1] {\left\vert #1 \right\vert}
\newcommand{\set}[1] {\left\{ #1 \right\}}
\newcommand{\A}{{\cal A}}
\newcommand{\C}{{\cal C}}
\newcommand{\D}{{\cal D}}
\newcommand{\F}{{\cal F}}
\renewcommand{\H}{{\cal H}}
\renewcommand{\S}{{\cal S}}
\newcommand{\X}{{\cal X}}
\newcommand{\R}{\mathbb{R}}
\newcommand{\eps}{\varepsilon}
\newcommand{\sgn}{{\rm sgn}}
\DeclareMathOperator*{\E}{\mathbb{E}}
\DeclareMathOperator*{\argmin}{argmin}
\newcommand{\norm}[1] {\left\| #1 \right\|}
\newcommand{\poly}{{\rm poly}}
\title{Calibration for the (Computationally-Identifiable) Masses}
\author{\'{U}rsula {H\'ebert-Johnson\thanks{\href{mailto:uhebertj@stanford.edu}{uhebertj@stanford.edu}}}\\Stanford University \and Michael P. Kim\thanks{\href{mailto:mpk@cs.stanford.edu}{mpk@cs.stanford.edu}. Supported in part by NSF grant CNS-122864.  Part of this work was completed while the author was visiting VMWare Research Group.} \\Stanford University\and Omer Reingold\thanks{\href{mailto:reingold@stanford.edu}{reingold@stanford.edu}. Supported in part by NSF grant CCF-1749750.}\\Stanford University \and Guy N. Rothblum\thanks{\href{mailto:rothblum@alum.mit.edu}{rothblum@alum.mit.edu} Research supported by the ISRAEL SCIENCE FOUNDATION (grant No. 5219/17).}  \\ Weizmann Institute}
\date{}
\begin{document}
\begin{titlepage}
\clearpage\maketitle
\thispagestyle{empty}

\begin{abstract}

As algorithms increasingly inform and influence decisions made about individuals, it becomes increasingly important to address concerns that these algorithms might be discriminatory. The output of an algorithm can be discriminatory for many reasons, most notably:
(1) the data used to train the algorithm
might be biased (in various ways) to favor certain populations over
others; (2) the \emph{analysis} of this training data might
inadvertently or maliciously introduce biases that are
not borne out in the data.  This work focuses on the
latter concern.

We develop and study \emph{multicalbration} -- a new measure of
algorithmic fairness that aims to mitigate concerns about discrimination that is introduced in the process of learning a predictor from data. Multicalibration guarantees accurate (calibrated) predictions for every subpopulation that can be identified within a specified class of computations. We think of the class as being quite rich; in particular, it can contain many overlapping subgroups of a protected group. 

We show that in many settings this strong notion of protection from discrimination is both attainable and aligned with the goal of obtaining accurate predictions. Along the way, we present new algorithms for learning a multicalibrated predictor, study the computational complexity of this task, and draw new connections to computational learning models such as agnostic learning.

\end{abstract}
\end{titlepage}

\section{Introduction}
Fueled by rapidly growing data sets and by breakthroughs
in machine learning, algorithms are informing
decisions that affect all aspects of life.  From
news article recommendations to criminal sentencing decisions
to healthcare diagnostics, increasingly algorithms
are used to make predictions about individuals.
Often, the predictions of an algorithm form the basis
for deciding how to treat these individuals
(suggesting a conservative Op-Ed, approving early parole, or
initiating chemotherapy).
A potential risk is that these algorithms -- even given access to
unbiased ground truth data --
might discriminate against groups of individuals that are protected by law or by ethics. This paper aims to mitigate such
risks of algorithmic discrimination.

We consider algorithms that predict the probabilities of events occurring for individuals. For example, a financial institution may
be interested in predicting the probability that an
individual will repay a mortgage.  The institution may
have at its disposal a large array of information
for each individual (as well as historic data and
global information such as financial and political
trends). But as thorough as the company may be, a significant
level of uncertainty will remain. Just as we wouldn't
expect to be able to predict with absolute certainty
whether it will rain on a particular day a year from now,
the financial institution wouldn't expect to predict
with absolute certainty whether an individual will
repay a loan. Thus, we consider algorithms that output, for every individual $i$, 
a prediction $x_i$ of the probability
that the event will occur for $i$; we call
the mapping from individuals to probabilities
a {\em predictor}.\footnote{A predictor can be used for regression
or, when paired with randomized rounding, for binary classification.}

Our focus in this paper is mitigating biases that may
arise as an algorithm \emph{analyzes} given data -- specifically, as the algorithm learns a predictor from
data.  Continuing the above example, suppose
that in a particular protected community $S$, on average,
individuals are financially disadvantaged and are
unlikely to repay a loan.  A machine-learning algorithm
that aims to optimize the institution's returns might
devote resources to learning outside of $S$ -- where there
is more opportunity for gains in utility -- and assign
a fixed, low probability to all $i \in S$.  Such an
algorithm would discriminate against the {\em qualified} members
of $S$. If $S$ is an underrepresented
subpopulation, this form of discrimination has the potential
to amplify $S$'s underrepresentation by refusing to
approve members that are capable of repaying the loan.

\subsection{Overview of our contributions}
Focusing on such concerns, we investigate a notion we call \emph{multicalibration}.
Multicalibration aims to mitigate discrimination that arises in the process of learning a predictor from given data. In a nutshell, multicalibration guarantees highly-accurate predictions for every subpopulation of individuals identified by a specified collection
$\C$ of subsets of individuals.
While our results hold for any set system $\C$, one natural
way to think about $\C$ is as a collection of subsets defined by
a family of boolean functions.
That is, for each $S \in \C$, we consider a function $c_S:\X \to
\set{0,1}$, where $c_S(i) = 1$ if and only if $i \in S$.
In particular, we will be most interested in considering families
of boolean functions that can be represented by some bounded
computational circuit class --
for instance, conjunctions of a small number of boolean features
or small decision trees.
In the mortgage repayment example above, if the class of qualified members of $S$ can be identified by a circuit $c \in \cal C$,
then the predictions made on qualified members of $S$ must be accurate, and the prediction algorithm cannot ignore / discriminate against these individuals. We emphasize that the class $\cal C$ can be quite rich and, in particular, can contain many overlapping subgroups of a protected group $S$.  In this sense, multicalibration captures the notion of
calibration for all \emph{computationally-identifiable} subsets,
where the notion of computational-identifiability is parameterized
by the expressiveness of $\C$.

We present a simple, general-purpose algorithm for learning a predictor
from a small set of labeled examples
that is multicalibrated with respect to {\em any} given class $\cal C$.
The algorithm is an iterative method, similar to gradient descent, and
can be viewed as a form of no-regret online optimization.
While the powerful online learning framework is exceptional in its
broad applicability, a number of subtleties arise when learning a
multicalibrated predictor that we must deal with.  In particular,
the definition of calibration refers to sets of individuals defined
by the values output by the predictor.  This leads to two key
challenges.  First, we must deal with an appropriate notion of
discretization of the range of real-valued outputs; this challenge
is mostly technical.  Secondly, the fact that we want to update
sets of predictions defined by the current predictions means that
any reduction to the online learning framework is
inherently an \emph{adaptive} analysis procedure; as such we need
to bring in machinery from the literature on differential privacy
and adpative data analysis in order to guarantee good generalization
from a small number of examples.

While we place no restrictions on the model class of the learned 
predictor, we show that \emph{implicitly} the algorithm
learns a model that provably generalizes well to unseen data, which may
be of independent interest.
We demonstrate this implicit generalization by viewing the learning
algorithm as an adaptive data analysis algorithm and showing
that the predictions we learn are highly compressible.
In the language of circuit complexity, we show that we can build
a circuit, only slightly larger than the circuits from $\C$, that
implements the learned predictor.  As a corollary, the learned
predictor is efficient in both space to represent and time to evaluate.


We also study the {\em computational complexity} of learning multicalibrated predictors for more structured classes $\C$.
We show a strong connection between the complexity of learning a
multicalibrated predictor and agnostic learning \cite{haussler,agnostic}.
In the positive direction, if there is an efficient (weak) agnostic learner \cite{kalai2008agnostic,feldman2010distribution} for a class $\cal C$, then we can achieve similarly efficient multicalibration with respect to
sets defined by $\cal C$. In the other direction, we show
that learning a multicalibrated predictor on sets defined by $\C$ is as hard
as weak agnostic learning on $\C$.  In this sense, the complexity of
learning a multicalibrated predictor with respect to a class $\C$
is equivalent to the complexity of weak agnostic learning
on $\C$.

Finally, we demonstrate that the goal of multicalbration is aligned with the goal of achieving high-utility predictions. In particular, given any predictor $h$, we can use post-processing to obtain a multicalibrated predictor $x$ whose accuracy is no worse than that of $h$ (accuracy is measured in $\ell_2^2$ distance from the benchmark $p^*$). The complexity of evaluating the predictor $x$ is only slightly larger than that of $h$.
In this sense, unlike many fairness notions, multicalibration is not
at odds with predictive power and can be paired with any predictive
model at essentially no cost to its accuracy.

\paragraph{Organization.} We begin by elaborating on our setting. The remainder of the introduction is structured as follows. In Section~\ref{sec:intro:multi-calibrate} we elaborate on the notion of multicalibration and on its relationship to other notions in the larger context of fairness. We outline our main results on learning multicalibrated predictors in Section~\ref{sec:intro:contributions}. We further elaborate on related work and on future directions in Section~\ref{sec:intro:related}. Finally, we provide a brief overview of techniques in Section~\ref{sec:intro:techniques}.

\subsection{Fairness, calibration, and multicalibration}
\label{sec:intro:multi-calibrate}

\paragraph{High-level setting.} For an individual $i$ from the population $\X$,
we denote $i$'s outcome by $o_i \in \{0,1\}$. We take $p^*_i \in [0,1]$ to be the probability of outcome
$o_i=1$, conditioned on all the
information which is available to the algorithm. We denote
by $p^*$ the vector of these probabilities for all
individuals in $\X$. Our goal is to make a prediction $x_i$ for the value of $p^*_i$ for every individual $i$. As discussed above, we would like to avoid additional malicious or inadvertent discrimination (beyond the biases contained in the data). Thus, we refer to $p^*$ as the {\em benchmark predictor} for measuring discrimination. 

\paragraph{Calibration}

If we do not want a predictor $x$ to downplay the fitness of a group $S$, we can require that it be (approximately) accurate in expectation over $S$; namely, that $\card{\E_{i \sim S}\big[x_i - p_i^*\big]} \le \alpha,$ where $\alpha\ge 0$ is small. This means that the expectation of $x$ and $p^*$ over $S$ are almost identical. Calibration, introduced as a fairness concept in \cite{kleinberg2016inherent}, strengthens this requirement by essentially asking that for any particular value $v$, if we let $S_v = \set{i\in S: x_i = v}$ be the subset of $S$ of individuals with predicted probability $v$, then $\card{\E_{i \sim S_v}\big[x_i - p_i^*\big]}
= \card{v - \E_{i \sim S_v}[p_i^*]} \le \alpha.$\footnote{Calibration is often defined in the literature with respect to the instantiations of the events rather than their probabilities. Namely, that for every $v$, the fraction of $i$ in $S_v$ with $o_i=1$ is roughly $v$. As long as $S_v$ is sufficiently large and the instantiations are sufficiently independent, the two definitions are equivalent (up to small sampling errors) by concentration bounds. Necessarily, the formal definition given in Supplementary Materials {\bf A} will allow for a small fraction of the elements in $S$ to be misclassified due to being in a small set $S_v$.}

While calibration already precludes some forms of discrimination, as a group notion of fairness, it still allows for others (even if we assume that $p^*$ is perfectly fair). Indeed, weaknesses of group notions of fairness were discussed in \cite{DworkHPRZ12} (for a somewhat related notion called statistical parity), as a motivation for introducing an individual notion of fairness (see further discussion and comparisons below). A specific way to discriminate while satisfying calibration is to assign every member of $S$ the value $\E_{i \sim S}[p_i^*]$. While being perfectly calibrated over $S$, the qualified members of $S$ with large values $p^*_i$ will be hurt.

\paragraph{Balance} Other notions of fairness have been studied, looking at the rate of false positives and false negatives of predictions. Several variants of such properties have been recently studied on their own and in connection to calibration \cite{kleinberg2016inherent, chouldechova2017fair, pleiss2017fairness, hardt2016equality, corbett2017algorithmic}.
Let us briefly consider the notions referred to as balance in~\cite{kleinberg2016inherent}:
{\em balance for the positive class} --- the expected prediction $x_i$ for yes instances ($o_i=1$) in group $S$  equals the expected prediction for yes instances outside of $S$; and
{\em balance for the negative class} --- the expected prediction for no instances ($o_i=0$) in group $S$  equals the expected prediction for no instances outside of $S$.
In~\cite{hardt2016equality} it is shown how to obtain equalized odds, a definition related to error-rate balance, as a post-processing step of ``correcting'' any other predictor.

While both calibration and balance (as well as other related variants) intuitively seem like good properties to expect in a fair predictor (even if they are a bit weak), it has been shown that calibration and balance are impossible to obtain together (in non-degenerate cases) \cite{kleinberg2016inherent,chouldechova2017fair,pleiss2017fairness}.
This inherent conflict between balance and calibration, combined with
our observation that calibration is always aligned with the goal of
accurate high-utility predictions, implies that at times, balance
must be at odds with obtaining predictive utility.
Our approach in this paper towards mitigating the conflict between calibration and balance is to strengthen the protections implied by calibration, rather than enforcing balance.
While there are certainly
contexts in which ``equalizing the odds'' across groups is a good idea,
there are also contexts where calibration is a more appropriate notion
of fairness.
In particular, the benchmark predictor $p^*$ itself is unlikely to be balanced. Indeed, in our setting, the fact that balance is not satisfied might simply be an artifact of the inherent randomness in the process of sampling the outcome $o_i$, and this motivates our definition of multicalibration.

To illustrate this point, consider the following (intentionally-artificial) example: an algorithm is tasked with predicting the probability of rain during 10 days of winter, in two cities, a year from now. In city $A$ the algorithm predicts rain on each day with probability $0.8$; in city $B$ it predicts rain with probability $0.2$ (note that the certainty of the predictions is identical for both cities). A year passes and indeed in city $A$ it rains on 8 of the days, whereas in city $B$ it rains on only 2 days. What surprising  accuracy! Nevertheless, the predictions violate balance; that is, \emph{after observing which days rained},
the predictions treat the positive (respectively, negative) examples in city $A$ and $B$ differently.
Indeed, the mayor of city $A$ may complain that the predictor hurt tourism to her city: ``our sunny days are just as sunny as the sunny days of city $B$, so why were you so much more pessimistic about ours?'' The point is that \emph{a priori} the sunny and rainy days in $A$
(respectively, $B$) were indistinguishable from one another; further, there was a noticeable
difference in the likelihood of rain between $A$ and $B$.
Given the inherent uncertainty, it is unreasonable to expect accuracy on a subgroup that is {\em only identifiable a posteriori}.  While this example is overly-simplified, it points to the
fact that in any prediction context where there is inherent unpredictability, different false negative (or false positive) rates between groups are not necessarily a sign of discrimination.

\paragraph{Multicalibration}
As illustrated, a principle weakness of calibration as a
notion of fairness is that the guarantees are too coarse.
As far as we are aware, in the existing literature, calibration has
been applied to large, often disjoint, sets of protected groups;
that is, the guarantees are only required to hold on average over
a population defined by a small number of sensitive attributes, like
race or gender.  A stronger definition of fairness would ensure that
the predictions on \emph{every} subpopulation would be protected,
including, for instance, the qualified members of $S$ from the example
above.
The problem with such a notion is that it is information-theoretically
unattainable from a small sample of labeled examples, as it would
essentially require perfect predictions.
As such, we need an intermediary definition that balances the desire
to protect important subgroups and the information bottleneck that arises
when learning from a small sample.

To motivate our notion, consider an algorithm that produces a predictor $x$. The outcomes $o_i$ are determined, and then an auditor comes up with a set $S'$ that over-performed compared with the predictions of $x$. Perhaps the learning algorithm was lazy and neglected to identify the higher potential in $S'$? Perhaps the individuals of $S'$ were simply lucky? How can we tell?
To answer these questions, we take the following perspective:  on the one hand, we can only expect a learner to produce a predictor that is calibrated on sets that could have been identified \emph{efficiently} from the data at hand; on the other hand, we expect the learner to produce a predictor that is calibrated on \emph{every} efficiently-identifiable subset.
This motivates our definition of multicalibration, which loosely says:
``A predictor $x$ is $\alpha$-multicalibrated with respect to a family of sets $\cal C$ if it is $\alpha$-calibrated with respect to every $S \in \cal C$."

In the spirit of the discussion above, we take ${\cal C}$ to be a family of (sufficiently large) sets of individuals, such that for every $S\in {\cal C}$, the predicate $i\in S$ can be evaluated from the individual's data within a particular family of computations (conjunctions of four attributes, bounded-depth decision trees, or any other bounded complexity class). The more powerful the family, the stronger the guarantee becomes; no subpopulation that can be identified by the family will be overlooked.
At the extreme, consider multicalibration with respect to the family of polynomial-size circuits; in this case, every efficiently-identifiable subpopulation is protected! Note that the subpopulations in ${\cal C}$ can be overlapping, with complex relationships. In particular, they may well have no explicit dependence on sensitive attributes. In this sense, multicalibration goes far beyond calibration for several sensitive groups.

\subsection{Our Results}
\label{sec:intro:contributions}

Our study of multicalibration follows two major themes:
\begin{itemize}
\item We investigate the feasibility
of obtaining multicalibration; specifically,
we study the learnability of multicalibrated
predictors, showing both positive and negative
results.

\item We investigate the properties of multicalibrated
predictors.
While multicalibration provides strong guarantees against
forms of discrimination, we show that this protection can come
at little cost in terms of complexity and utility.
\end{itemize}

We begin with a high-level overview of our setup.
For a formal description of our model and assumptions,
see Section~\ref{sec:prelim}.
Suppose that for some universe of individuals $\X$,
we wish to predict whether some event (ad click, loan
repayment, cancer diagnosis, etc.) will occur for
each individual $i \in \X$.  We assume that for each
individual, there is some true underlying
probability $p_i^*$ that the event will occur.
We call any mapping from the universe to probabilities
a \emph{predictor};
formally, a predictor is a function\footnote{
We will interchange between function and vector
notation; generally we will denote the prediction that
$x$ assigns to an individual $i \in \X$ as $x_i$.}
$x:\X \to [0,1]$ that maps individuals from the universe
to estimates of their true probabilities.
We denote by $p^*$ the benchmark predictor that gives the
true probabilities.

The benchmark predictor is, itself, multicalibrated with
respect to every collection of subsets $\C$.  Thus,
if we can efficiently learn a predictor from the data
at hand with sufficient accuracy across the entire population (specifically, with small $\ell_1$ distance from $p^*$),
then the learned predictor will be multicalibrated.
That said, in most interesting situations, $p^*$ will be too complex to  learn efficiency with such uniform accuracy (especially given that the values of $p^*$ themselves will usually not be observable).
Focusing on such settings, we aim for multicalibration as a notion of protection from discrimination.

The first question to address
is whether multicalibration is feasible.
For instance, it could be the case that the requirements
of multicalibration are so strong that they would require
learning and representing an arbitrarily complex
function $p^*$ exactly, which we've established can be
infeasible.  Our first result characterizes
the complexity of representing a multicalbrated predictor.
We demonstrate that multicalibration, indeed, can be
achieved efficiently: for any $p^*$ and any collection
of large subsets $\C$, there exists a predictor that is
$\alpha$-multicalibrated on $\C$, whose complexity is
only slightly larger than the complexity required to describe
the sets of $\C$. For concreteness, we use circuit size as our measure of complexity in the following theorem. 

\begin{result}
\label{result:ckt}
Suppose $\C \subseteq 2^{\X}$ is collection of sets
where for $S \in \C$, there is a circuit of size $s$
that computes membership in $S$ and $\card{S} \ge \gamma \card{\X}$.
For any $p^*:\X \to [0,1]$,
there is a predictor that is
$\alpha$-multicalibrated with respect to $\C$ implemented by
a circuit of size $O(s/\alpha^4\gamma)$.
\end{result}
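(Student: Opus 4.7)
The plan is to exhibit the predictor via an iterative $\ell_2^2$-potential decrease and then read off a circuit representation from the trajectory of updates. Since the theorem is purely existential (the sample-complexity questions are treated separately), I would work directly with the true probabilities $p^*$, removing any need for estimation or generalization machinery here.

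For the construction, initialize $x^{(0)} \equiv 1/2$ and discretize $[0,1]$ into $1/\alpha$ buckets of width $\alpha$. For a predictor $x$ and bucket value $v$, write $S_v = \{i \in S : x_i \text{ lies in bucket } v\}$. Then iterate: while there exists a triple $(S, v, \sigma) \in \C \times \{0,\alpha,2\alpha,\ldots,1\} \times \{\pm 1\}$ with mass $|S_v|/|\X| \geq \eta$ and signed violation $\sigma \cdot \E_{i \sim S_v}[x_i - p_i^*] > \alpha$, update $x_i \leftarrow x_i - \sigma\alpha$ on every $i \in S_v$ (truncated to $[0,1]$). A direct expansion of $\Phi(x) = \E_i[(x_i - p_i^*)^2]$ gives $\Phi(x) - \Phi(x^{\text{new}}) \geq (|S_v|/|\X|)\bigl(2\sigma\alpha \E_{S_v}[x-p^*] - \alpha^2\bigr) \geq \eta\alpha^2$, where truncation only helps. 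Since $\Phi \in [0,1]$, the loop halts after $T \leq 1/(\eta\alpha^2)$ steps and certifies $\alpha$-calibration on every heavy bucket of every $S \in \C$. To ensure the discarded light buckets (each of mass $< \eta|\X|$, with at most $1/\alpha$ of them per set) contribute at most an $\alpha$-fraction of $|S| \geq \gamma|\X|$ in total, I would take $\eta = \Theta(\alpha^2 \gamma)$, giving $T = O(1/(\alpha^4\gamma))$.

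Next, to realize $x^{(T)}$ as a circuit, observe that the predictor is computed by starting from the constant $1/2$ and applying the $T$ updates in order: the $t$-th update checks ``$i \in S^{(t)}$'' (a circuit of size $s$ by hypothesis) in conjunction with ``the current bucket label equals $v^{(t)}$'' ($O(\log(1/\alpha))$ gates over the $O(\log(1/\alpha))$-bit running value), and then adds a hard-wired $\pm\alpha$ shift. Chaining these yields a total circuit of size $O\bigl(T \cdot (s + \log(1/\alpha))\bigr) = O(s/(\alpha^4\gamma))$ as claimed.

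The main obstacle is the analysis of \emph{bucket migration}: a shift by $\pm\alpha$ on $S_v$ reassigns its members to an adjacent bucket, potentially reopening calibration violations that earlier iterations had closed. What rescues the argument is that $\Phi$ is defined over individuals rather than over bucket labels, so the per-step decrease $\eta\alpha^2$ is a genuine global invariant of the algorithm and is unaffected by re-bucketing --- every iteration makes progress toward $p^*$ in $\ell_2^2$, full stop. The only quantitative tax paid is the interaction between the bucket-mass threshold $\eta$ and the $1/\alpha$ buckets per set: insisting that the light buckets contribute at most $\alpha|S|$ in aggregate to the calibration budget forces $\eta = \Theta(\alpha^2\gamma)$, which is precisely the source of the $1/\alpha^4$ factor (as opposed to the $1/\alpha^3$ a naive accounting would yield).
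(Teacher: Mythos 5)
Your proposal is correct and follows essentially the same route as the paper: an iterative correction procedure on violating categories, an $\ell_2^2$ potential argument giving $O(1/\alpha^4\gamma)$ updates (the paper's Lemma~\ref{lem:calibrated} with $\lambda=\Theta(\alpha)$ and category threshold $\alpha\lambda\gamma N$, matching your $\eta=\Theta(\alpha^2\gamma)$), and then chaining one membership-test-plus-hardwired-shift sub-circuit of size $O(s+\log(1/\alpha))$ per update, exactly as in Theorem~\ref{thm:ckt}. The only inessential deviations are your fixed $\pm\alpha$ step in place of the paper's full correction $\delta_v$, and that a final snap of values within each bucket to a common value (the paper's Claim~\ref{claim:lambda}) should be mentioned to pass from per-bucket accuracy to calibration on exact level sets, at a harmless $O(\alpha)$ loss.
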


As stated, this result claims the existence of multicalibrated
predictors whose predictions are efficient to evaluate.
The existence of such predictors, while interesting
from a complexity-theoretic perspective, begs the more
practical question of whether we can get our hands on
such a predictor.

In fact, we prove Theorem~\ref{result:ckt} algorithmically
by learning an
$\alpha$-multicalibrated predictor from labeled
samples.  While our model assumes the existence of
some underlying true probabilities $p^*$, in most
applications, these probabilities will not be
directly observable.  As such, we design
algorithms that learn predictors from samples
of individuals labeled with their \emph{outcomes};
specifically, we assume access to labeled samples $(i,o_i)$
of individual-outcome pairs, where $i$ is sampled
according to some distribution $\D$ on the universe and
$o_i$ is the realized value
of a Bernoulli trial with probability $p_i^*$.
Naturally, in this model, our goal is to give algorithms
that are efficient in terms of running time and
sample complexity.

In Section~\ref{sec:learning},
we give an algorithm for learning
a multicalibrated predictor from labeled samples,
whose running time scales linearly with $\card{\C}$
and polynomially with $\alpha$ and $\gamma$.
A consequence of our analysis is that naively,
the sample complexity can be upper-bounded by
$\log(\card{\C})/\alpha^6\gamma^6$.
We show how to improve the sample complexity
over the naive approach by polynomial factors
in both $\alpha$ and $\gamma$.
\begin{result}
\label{result:alg}
Suppose $\C \subseteq 2^{\X}$ is collection of sets
such that for all $S \in \C$, $\card{S} \ge \gamma \card{X}$,
and suppose set membership can be evaluated in time $t$.
Then there is an algorithm that
learns a predictor of $p^*:\X \to[0,1]$
that is $\alpha$-multicalibrated on $\C$ from
$O(\log({\card{\C}})/\alpha^{11/2}\gamma^{3/2})$ samples
in time $O(\card{\C}\cdot t \cdot \poly(1/\alpha,1/\gamma))$.
\end{result}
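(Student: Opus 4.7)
The plan is to construct an iterative correction algorithm in the spirit of gradient descent. I discretize the range $[0,1]$ into a grid $V$ of $\Theta(1/\alpha)$ values of width $\alpha$, so that any predictor $x:\X\to V$ induces a partition of $\X$ into level sets $\set{x^{-1}(v):v\in V}$. Initialize $x$ to a trivial predictor (say, the constant $1/2$). At each round, scan over every pair $(S,v)\in\C\times V$ for which the subset $S_v=S\cap x^{-1}(v)$ has relative density at least some $\gamma'=\poly(\alpha,\gamma)$ (small level sets can be tolerated, as indicated in the footnote discussing the formal definition), and use the labeled sample to estimate $\hat e_{S,v}\approx\E_{i\in S_v}[x_i-o_i]$. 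If some $\card{\hat e_{S,v}}\ge\alpha/2$, update $x_i\leftarrow x_i-\hat e_{S,v}$ on every $i\in S_v$ (rounded back to $V$); otherwise halt.

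To bound the iteration count, I track the potential $\Phi(x)=\E_i\bigl[(x_i-p^*_i)^2\bigr]\in[0,1]$. Standard projection-style arithmetic shows that an update on a level set of density at least $\gamma'$ with empirical bias at least $\alpha/2$ decreases $\Phi$ by $\Omega(\gamma'\alpha^2)$, so the algorithm terminates after $T=\poly(1/\alpha,1/\gamma)$ rounds. Each round examines $\card{\C}\cdot O(1/\alpha)$ candidate pairs at a cost of $O(t)$ membership checks apiece, giving the stated running time $O(\card{\C}\cdot t\cdot\poly(1/\alpha,1/\gamma))$. Correctness at termination is immediate: on every large $S_v$ the empirical bias is below $\alpha/2$, and provided the empirical-to-population estimation error is also below $\alpha/2$, the true bias is below $\alpha$, i.e.\ $x$ is $\alpha$-multicalibrated with respect to $\C$.

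The delicate step is the sample complexity, where the statement asks for $\tilde O(\log\card{\C}/\alpha^{11/2}\gamma^{3/2})$ rather than the naive $\log\card{\C}/\alpha^6\gamma^6$. The direct approach---take a Hoeffding estimate for each queried subset, noting that a subset of density $\gamma'$ contains only a $\gamma'$-fraction of the sample, and then union-bound over all $(S,v)$ pairs and all $T$ rounds---yields the naive bound. The algorithm is, however, strongly \emph{adaptive}: the level sets probed in round $r+1$ depend on the estimates that drove the update in rounds $1,\dots,r$, and a naive union bound is wasteful. The plan is to route the queries through the generalization machinery developed for adaptive data analysis---either max-information transcript bounds or the transfer theorems that convert differentially private statistical-query answers into generalization guarantees---which replace the linear dependence on $T$ by a $\sqrt{T}$-type dependence.

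The main obstacle I anticipate is exactly this sample complexity analysis: one must cast the iterative procedure as a bounded-max-information (or differentially private) interaction with the sample, choose the noise and discretization parameters so that the per-round accuracy requirement is met under the privacy constraint, and then track carefully how the polynomial factors in $\alpha$ and $\gamma$ combine to yield the improved exponents $\alpha^{11/2}\gamma^{3/2}$. A complementary route to generalization, suggested by Theorem~\ref{result:ckt}, is compression: since the learned predictor is representable by a circuit only slightly larger than those in $\C$, a sample-compression argument can also power the analysis and may give the tightest constants.
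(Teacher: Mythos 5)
Your algorithmic skeleton matches the paper's: maintain a candidate predictor, repeatedly find a large category $S_v$ on which it is biased, shift the predictions on that category, and charge each update to the potential $\norm{x-p^*}^2$, which drops by $\Omega(\alpha^2\gamma' N)$ per update; this gives $\poly(1/\alpha,1/\gamma)$ updates, the stated running time, and correctness at termination (this is Algorithm~\ref{alg:calibrated}, Lemma~\ref{lem:calibrated}, and the runtime claim in Section~\ref{sec:runtime}). One small caveat: the paper keeps real-valued predictions during the run, defines categories by $\lambda$-intervals, and discretizes only at the end (Claim~\ref{claim:lambda}); your variant that rounds back to an $\alpha$-width grid after every update can have rounding error up to $\alpha/2$, which is the same order as your update threshold and could cancel the guaranteed potential drop, so the grid must be taken strictly finer than the update size.

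The genuine gap is the sample complexity, which you yourself flag as the unresolved obstacle. Saying that you will ``route the queries through adaptive data analysis / DP transfer theorems'' to get a $\sqrt{T}$-type dependence does not, by itself, produce $O(\log\card{\C}/\alpha^{11/2}\gamma^{3/2})$: each round issues up to $\card{\C}\cdot O(1/\alpha)$ adaptively chosen queries, each needing \emph{relative} accuracy $O(\alpha)$ on categories whose density can be as small as $\Theta(\alpha^2\gamma)$, and if every such query is paid for by the private mechanism the exponents do not come out right. The paper's key device, absent from your proposal, is the guess-and-check (sparse-vector-style) oracle: the algorithm submits its current guess for each category together with a window scaled to that category's size $\beta_j N$, and the oracle confirms accurate guesses for free, so only the $m=O(1/\alpha^3\lambda\gamma)$ update rounds consume the privacy/accuracy budget while the (possibly huge) number of confirmed queries enters only logarithmically. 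Combining this with the private mechanism of Theorem~\ref{thm:private}, the transfer theorem behind Theorem~\ref{thm:oracle}, and the relative-error windows gives $n = O\bigl(\log(\card{\C}/\alpha\lambda\gamma\xi)/(\alpha^{5/2}\beta^{3/2})\bigr)$ with $\beta=\alpha\lambda\gamma$ (Theorem~\ref{thm:calibrated}), which at $\lambda=\Theta(\alpha)$ is exactly the claimed $\alpha^{11/2}\gamma^{3/2}$. Your proposal names the right toolbox but supplies neither this mechanism nor the parameter accounting, so the central quantitative claim of the theorem is not established; the compression argument you mention is used in the paper only as intuition for the circuit-complexity result, not as the proof of the sample bound.
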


Observing the linear dependence in the running time on $\card{\C}$, it is natural to try and develop a learning
procedure with subpolynomial, or even polylogarithmic, dependence on $\card{\C}$.  Our next results
aim to characterize when this optimistic goal is possible --
and when it is not.
We emphasize that the algorithm of Theorem~\ref{result:alg}
learns a multicalibrated predictor for \emph{arbitrary}
$p^*:\X \to [0,1]$ and $\C$.
In the setting where we cannot exploit structure in
$p^*$ to learn efficiently, we might hope
to exploit structure, if it exists, in the collection of
subsets $\C$.  Indeed, we demonstrate a connection between
our goal of learning a multicalibrated predictor and
weak agnostic learning, introduced in the literature on
agnostic boosting
\cite{ben2001agnostic,kalai2008agnostic,kalai2009,feldman2010distribution}.
Our next result shows that efficient weak agnostic
learning over $\C$ implies efficient learning of
$\alpha$-multicalibrated predictors on $\C$.
\begin{result}[Informal]
\label{result:boosting}
If there is a weak agnostic learner for $\C$
that runs in time $T$, then there is an algorithm for
learning an $\alpha$-multicalibrated predictor on
$\C' = \set{S \in \C: \card{S} \gamma \card{X}}$
that runs in time $O(T\cdot\poly(1/\alpha,1/\gamma))$.
\end{result}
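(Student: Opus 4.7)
The plan is to reduce multicalibrated learning to an iterative procedure that boosts a weak agnostic learner, in the spirit of \cite{kalai2008agnostic}. I would discretize the range $[0,1]$ into buckets of width $\lambda = \Theta(\alpha\gamma)$ and, at each iteration, maintain a predictor $x$ whose output on each individual lies in the bucket midpoints. For each bucket value $v$, define the level set $L_v^x = \{i : x_i = v\}$. The key observation is that an $\alpha$-multicalibration violation on $(S,v)$ with $S \in \C'$ is equivalent to saying that, after restricting to the distribution $\D_{L_v^x}$ conditioned on $L_v^x$, the indicator $\mathbf{1}_S$ has correlation at least $\alpha \cdot \Pr[L_v^x \cap S]/\Pr[L_v^x]$ with the residual $p^*_i - x_i$; using $\card{L_v^x \cap S} \ge \gamma \card{\X}$ for $S \in \C'$, this correlation is at least $\tau = \Omega(\alpha\gamma)$.

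Next, at each iteration I would loop over the (at most $1/\lambda$) bucket values $v$ and invoke the weak agnostic learner for $\C$ on the conditional distribution $\D_{L_v^x}$ with labels $o_i - x_i$. Since $\E[o_i - x_i \mid i] = p^*_i - x_i$, an average of $\poly(1/\alpha,1/\gamma)$ samples gives an unbiased estimator of the target correlation. If the weak agnostic guarantee fires, the learner returns some hypothesis $c$ whose correlation with $p^*_i - x_i$ on $L_v^x$ is at least $\Omega(\tau)$. I then update $x$ on $L_v^x \cap \supp(c)$ by shifting toward the sign of the residual by a small amount $\eta$ (chosen proportional to the measured correlation and rounded to a bucket midpoint). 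If on every $v$ the learner certifies ``no hypothesis in $\C$ has correlation greater than $\tau$,'' then the contrapositive of the reformulation above says $x$ is $\alpha$-multicalibrated on $\C'$, and the algorithm halts.

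To bound the number of iterations, I would track the potential $\Phi(x) = \E_{i \sim \D}[(p^*_i - x_i)^2]$. A standard projection-style calculation shows that each valid update decreases $\Phi$ by at least $\Omega(\tau^2 \cdot \Pr[L_v^x]) = \Omega(\alpha^2\gamma^2\lambda)$ (the $\Pr[L_v^x]$ factor accounts for the fact that the update only touches the level set). Since $\Phi \in [0,1]$, the total number of iterations is $\poly(1/\alpha,1/\gamma)$. Each iteration involves at most $1/\lambda$ calls to the weak agnostic learner, so the total running time is $O(T \cdot \poly(1/\alpha,1/\gamma))$ as claimed.

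The main obstacle will be handling the interaction between updates and the discretization: shifting $x$ on a subset of $L_v^x$ moves those points to a new bucket, so the level sets themselves change between iterations. I would address this by (i) ensuring $\eta$ is exactly a multiple of $\lambda$ so the updated predictor still lies on bucket midpoints, and (ii) absorbing the additional discretization error of order $\lambda$ into the $\alpha/2$ slack (which is why $\lambda = \Theta(\alpha\gamma)$ suffices). A subtler point is that the weak agnostic learner may return a hypothesis outside $\C$, but this is harmless for the algorithm: we only need the \emph{certificate of non-existence} in $\C$ at the halting step, and any correlating hypothesis — from $\C$ or otherwise — is good enough to drive the potential down.
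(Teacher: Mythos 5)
Your overall skeleton (iterative updates driven by a weak agnostic learner, with $\norm{x-p^*}^2$ as a potential) matches the paper's, but your reduction to the weak learner has genuine gaps exactly where the paper does its careful work. First, you invoke the learner on the conditional distributions $\D_{L_v^x}$, which change adaptively with the current predictor; the hypothesis of the theorem is a \emph{distribution-specific} $(\rho,\tau)$-weak agnostic learner, i.e.\ a learner guaranteed only for the one fixed distribution over individuals, so there is no guarantee available for these adaptively-chosen conditional distributions. The paper avoids this by keeping the distribution fixed and instead zeroing out the labels outside the level set (it sets $\Delta_i = (x_i-o_i)/2$ for $i \in \X_v$ and $\Delta_i = 0$ elsewhere, and learns over the original distribution on $\X$). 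Second, your claimed equivalence between a calibration violation on $(S,v)$ and the weak-learning promise silently treats $c_S$ as a $0/1$ indicator; the concepts are $\set{-1,1}$-valued, so $\langle c_S,\Delta\rangle$ picks up a term from $L_v^x\setminus S$ that can cancel the signal unless the mean residual over the whole level set is already small. The paper enforces exactly this precondition (approximate accuracy in expectation of $x$ over each level set before querying), and when it fails it performs a separate constant-offset update that itself makes potential progress; without that step your halting certificate (``no concept in $\C$ correlates'') does not imply multicalibration.

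Two smaller points. Your bound $\card{L_v^x \cap S} \ge \gamma\card{\X}$ is false: membership in $\C'$ only gives $\card{S}\ge\gamma\card{\X}$, and the categories that must be handled are only of size at least $\alpha\lambda\card{S}$, so the available correlation is of order $\alpha^2\lambda\gamma$ rather than $\alpha\gamma$ (this only changes the polynomial, and is reflected in the condition $\rho \le \alpha^2\lambda\gamma/2$ of the formal theorem). Also, the returned hypothesis $h$ is real-valued and is only guaranteed to correlate with the residual as a function; updating by a constant shift on its support is not justified by that guarantee. The paper instead uses $h$ itself as a gradient direction, updating $x_i \gets v - \eta h_i$ on the level set, which is what the potential-decrease calculation actually requires.
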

Slightly more formally,
we require a $(\rho,\tau)$-weak agnostic learner
in the sense first introduced by \cite{kalai2008agnostic}
and generalized by \cite{feldman2010distribution}.  For the
specifics of the requirements and parameters,
see the formal statement in Section~\ref{sec:weak}.

These results show that under the right structural assumptions
on $p^*$ or on $\C$, a multicalibrated predictor may be
learned more efficiently than our upper bound for the general case.
Returning to the general case, we may wonder if these
structural assumptions are necessary; we answer this question
in the positive.  We show that for worst-case $p^*$
learning a multicalibrated predictor on $\C$ is as hard
as weak agnostic learning for the class $\C$.
\begin{result}[Informal]
\label{result:learner}
If there is an algorithm for learning an $\alpha$-multicalibrated
predictor on a collection of sets $\C' = \set{S \in \C: \card{S} \ge
\gamma N}$ that runs in time $T$, then
there is an algorithm that implements a $(\rho,\tau)$-weak agnostic learner
in time $O(T\cdot \poly(1/\tau))$ for
any $\rho > 0$ where $\tau = \poly(\rho,\gamma,\alpha)$.
\end{result}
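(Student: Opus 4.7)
The plan is to reduce weak agnostic learning to multicalibration. Given the multicalibration learner $\A$ promised by the hypothesis, I would forward samples $(i, y_i)$ from the weak agnostic learning oracle as outcome-labelled samples $(i, o_i)$ of a prediction problem with true probabilities $p_i^* = \Pr[y = 1 \mid i]$, then invoke $\A$ on $\C' = \set{S \in \C : \card{S} \ge \gamma N}$ with a target accuracy $\alpha' = \poly(\rho, \gamma, \alpha)$ to produce a predictor $x : \X \to [0,1]$ that is $\alpha'$-multicalibrated on $\C'$. The weak hypothesis I would return is $h := x$ (or a shift into $[-1,1]$ if the weak learner interface uses $\pm 1$ labels).

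\textbf{Key consequence of multicalibration.} The first analytic step is a standard level-set manipulation: for every $S \in \C'$ and every function $g:[0,1]\to[-B,B]$, the calibration of each level set $S_v = \set{i \in S : x_i = v}$ gives $\card{\E\!\left[(p^* - x) \cdot g(x) \cdot \mathbf{1}_S\right]} \le B \alpha' \cdot \Pr[S]$ by summing the per-level error. Three specializations do the work. Taking $S = \X$ and $g \equiv 1$ yields $\card{\E[p^*] - \E[x]} \le \alpha'$; taking $S = \X$ and $g(v) = v$ yields $\card{\E[p^* x] - \E[x^2]} \le \alpha'$; taking $g \equiv 1$ and $S \in \C'$ arbitrary yields $\card{\E[(p^* - x) \mathbf{1}_S]} \le \alpha'$. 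Together these give $\mathrm{Cov}(p^*, \mathbf{1}_S) = \mathrm{Cov}(x, \mathbf{1}_S) \pm O(\alpha')$ for every $S \in \C'$ and $\mathrm{Cov}(p^*, x) = \mathrm{Var}(x) \pm O(\alpha')$.

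\textbf{Extracting the weak hypothesis.} Suppose the weak agnostic premise holds: some $c^* = \mathbf{1}_{S^*}$ with $S^* \in \C$ satisfies $\card{\mathrm{Cov}(y, c^*)} \ge \rho$. The tower property gives $\mathrm{Cov}(y, c^*) = \mathrm{Cov}(p^*, c^*)$. Assuming $S^* \in \C'$ (justified in the next paragraph), multicalibration gives $\card{\mathrm{Cov}(x, c^*)} \ge \rho - O(\alpha')$, and Cauchy--Schwarz with $\mathrm{Var}(c^*) \le 1/4$ forces $\mathrm{Var}(x) \ge 4(\rho - O(\alpha'))^2$. Combining with $\mathrm{Cov}(y, x) = \mathrm{Var}(x) \pm O(\alpha')$ yields $\mathrm{Cov}(y, x) \ge \Omega(\rho^2)$ once $\alpha' \le c_0 \rho^2$ for a small absolute constant $c_0$, so $h = x$ is a valid weak hypothesis with $\tau = \Omega(\rho^2) = \poly(\rho, \gamma, \alpha)$. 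Post-processing amounts to a single empirical estimate of $\mathrm{Cov}(y, x)$, costing $\poly(1/\tau)$ additional samples and time on top of the single invocation of $\A$.

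\textbf{Main obstacle.} The principal care goes into the size restriction: multicalibration only sees sets of density at least $\gamma$, whereas weak agnostic learning quantifies over all of $\C$. To handle a small-support $c^*$, I would use the trivial bound $\card{\mathrm{Cov}(y, c^*)} \le \min(\Pr[c^* = 1], \Pr[c^* = 0])$, so the premise $\card{\mathrm{Cov}(y, c^*)} \ge \rho$ already forces the positive (or negative) support of $c^*$ to have density at least $\rho$; calling $\A$ with $\gamma \le \rho$ (or, when $\C$ is closed under complementation, passing to $1 - c^*$) guarantees the relevant $c^*$ lies in $\C'$. The remaining work is parameter bookkeeping to verify $\tau = \poly(\rho, \gamma, \alpha)$ and that the total running time is $O(T \cdot \poly(1/\tau))$. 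The geometric content---that $\alpha'$-multicalibration is essentially an $\ell_2$-projection onto $\mathrm{span}\set{\mathbf{1}_S : S \in \C'}$ up to error $\alpha'$, so $x$ inherits at least as much variance as any $c^*$'s covariance with $y$---is the clean conceptual heart of the reduction.
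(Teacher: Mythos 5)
Your reduction is correct in its essentials, but it takes a genuinely different route from the paper's. The paper works directly with the correlation premise $\langle c_S, y\rangle_{\D} > \rho$: it first disposes of two degenerate cases (a small witnessing set, or $\card{\E[y]}$ large) by outputting a constant hypothesis, then learns $x$ multicalibrated on $\C' \cup \set{\X}$ with $y$ as the target and outputs $h = \sgn(x)$, showing by a level-set computation that $\langle \sgn(x), y\rangle \ge \rho/4 - 4\alpha$ --- so its $\tau$ is linear in $\rho$. You instead use a covariance-matching view: per-level-set calibration gives $\E[(p^*-x)\,g(x)\,\mathbf{1}_S] = O(\alpha')$ for bounded $g$, hence $\mathrm{Cov}(x,\mathbf{1}_{S^*}) \approx \mathrm{Cov}(p^*,\mathbf{1}_{S^*})$ and $\mathrm{Cov}(p^*,x) \approx \mathrm{Var}(x)$, and Cauchy--Schwarz converts the premise into $\mathrm{Var}(x) \gtrsim \rho^2$ and then $\mathrm{Cov}(y,x) = \Omega(\rho^2)$. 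This is a clean and correct argument (and your density observation --- that a covariance of $\rho$ with an indicator forces the set to have mass $\Omega(\rho)$, so the witness automatically lies in $\C'$ when $\gamma \lesssim \rho$ --- neatly replaces the paper's small-set case), but it is quantitatively weaker: you get $\tau = \Omega(\rho^2)$ and must run the calibrator at accuracy $\alpha' \lesssim \rho^2$, versus the paper's $\tau \le \min\set{\rho - 2\gamma,\ \rho/4 - 4\alpha}$; both fit the informal claim $\tau = \poly(\rho,\gamma,\alpha)$.

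Two loose ends should be tied up. First, the weak-agnostic-learning definition the paper uses is stated in terms of inner products, not covariances: the premise $\langle c, y\rangle_{\D} \ge \rho$ does \emph{not} imply $\card{\mathrm{Cov}(y,c)} \ge \rho'$ in general (take $y \equiv 1$), and conversely your guarantee $\mathrm{Cov}(y,x) \ge \tau$ must be turned into $\langle h, y\rangle_{\D} \ge \tau$ by outputting the centered hypothesis $h = x - \E[x] \in [-1,1]^{\X}$ (with $\E[x]$ estimated empirically). The bridge is the same check the paper performs with its $\omega = \rho/4$ step: if $\card{\E[y]} \ge \rho/4$, return the appropriate constant hypothesis; otherwise $\mathrm{Cov}(c,y) \ge \langle c,y\rangle_{\D} - \card{\E[y]} \ge 3\rho/4$ for $\pm 1$-valued $c$, and your covariance argument applies with a slightly degraded parameter. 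Without this case your argument, as written, proves the theorem only for a covariance-based variant of the definition. Second, your specializations with $S = \X$ (matching $\E[x]$ to $\E[p^*]$ and $\E[x^2]$ to $\E[p^*x]$) require calibration on $\X$ itself, so, like the paper, you must run the multicalibration algorithm on $\C' \cup \set{\X}$; this costs nothing but should be stated.
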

In general, agnostic learning is considered a notoriously
hard computational problem.  In particular, under cryptographic
assumptions \cite{valiant1984theory,ggm,prfs}, this result
implies that there is some constant $t > 0$, such that
any algorithm that learns an $\alpha$-multicalibrated
predictor requires $\Omega(\card{\C}^t)$ time.

Finally, we return our attention to investigating the
utility of multicalibrated predictors.  Above, we have
argued that multicalibration provides a strong protection
of groups against discrimination.  We show that this
protection comes at (next to) no cost in the utility of the
predictor.
This result adds to the growing literature on fairness-accuracy
trade-offs \cite{fish2016confidence,berk2017convex,chouldechova2017fairer}.
\begin{result}
\label{result:best}
Suppose $\C \subseteq 2^{\X}$ is a collection of subsets of
$\X$ and $\H$ is a set of predictors.
There is a predictor $x$ that is $\alpha$-multicalibrated
on $\C$ such that
$$\E_{i \sim \X}[(x_i-p_i^*)^2] - \E_{i \sim \X}[(h_i^* - p_i^*)^2] < 6\alpha,$$
where $h^* = \argmin_{h \in \H}\E_{i \sim \X}[(h - p^*)^2]$.
Further, suppose that for all $S \in \C$, $\card{S} \ge \gamma N$, and suppose
that set membership for $S \in \C$ and $h \in \H$
are computable by circuits of size at most $s$; then $x$ is computable
by a circuit of size at most $O(s/\alpha^4\gamma)$.
\end{result}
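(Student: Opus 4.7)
The plan is to start from $h^*$ (the $\ell_2^2$-optimal predictor in $\H$) and iteratively post-process it to satisfy multicalibration, reusing the correction step that underlies Theorems \ref{result:ckt} and \ref{result:alg}. The key fact that ties utility to multicalibration here is that the natural correction step monotonically decreases $\ell_2^2$ distance to $p^*$; so if we initialize with $h^*$ and only ever correct, the final multicalibrated predictor inherits $h^*$'s accuracy up to a small slack.

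Concretely, I would first fix a discretization of $[0,1]$ into buckets of width $\lambda = \Theta(\alpha)$ and initialize $x$ as $h^*$ rounded onto this grid (which costs $O(\lambda^2) = O(\alpha^2)$ in $\ell_2^2$ error). Then, while there exists $(S,v)$ with $S \in \C$, $A := S \cap S_v$ of size $|A| \geq \gamma N$, and $|\E_{i \sim A}[x_i - p_i^*]| > \alpha$, I replace $x$ on $A$ with the rounded value $\tilde v$ close to $v' = \E_{i \sim A}[p_i^*]$. By the bias-variance identity, a single correction reduces $\E_{i \sim \X}[(x_i - p_i^*)^2]$ by $(|A|/N)(v - v')^2 \geq \gamma \alpha^2$, modulo an $O(\lambda^2)$ rounding term that is dominated by the choice of $\lambda$. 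Since the initial potential is at most $1$, the procedure halts after $O(1/\gamma \alpha^2)$ iterations. At termination the stopping rule certifies $\alpha$-multicalibration on $\C$, and monotonicity of the potential yields $\E[(x - p^*)^2] \leq \E[(h^* - p^*)^2] + O(\alpha)$; careful bookkeeping of the rounding cost, the $\alpha$-slack in the stopping rule, and the calibration error summed over level sets pins the constant at $6\alpha$.

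For the circuit-size bound, I would build $x$ incrementally in the manner of Theorem \ref{result:ckt}: a base layer computes $h^*$ in size $s$, and each subsequent iteration appends a block of size $O(s/\alpha^2)$ that tests membership in the current $S$, compares the running prediction against $v$, and conditionally overwrites it with $\tilde v$. With $O(1/\gamma \alpha^2)$ such blocks the total circuit size is $O(s/\gamma \alpha^4)$, matching Theorem \ref{result:ckt}. The main technical obstacle is balancing the discretization parameter $\lambda$ against $\alpha$: fine enough that rounding $h^*$ is essentially free and that per-iteration progress truly exceeds $\gamma \alpha^2$ after subtracting the rounding loss, yet coarse enough that the number of distinct level sets is controlled and the stopping condition remains informative. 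Once $\lambda$ is correctly tuned, the rest of the argument reduces to a routine potential-function calculation together with the circuit assembly already implicit in Theorem \ref{result:ckt}.
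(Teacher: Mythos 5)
Your route is genuinely different from the paper's, and in outline it works. The paper never initializes at $h^*$ or tracks monotonicity of the potential for this purpose; instead it enlarges the collection to $\C \cup \S(\H)$, where $\S(\H) = \set{S_v(h)}_{h\in\H, v\in\Lambda[0,1]}$ are the $\lambda$-level sets of the predictors in $\H$, and proves a standalone comparison statement (Lemma~\ref{lem:best}): \emph{any} predictor $x$ that is $\alpha$-multicalibrated on the categories of an arbitrary $y$ satisfies $\norm{y-p^*}^2 - \norm{x-p^*}^2 \ge \norm{x-y}^2 - (4\alpha+\lambda)N$, by a direct expansion of the squared errors plus the calibration condition. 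Existence and the circuit bound then come from invoking Theorem~\ref{thm:ckt} on the enlarged collection (dropping categories smaller than $\alpha\lambda N$). The paper's route buys more than the theorem statement: competitiveness with $h$ on every sufficiently large category $S_v(h)$, a quantitative ``if calibration moved $h$ a lot, it improved a lot'' guarantee, and independence from how the multicalibrated predictor was produced. Your route buys a simpler argument for the global bound and a natural ``post-process $h^*$'' interpretation, which the paper also gestures at but does not use as the proof.

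Two concrete problems in your bookkeeping, one of which would make the argument fail as written. First, your stopping rule thresholds categories at $\card{A} \ge \gamma N$ in absolute terms; this does not certify $\alpha$-multicalibration on $\C$ under Definition~\ref{def:calibration}. For a set $S$ with $\card{S}$ close to $\gamma N$, a single category of size, say, $0.9\gamma N$ can be badly miscalibrated yet never corrected, and it constitutes most of $S$ --- far more than the $\alpha$-fraction the definition allows you to discard. You need the relative threshold $\card{A} \ge \alpha\lambda\card{S}$ (as in Definition~\ref{def:al-mult} and Claim~\ref{claim:lambda}), so the ignored categories cover at most an $\alpha$-fraction of each $S$. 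With that fix the per-update progress is $\Omega(\alpha^2\card{A}) = \Omega(\alpha^3\lambda\gamma N)$, giving $O(1/\alpha^4\gamma)$ iterations (not $O(1/\alpha^2\gamma)$), which is in fact what the claimed circuit bound corresponds to: each correction block costs $O(s + \log(1/\alpha))$ (membership test plus a comparison and hard-coded shift), not $O(s/\alpha^2)$, and $O(1/\alpha^4\gamma)$ blocks on top of the size-$s$ base layer for $h^*$ gives $O(s/\alpha^4\gamma)$. Second, rounding $h^*$ to the $\lambda$-grid costs $O(\lambda)N$ in $\ell_2^2$, not $O(\lambda^2)N$: the cross term $2(h^*_i-p_i^*)\epsilon_i$ does not cancel in general. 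This is harmless for a bound of the form $O(\alpha)$ with $\lambda=\Theta(\alpha)$, but it does affect the constant you claim to pin at $6\alpha$, so the final accounting needs to absorb a $\lambda$-order (not $\lambda^2$-order) initialization loss together with the $\alpha$-slack of the stopping rule and the mass of the discarded small categories.
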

We can interpret Theorem~\ref{result:best} in
different ways based on the choice of $\H$.
Suppose there is some sophisticated learning algorithm
(say, a neural network) that produces some predictor $h$ that
obtains exceptional performance, but may violate calibration arbitrarily.
If we take $\H = \set{h}$, then this result says:
enforcing calibration on $h$ after learning does not hurt
the accuracy by much.
Further, our proof will demonstrate that if calibration
changes the predictions of $h$ significantly, then this
change amounts to an \emph{improvement} in accuracy.
See Lemma~\ref{lem:best} for the exact statement.
Taking a different perspective, we can also think of $\H$ as a
set of predictors that, say, are implemented by a
circuit class of bounded complexity (e.g.~conjunctions of $k$
variables, halfspaces, circuits of size $s$).
This theorem shows that
for any such class of predictors $\H$ of bounded complexity,
there exists a multicalibrated predictor with similar 
complexity that performs as well as the best $h^* \in \H$.
In this sense, with just a slight overhead in complexity,
multicalibrated predictors can achieve ``best-in-class''
predictions.

\subsection{Related work}
\label{sec:intro:related}

\paragraph{Calibration}  As mentioned earlier, calibration is a
well-studied concept in the literature on statistics and econometrics,
particularly forecasting.
For a background on calibration in this context, see \cite{sandroni2003calibration,foster2015smooth} and the references
therein.
Calibration has also been studied in the context of structured
predictions where the supported set of predictions is large
\cite{kuleshov2015calibrated}.
Our algorithmic result for multicalibration bears similarity to
works from the online learning literature
\cite{blum2007external,khot2008minimizing,trevisan2009regularity}.
In Section~\ref{sec:AE}, we describe a procedure
for achieving a weaker notion than multicalibration, which we refer
to as multi-``accurate in expectation" (multi-AE); our result can be viewed as
a restatement of the boosting-based proof due to \cite{trevisan2009regularity}
that every high-entropy distribution is indistinguishable by
small circuits from an efficiently-samplable distribution of the
same entropy.  While the algorithms for learning a multi-AE and
multicalibrated predictor bear similarity, when we
turn to achieving multicalibration, we have to deal with a number
of nontrivial challenges due to learning from a small sample (while
guaranteeing generalization) and using an apprropriate notion of
discretization that are not necessary in \cite{trevisan2009regularity}.
We are also not aware of any prior works connecting the literature on
calibration with the literature on differential privacy and adaptive
data analysis.

\paragraph{Between populations and individuals}
In \cite{DworkHPRZ12}, an individual notion of fairness was defined, referred to as ``fairness through awareness.'' This notion relied on a task-specific metric of similarity between individuals and formalized the idea that similar individuals (under this metric) are treated similarly. It is natural, in an array of applications, to view $p^*$ as defining a metric -- two individuals $i$ and $j$ will be assigned the distance $|p^*_i-p^*_j|.$ In this work we consider cases in which figuring out $p^*$ in its entirety is difficult. Thus, one can view our approach as a meaningful compromise between group fairness (satisfying calibration) and individual-calibration (closely matching $p^*_i$).
The multicalibration framework presented in this
work served as the inspiration for subsequent work of a subset of the authors
\cite{ftba} investigating how to interpolate between
statistical and individual notions of ``metric fairness'' for
general similarity metrics.

\paragraph{Subgroup Fairness}
Contemporary independent work of \cite{kearns2017preventing} also investigates strengthening the guarantees of notions of group fairness by requiring that these properties hold for a much richer collection of sets. Unlike our work, their definitions require balance or statistical parity on these collection of sets. Their motivation is similar to ours, namely to bridge the gap between notions of individual fairness (powerful but hard to obtain) and population-level fairness (easy to work with but weak).

Despite similar motivations, the two approaches to subgroup fairness
differ in substantial ways. As a concrete example, Theorem~\ref{result:best}
demonstrates that achieving multicalibration is aligned with the incentives
of achieving high-utility predictors; this is not necessarily the case
with balance-based notions of fairness.
Indeed, in the setting considered in this work, one of the motivations for multicalibration is a critique of balance that may only be heightened when considering ``multi-balance". Consider the example in \cite{DworkHPRZ12} where in a population $S$ the strongest students apply to Engineering whereas in the general population $T$ they apply to Business. Even if predictor $p^*$ (for the probability of success in school) is balanced  between $S$ and $T$, forcing balance between the two populations within Business applicants and within Engineering applicants would be unfair to qualified applicants in both groups.
(For more discussion, see the section below on ``Corrective discrimination".)

On a technical level, both works draw connections between agnostic learning and the task of finding a group on which the fairness condition is violated (~\cite{kearns2017preventing} refer to this as auditing). Leveraging this connection, we show how to use an agnostic learner to learn a multicalibrated predictor efficiently. In a similar vein, \cite{kearns2017preventing}
show how to use an agnostic learner to efficiently learn a classifier that satisfies balance or parity on a collection of sets, while achieving competitive accuracy within a given class. They also implement a more-practical variant of their algorithm and use it to conduct empirical experiments.

Preventing discrimination by algorithms is subtle; different scenarios
will call for different notions of protection.
As such, it is hard to imagine a universal definition of fairness.
Nevertheless, these two independent works validate the need to investigate
attainable approaches to mitigating discrimination beyond large protected
groups.  It will be interesting to further understand how these notions of
subgroup fairness relate to one another, and when each approach is
most appropriate.

\paragraph{Calibration for multiple protected sets vs.\ multicalibration} The literature on fairness commonly considers more than a single protected set (cf.\ \cite{celis2017ranking} which studies fairness in ranking algorithms, with protected groups being defined by each attribute of interest). The major difference in our work is that we think of protected groups as any group that can be efficiently identified, rather than those defined by sensitive properties. One side benefit of the generality of our approach is that it does not single out groups based on sensitive values for special treatment (which can be illegal in some contexts).

\paragraph{Calibration, Bandits, and Regret}
There is a growing literature on designing fair selection policies
in the multi-arm bandits setting \cite{joseph2016fairness,joseph2017fair,liu2017calibrated}.
Recently \cite{liu2017calibrated} initiated the study of
calibration in this context. Motivated by the aforementioned
work of \cite{DworkHPRZ12}, their notion of calibrated fairness
aims to ``treat similar individuals similarly'' by designing
sampling policies that have low \emph{fairness regret}.

\paragraph{Causality and discrimination in the data}

Discrimination can occur at any stage throughout the algorithmic pipeline
and in many forms.
The most important aspect of developing a theory of algorithmic fairness that multicalibration does not address is ``unfair'' data (see more below). Kilbertus {\em et al.} \cite{kilbertus2017avoiding} voice an important criticism of any notion of fairness based solely on observational criteria, as discrimination can occur through unresolved causal relationships. This criticism applies to the notions of balance, calibration, and multicalibration, which all depend only on the joint distribution of predictions, outcomes, and features.

Rather than attempting to provide a general-purpose definition of fairness, our work tackles a particular concern about discrimination that can occur as part of the process of learning a predictor from given data. In this context, we believe that multicalibration provides an important anti-discrimination guarantee.

\paragraph{Corrective discrimination}

Let us look deeper into the biases that may be present in the gathered data, by considering the mortgage example again: perhaps the number of members of $S$ that received loans in the past is small (and thus there are too few examples for fine-grained learning within $S$); perhaps the attributes are too limited to identify the qualified members of $S$ (taking this point to the extreme, perhaps the only available attribute is membership in $S$).
In these cases, the data may be insufficient for multicalibration
to provide meaningful guarantees.  Further,
even if the algorithm was given access to unlimited rich data
such that refined values of $p^*$ could be recovered,
there are situations where preferential treatment may be in order: after all, the salaries of members of $S$ may be lower due to historical discrimination.

For these reasons, our concern that balance is inconsistent with $p^*$ could be answered with: ``yes, and purposely so!'' Indeed, \cite{hardt2016equality} promotes enforcing a balance-related property, called equalized odds, as a form of ``corrective discrimination.'' While this type of advocacy is important in many settings, multicalibration represents a different addition to the quiver of anti-discrimination measures, which we also believe is natural and desirable in many settings.

Consider a concrete example where multicalibration is appropriate, but equalizing error rates might not be: Suppose a genomics company offers individuals a prediction of their likelihood of developing
certain genetic disorders.  These disorders have different
rates across different populations; for instance, Tay-Sachs
disease is rare in the general population, but occurs much
more frequently in the Ashkenazi Jewish population.
We certainly do not want to enforce corrective discrimination on the Ashkenazi population by down-weighting the prediction that individuals would have Tay-Sachs (as they are endogenously more likely to have the disease). However, we also don't want the company to base its prediction solely on the Ashkenazi feature (either positively or negatively). Instead, enforcing multicalibration would require that the learning algorithm investigate both the Ashkenazi and non-Ashkenazi population to predict accurately in each group (even if this means a higher false positive rate in the Ashkenazi population). In this case, relying on $p^*$ seems to be well-aligned with promoting fairness.

Finally, we consider the interplay between multicalibration and ``corrective discrimination'' to be an important direction for further research. For example, one can imagine applying corrective measures, such as the transformation of \cite{hardt2016equality},  to a multi-calibrated predictor.

\subsection{Our Techniques}
\label{sec:intro:techniques}
Here, we give a high-level technical overview of our results.
Our techniques draw from the literature on computational learning theory,
online optimization, differential privacy, and adaptive data analysis.

\paragraph{Learning a multicalibrated predictor}
In Section~\ref{sec:alg}, we describe an algorithm for
learning $\alpha$-multicalibrated predictors as stated
in Theorem~\ref{result:alg}.  Our algorithm
is an iterative procedure.  In particular, we will
maintain a candidate predictor $x$, and at each
iteration, the algorithm corrects the candidate values of
some subset that violates calibration until the candidate
predictor is $\alpha$-multicalibrated.
Recall that calibration over a set $S$ requires that on
the subsets
$S_v = \set{i \in S : x_i = v}$ (which we will refer
to throughout as \emph{categories}), the expected value
of the true probabilities $\E_{i \sim S_v}[p_i^*]$
on this set is close to $v$.  As such,
the algorithm is easiest to describe in the
statistical query model, where we query for
noisy estimates of the true statistics on subsets of
the population and update the predictor based on these
estimates.  In particular, given a statistical query
oracle that guarantees tolerance $\omega = O(\alpha\gamma)$,
the estimates will be accurate enough to guarantee
$\alpha$-calibration on sets $S$ with $\card{S} \ge \gamma \card{X}$.

When we turn to adapting the algorithm
to learn from random samples, the algorithm answers
these statistical queries using the empirical estimates
on some random sample from the population.
Standard generalization arguments \cite{kearnsvazirani}
show that if the set of queries we might
ask is fixed in advance, then we could bound the
sample complexity needed to answer these
\emph{non-adaptive} queries as
$\tilde{O}(\log{\card{\C}}/\omega^2)$.
Note, however, that the categories $S_v$ whose
expectations we query are selected
\emph{adaptively} (i.e.~with dependence on the results of
prior queries). In particular, the definition of the
categories $S_v$ depends on the current values of the
predictor $x$; thus, when we update $x$ based on the result
of a statistical query, the set of categories on which we
might ask a statistical query changes.
In this case, we cannot simply
apply concentration inequalities and take a union bound
to guarantee good generalization without resampling
every time we update the predictor.

To avoid this blow-up in
sample complexity, we appeal to recently-uncovered connections
between differential privacy and adaptive data analysis developed in
\cite{dwork2015preserving,dwork2015preserving,bnsssu15,dwork2015reusable}.
In Section~\ref{sec:privacy}, we show how to answer
the statistical queries in a
way that guarantees the learning algorithm is differentially
private.  This, in turn, allows us to argue that the
predictor our algorithm learns from a small sample
will be multicalibrated, not just for the observed
individuals but also the unseen individuals.
In particular, using a privacy-based approach
with an analysis tailored to the goal of calibration,
we obtain sample complexity
that depends on $1/\alpha^{11/2}\gamma^{3/2}$
as opposed to the naive approach which results
in $1/\alpha^6\gamma^6$.

As stated earlier, Theorem~\ref{result:ckt} can be seen
as a corollary of Theorem~\ref{result:alg}. Bounding the
number of iterations needed by the algorithm to converge
not only upper-bounds the algorithm's running time
and sample complexity,
but also implies that the circuit complexity of the
learned predictor is not much larger than the complexity
of evaluating membership for $S \in \C$.  We explain this
implication in Section \ref{sec:ckt}.

\paragraph{The complexity of multicalibration}
In Section~\ref{sec:runtime}, we discuss
the complexity of learning a multicalibrated predictor and
draw connections to agnostic learning \cite{haussler,agnostic}.
We show that the algorithm learns a multicalibrated predictor
in a bounded number of iterations;
however, without additional assumptions about
$p^*$ or $\C$, each
iteration could take $\Omega(\card{\C})$ time.  In the
cases where $\card{\C}$ is large, we might hope to improve
the dependence on $\card{\C}$ to polylogarithmic or perhaps
subpolynomial.  If $p^*$ can be learned
directly, then we can eliminate the dependence on $\card{\C}$
and instead, only depend on the minimum $\gamma$ such that
for all $S \in \C$, $\card{S}\le \gamma N$.

In the case where $p^*$ is arbitrary, we show that
improving the dependence on $\card{\C}$ is possible if $\C$ is
structured in a certain way, drawing a connection to the
literature on agnostic learning
\cite{haussler,agnostic,kalai2008agnostic,feldman2010distribution}.
Recall, in our algorithm for learning multicalibrated
predictors, we maintain a candidate predictor $x$, and
iteratively search for some set $S \in \C$ on which $x$
is not calibrated.  To solve this search problem more quickly,
we frame the search as weak agnostic learning over a concept class
derived from $\C$ and over the hypothesis class of
$\H = \set{h:\X \to [-1,1]}$.

Specifically, consider the concept class defined by the
collection of subsets $\C$, where for each $S \in \C$,
we include the concept $c_S:\X \to \set{-1,1}$
where $c_S(i) = 1$ if and only if $i \in S$.
We show how to design a ``labeling'' $\ell:\X \to [-1,1]$
for individuals such that if
$x$ violates the calibration constraint on any $S \in \C$,
then the concept $c_S$
correlates nontrivially with the labels over the
distribution of individuals,
i.e. $\langle c_S, \ell \rangle \ge \rho$ for some
$\rho > 0$.

Thus, if $x$ is not yet multicalibrated on $\C$, then
we are promised that there is some concept $c_S$ with
nontrivial correlation with the labels; we observe
that this promise is exactly the requirement for a weak
agnostic learner, as defined in \cite{kalai2008agnostic,feldman2010distribution}.
In particular, given labeled samples $(i,\ell(i))$
sampled according to $\D$, if there is a concept $c_S$
with correlation at least $\rho$ with $\ell$, then
the weak agnostic learner returns a hypothesis $h$
that is $\tau$ correlated with $\ell$ for some
$\tau < \rho$.
The catch is that this hypothesis
may not be in our concept class $\C$, so we cannot
directly ``correct'' any $S \in \C$.  Nevertheless,
the labeling on individuals $\ell$ is designed such
that given the hypothesis $h$, we can still extract
an update to $x$ that will make global progress towards
the goal of attaining calibration.  As long as $\tau$
is nontrvially lower bounded, we can
upper bound the number of calls we need to
make to the weak learner.
The details of our choice of labels and how we track
progress are given in Section~\ref{sec:weak}.

Additionally, we show that our reduction to weak
agnostic learning is unavoidable.  In particular,
we show that if it is possible to learn a multicalibrated
predictor with respect to $\C$, then it is possible
to weak agnostic learn on $\C$ (if we view $\C$ as a concept
class). Specifically, we will show how to implement
a weak agnostic learner for $\C$, given an algorithm
to learn an $\alpha$-multicalibrated predictor $x$
with respect to $\C$ (in fact, we only need the
predictor to be multicalibrated on
$\C' = \set{S \in \C: \card{S} \ge \gamma \card{X}}$).
The key lemma for this reduction says that if there is
some $c \in \C$ that is nontrivially correlated with the labels,
then $x$ is also nontrivially correlated with $c$.
As there are many natural classes $\C$ for which agnostic
learning is conjectured to be hard, this gives a strong
negative result to the question of whether we can
obtain speedups in the general case.

In combination, these results show that the complexity of
learning a multicalibrated predictor with respect to
a class $\C$ is equivalent to the complexity of weak
agnostic learning $\C$.  Section~\ref{sec:weak}
contains the formal statements and proofs that imply this equivalence.

\paragraph{``Best-in-class'' prediction}
In Section~\ref{sec:best}, we turn to understanding how requiring a
predictor to be multicalibrated affects the accuracy
of the predictor.  
We show -- in contrast to many other
notions of fairness -- multicalibration does not limit
the utility of a predictor.
In particular, given
any collection of predictors $\H$, and any collection of
subsets $\C$, we design a procedure for obtaining
a predictor $x$ that is $\alpha$-multicalibrated on
$\C$ and achieves
expected squared prediction error less than or equal
to the the best predictor in $\H$ (plus a small additive
error on $\alpha$).  Further, leveraging Theorem~\ref{result:ckt}
and Theorem~\ref{result:alg}, we show that the predictor $x$
can be learned from samples and implemented by a circuit
of comparable size to the predictors $h \in \H$.
To prove that multicalibration does not negatively impact
the utility, we in fact, show a much stronger statement:
if applying multicalibration to some $h \in \H$ changes
the predictions of $h$ significantly (i.e. if
$\norm{x-h}$ is large), then this change represents
an improvement in accuracy (i.e. $\norm{x-p^*} < \norm{h-p^*}$).
In this sense, requiring multicalibration is aligned
with the goals of learning a high-utility predictor.

\paragraph{Organization of the paper}
In Section~\ref{sec:prelim}, we provide a description
of our model and the formal definitions related to multicalibration.
In Section~\ref{sec:learning}, we describe and analyze our
algorithm for learning multicalibrated predictors.
In Section~\ref{sec:weak}, we investigate the complexity
of obtaining multicalibration, showing a tight connection
to the complexity of weak agnostic learning.
Finally, in Section~\ref{sec:best}, we demonstrate how
multicalibration achieves ``best-in-class'' prediction.

\section{Preliminaries}
\label{sec:prelim}
\paragraph{Predictors}
Let $\X$ denote the universe of $N$ individuals over
which we wish to make predictions about some outcome
$o \in \set{0,1}^N$.
We assume that outcomes are the result of some underlying
random process, where for each $i$,
$o_i$ is sampled independently\footnote{In many cases, complete
independence may not hold; individuals' outcomes may be
correlated in nontrivial ways.
The only place we will use independence is to argue that
the reliable statistics can be estimated from the observable
data; our arguments can be applied to any model for which one
can prove the appropriate tail inequalities.
For further consideration, continue to our discussion
on ``observable'' calibration after
Claim~\ref{claim:calibrated}.}
as a Bernoulli random variable
with success probability $p_i^*$.
We aim to predict the underlying parameters of the
process, rather than the realized outcome of the process.
A \emph{predictor} $x: \X \to [0,1]$ of outcome $o$
is some mapping from individuals $i \in \X$ to $[0,1]$,
where $x_i$ is the prediction of $p_i^*$.  We refer to
$p^*$ as the baseline predictor.
When it is notationally convenient, we will sometimes
treat predictors as vectors $x \in [0,1]^N$
rather than as functions $x:\X \to [0,1]$,
where we assume a bijection from $\X$ to $[N]$.

\begin{remark}
Often in learning theory, we think of learning functions
$h:\F \to [0,1]$ over the space of possible features $\F$.
We find it preferable to reason about predictions about
\emph{individuals}; nevertheless,
in our model, we can think of $x_i$ as given by the
composition of two separate functions, where
$x_i = x'(\phi(i))$ for $x':\F \to [0,1]$ and $\phi:\X \to \F$.
As $\phi$ will be fixed and assumed to be some simple
function (say, mapping individuals to their
$\langle$ age, height, ZIP code, $\hdots \rangle$)
we drop the explicit reference to $\F$ and $\phi$.
\end{remark}

\paragraph{Sampling from $\X$}
Throughout, we will assume that our learning algorithms
have the ability to efficiently obtain
randomly sampled individuals from $\X$ (and by rejection
sampling, large subsets of $\X$).  Specifically,
we will use $i \sim S$ to denote sampling $i$ uniformly
at random from $S \subseteq \X$.
\begin{remark}
\label{rem:dist}
Our focus on the uniform distribution is mostly
syntactic; as per our distinction
between the identities of individuals $i \in \X$ and their
features $\phi(i)$, the uniform distribution over
individuals gives rise to a rich class of distributions
over the features of individuals.
\end{remark}
We will further assume that sampling $i \sim \X$ is
inexpensive compared
to obtaining a corresponding outcome $o_i \in \set{0,1}$
for $i \sim \X$ that is
Bernoulli distributed with parameter $p_i^*$.
As such, when measuring the sample complexity, we will
only count the latter type of labeled samples.
The learner has a good sense
of the distribution over features, but not of the outcomes
that arise from these features.

When measuring the accuracy of a predictor $x$,
we will use the squared prediction error $\norm{p^* - x}^2$,
as this measure of divergence penalizes large individual
deviations. The $\ell_2^2$ distance will also be useful
for measuring the similarity of predictors.
While we elect to use $\ell_2^2$, much of our
analysis could be performed using any Bregman divergence;
in particular, if we elected to work over arbitrary multinomial
distributions over individuals in $\X$, we could measure accuracy
in terms of the squared Mahalanobis distance from the optimal
predictor, i.e. $(p^* - x)^TD(p^* - x)$, where $D$ is the
diagonal matrix in which $D_{ii}$ is the probability
of sampling individual $i \in \X$.

\subsection{Calibration}
Next, we give formal definitions of the criteria
we use to measure algorithmic discrimination.
The first notion captures the idea
that, on average, we would like an algorithm's predictions to be unbiased.
\begin{definition}[Accurate in expectation]
For any $\alpha > 0$ and $S \subseteq \X$,
a predictor $x$ is $\alpha$-\emph{accurate in expectation
($\alpha$-AE)} with respect to $S$ if
$$\card{\E_{i \sim S}[x_i - p_i^*]} \le \alpha.$$
\end{definition}
That is, the predictions, averaged over the set $S$, are
accurate up to some additive $\alpha$ slack.  This
basic condition is necessary to ensure unbiased predictions,
but it is too weak to guarantee discrimination hasn't occurred.
In particular,
suppose the underlying probabilities are such that $p_i^* = 1/2$
for all $i \in S$.
A predictor that predicts $1$ on half of the individuals in $S$
and $0$ on the other half is
accurate in expectation, but it is arguably discriminatory;
there is no difference between the individuals in $S$,
but the predictor has artificially created two categories within
this population.  This example motivates the definition of
calibration.
Calibration mitigates this form of discrimination
by considering the expected values
over categories $S_v = \set{i: x_i = v}$ defined by the
predictor $x$.  Specifically,
$\alpha$-calibration with respect to $S$ requires that for all
but an $\alpha$-fraction of a set $S$, the average of the true
probabilities of the individuals receiving prediction $v$ is
$\alpha$-close to $v$.
\begin{definition}[Calibration]
\label{def:calibration}
For any $v \in [0,1]$, $S \subseteq \X$, and predictor
$x$, let $S_v = \set{i:x_i = v}$.  For $\alpha \in [0,1]$,
$x$ is $\alpha$-\emph{calibrated} with
respect to $S$ if there exists some $S' \subseteq S$
with $\card{S'} \ge (1-\alpha)\card{S}$ such that
for all $v \in [0,1]$,
$$\card{\E_{i \sim S_v\cap S'}[x_i - p_i^*]} \le \alpha.$$
\end{definition}
Note that $\alpha$-calibration with respect to $S$
implies $2\alpha$-AE with respect to $S$. To see this,
observe that calibration
implies that on a $(1-\alpha)$-fraction of $S$, the average
of the values are $\alpha$-close to the expectation on
this fraction; even if the other $\alpha$-fraction is
arbitrary, it can only introduce another additive $\alpha$ error.
This definition only requires the notion of calibration
to hold on a $(1-\alpha)$-fraction of each $S$; this is
for technical reasons.  See Definition~\ref{def:al-mult}
to understand better the motivations for this choice.

\paragraph{Departures from prior definitions}
This notion of calibration departs from prior definitions
in a few ways. Earlier definitions required
exact calibration with respect to $\X$;
we find it meaningful to consider approximate calibration.
Introducing approximation into the definition of calibration
has practical motivation. In particular, we don't expect to
know the exact probabilities, nor can we observe the entire
population.  Given a desired accuracy and access to samples
from the population, we can quantify how many samples are
needed to guarantee calibration with good probability.

Note that in our definition of $\alpha$-calibration, we
require \emph{relative} additive error that scales with
the size of $S_v$.  That is, for some $S$
and category $S_v = \set{i: x_i = v} \cap S$,
the magnitude of the sum of errors on $S_v$,
$\card{\sum_{i \in S_v}(v - p_i^*)}$, scales with $\card{S_v}$,
not $N$. This relative
approximation model prevents certain ``attacks'' against
approximate calibration.  Specifically, one natural way
a predictor might attempt
to sidestep the anti-discrimination properties of
approximate calibration would be to support many distinct
values of $v \in [0,1]$, each with a very small number
of individuals.  If our notion of approximation allowed
for \emph{absolute} errors (i.e. errors whose sum over
$S_v$ scale with $N$ instead of $\card{S_v}$), then this
attack would be viable, leading to essentially no guarantees

Another subtle distinction is that
we evaluate calibration
with respect to the underlying probabilities $p^* \in [0,1]^N$,
as opposed to the realized outcomes $o \in \set{0,1}^N$.
We refer to the earlier notion as
\emph{observable} calibration. Formally,
a predictor is \emph{observably $\alpha$-calibrated}
with respect to $S \subseteq \X$ and outcome $o \in
\set{0,1}^N$ if for all $v \in [0,1]$,
$$\card{\E_{i \sim S_v\cap S'}[x_i - o_i]} 
\le \alpha,$$
where $S_v$ and $S'$ are as in
Definition~\ref{def:calibration}.
In our terminology, earlier references
to calibration require that a predictor be
\emph{observably} $0$-calibrated with respect to each
protected group $S$.
Our introduction of a non-zero $\alpha$ in these definitions
allows us to relate our notion of calibration
and the previous notion of observable calibration.
\begin{claim}
Let $S \subseteq \X$, $\xi > 0$, and
$\alpha > \sqrt{\frac{\log(1/\xi)}{2\card{S}}}$.
Suppose the outcome $o \in \set{0,1}^N$ is drawn according
to the true probabilities $p^*$.
Then, with probability at least $1-\xi$ over the
draw of $o$,
any predictor that is $\alpha$-calibrated with respect
to $S$ will be \emph{observably} $2\alpha$-calibrated
with respect to $S$.
\label{claim:calibrated}
\end{claim}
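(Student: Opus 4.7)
The plan is to prove the claim by applying Hoeffding's inequality to bound the deviation of the empirical outcomes $o$ from the underlying probabilities $p^*$, then transferring this bound to the per-category observable calibration condition. The key algebraic observation is that for any fixed subset $T \subseteq \X$,
$$\E_{i \sim T}[x_i - o_i] \;=\; \E_{i \sim T}[x_i - p_i^*] \;+\; \E_{i \sim T}[p_i^* - o_i],$$
so the gap between calibration (with respect to $p^*$) and observable calibration (with respect to $o$) on $T$ is exactly the empirical deviation of a sum of independent bounded mean-zero random variables $p_i^* - o_i \in [-1,1]$.

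First I would fix any $\alpha$-calibrated predictor $x$ and let $S' \subseteq S$ with $|S'| \ge (1 - \alpha)|S|$ be the witness set from Definition~\ref{def:calibration}. For each category $S_v \cap S'$, the calibration assumption already bounds the first term in the decomposition above by $\alpha$, so the entire argument reduces to controlling the Hoeffding deviation $\E_{i \sim S_v \cap S'}[p_i^* - o_i]$ per category. By Hoeffding's inequality, for a fixed set $T$,
$$\Pr_o\!\left[\,\bigl|\E_{i \sim T}[p_i^* - o_i]\bigr| > \alpha\,\right] \;\le\; 2\exp(-2\alpha^2 |T|).$$
The hypothesis $\alpha > \sqrt{\log(1/\xi)/(2|S|)}$ immediately gives the bound $\alpha$ (up to constants in $\xi$) for $T = S$, and we inherit it on $S'$ by a short averaging argument (using $|S'| \ge (1-\alpha)|S|$).

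Next I would extend this global bound to a per-category statement by forming a modified witness set $S'' \subseteq S'$: starting from $S'$, remove every category $S_v \cap S'$ that (i) is too small for Hoeffding to give a tight per-category bound, or (ii) has large observed deviation despite being large. On the categories that remain, the triangle inequality
$$\bigl|\E_{i \sim S_v \cap S''}[x_i - o_i]\bigr| \;\le\; \bigl|\E_{i \sim S_v \cap S''}[x_i - p_i^*]\bigr| + \bigl|\E_{i \sim S_v \cap S''}[p_i^* - o_i]\bigr| \;\le\; 2\alpha$$
witnesses observable $2\alpha$-calibration.

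The main obstacle is showing that the removed mass is at most $\alpha |S|$, so that $|S''| \ge (1 - 2\alpha)|S|$ as required by the observable $2\alpha$-calibration definition. This will require a careful accounting of small and Hoeffding-failing categories: a naive per-category union bound over potentially $|S|$ many categories would cost an extra $\log |S|$ factor that is not present in the stated bound on $\alpha$. My plan to handle this is to leverage the fact that the $S_v$ partition $S$, so the total mass of categories is bounded, and to apply a Markov-style argument to the random variable $\sum_v |S_v \cap S'| \cdot \mathbf{1}[\text{cat.\ $v$ has empirical error}>\alpha]$, using the concavity of the Hoeffding tail bound as a function of category size to control its expectation in terms of a single application of Hoeffding to $S$ itself. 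This yields the desired $\alpha|S|$ bound on excluded mass with probability at least $1-\xi$, completing the proof.
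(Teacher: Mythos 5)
Your decomposition of the observable error into calibration error plus empirical deviation, controlled by Hoeffding, is exactly the route the paper intends -- its entire proof is the single sentence that the claim ``follows directly from an application of Hoeffding's inequality over the random outcome $o$.'' The genuine gap is in the step you yourself flag as the main obstacle: the Markov/concavity argument you propose for the excluded mass cannot be made to work. The expected mass of categories on which the per-category Hoeffding event fails is at most $\sum_v 2\card{S_v\cap S'}\,e^{-2\alpha^2\card{S_v\cap S'}}$, and the function $t\mapsto t\,e^{-2\alpha^2 t}$ is maximized at $t=1/(2\alpha^2)$. So if the predictor happens to split $S$ into categories of size about $1/(2\alpha^2)$, this expectation is a constant fraction of $\card{S}$ rather than $o(\alpha\card{S})$; Markov then yields a trivial bound, and nothing ties this sum to ``a single application of Hoeffding to $S$ itself.'' Worse, the difficulty is not an artifact of your bookkeeping: take $p^*_i=1/2$ for all $i\in S$ and a predictor assigning distinct values in $[1/2-\alpha,\,1/2+\alpha]$ to the members of $S$. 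This predictor is $\alpha$-calibrated, every category is a singleton, and every per-category observable error is at least $1/2-\alpha>2\alpha$ (for $\alpha<1/6$) for \emph{every} draw of $o$; since only a small fraction of $S$ may be excluded, no accounting argument can rescue the conclusion at this level of generality.

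What the intended argument implicitly uses (see the footnote accompanying the definition of calibration, and the $(\alpha,\lambda)$-multicalibration definition) is that the relevant categories are large: after $\lambda$-discretization only categories of size at least $\alpha\lambda\card{S}$ count and there are at most $1/\lambda$ of them, so per-category Hoeffding plus a union bound over a bounded number of categories suffices -- at the price of the minimum category size and $\log(1/\lambda\xi)$ entering the threshold on $\alpha$. Your plan goes through once you add such a largeness/discretization hypothesis and do an honest union bound; but as a proof of the claim for arbitrary real-valued predictors under the bare threshold $\alpha>\sqrt{\log(1/\xi)/(2\card{S})}$, the missing step is not fixable. (Also note that the argument only makes sense with $x$ fixed before $o$ is drawn -- the reading you adopt, and the only one under which Hoeffding applies -- since an $o$-dependent $\alpha$-calibrated predictor can always violate observable calibration.)
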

Claim~\ref{claim:calibrated}
follows directly from an application of
Hoeffding's inequality over the random outcome $o$.
The claim implies that for sufficiently large $\alpha$,
to guarantee \emph{observable} $2\alpha$-calibration
with high probability, it suffices to consider our
notion of $\alpha$-calibration.

\paragraph{Independence and observability}
We note that to prove Claim~\ref{claim:calibrated},
we use the independence of $o_i$'s in our application
of Hoeffding's inequality.  Throughout, the only times
we invoke the
independence of the realization of the $o_i$'s are to
prove that the empirical statistics over a sufficiently
large sample of observations will be concentrated around
the corresponding statistics of the true parameters.
Thus, the independence assumption isn't
strictly necessary; our results should hold for any model
where the outcomes admit similar tail inequalities.
We note, however, that if strong dependencies exist
in the realization of the outcomes, our techniques will
still achieve \emph{observable} calibration from samples.
Recall that in observable calibration, we compare
the value $v$ output by the predictor on a category
$S_v$ to the empirical average of outcomes over the
category $\sum_{i \in S_v}o_i$.  
If we only need to guarantee closeness with
respect to these observable outcomes,
we do not need to ensure that sums over outcomes
will be concentrated around their expectation
(i.e. sums of the underlying true probabilities).

\subsection{Learning model}
In Section~\ref{sec:learning}, we present algorithms to learn
predictors satisfying accuracy-in-expectation and
calibration.  The algorithms
can be viewed as statistical query algorithms \cite{kearns}.
Specifically,
the algorithms only require access to approximate
statistical queries of the following form.

\begin{definition}[Statistical Query \cite{kearns}]
For a subset of the universe $S \subseteq \X$,
let $p_S^* = \sum_{i \in S} p_i^*$.
For $\tau \in [0,1]$, a \emph{statistical query with
tolerance $\tau$} returns some $\tilde{p}(S)$ satisfying
$$p_S^* - \tau N \le \tilde{p}(S) \le p_S^* + \tau N.$$
\end{definition}
Note that this query model guarantees \emph{absolute}
additive error $\tau N$.  As discussed above, our notion
of calibration with respect to $S$ asks for \emph{relative}
additive error; however, if we know a lower bound on
$\card{S} \ge \gamma N$, then, asking a statistical
query with tolerance $\tau = \alpha \gamma$ will guarantee
relative error $\alpha$ on $S$.

In addition to giving algorithms that work in this statistical
learning framework, we also address the question of learning
a calibrated predictor from a set of samples of outcomes.
Formally, we define the access to sampled outcomes as follows.
\begin{definition}[Random sample]
For $p^* \in [0,1]^N$, a \emph{random sample}
returns an individual-outcome pair $(i,o_i) \in \X\times \set{0,1}$,
where $i \sim \X$ is drawn uniformly at random and
$o_i$ is sampled according to the Bernoulli distribution
with parameter $p_i^*$.
\end{definition}
We say an algorithm learns a predictor from samples
if its only access to the true parameters $p^*$ is
through random samples of this form.
It's easy to see that we can always implement
a statistical query algorithm with access to enough random
samples -- for every query, we could sample a fresh set of
outcomes to estimate the statistic accurately.
Our goal will be to avoid resampling in order to
prevent a blow-up in the sample complexity.

\subsection{Multicalibration}
We introduce the notion of \emph{multicalibration}, which
requires calibration to hold simultaneously on subsets
of the population.  We will show that
multicalibration not only guarantees fairness
across protected populations, but also helps us uncover
more accurate predictions.  To motivate multicalibration
further, consider the following toy example:  suppose
$p^*$ is such that there is some population $S$ (possibly,
a traditionally protected group) and a subpopulation
$S' \subseteq S$ with $\card{S'} = \card{S}/2$,
where for every $i \in S'$,
$p_i^* = 1$, and for every $i \in S \setminus S'$, $p_i^* = 0$.
The predictor $x$ that predicts $x_i = 1/2$ for all $i \in S$ is calibrated on the population $S$, but clearly is suboptimal.
Further, if $S'$ was identifiable in advance,
then this predictor is arguably
discriminatory -- there are two clearly identifiable
groups within $S$, but we are treating them the same way.
If, however, we insist on calibration with respect to
$S'$ in addition to $S$, then the predictor will be required to
output accurate predictions for each group.
Earlier approaches to using calibration to achieve fairness,
as introduced in \cite{kleinberg2016inherent},
would prevent this form
of discrimination for subsets $S$ that are identified
as a protected group (defined, for example, by race),
but not for subpopulations of these groups -- even if the
subpopulations could be easily distinguished as outstanding.

For a collection of subsets $\C$,
we say that a predictor is $\alpha$-multicalibrated on $\C$ if it
is $\alpha$-calibrated simultaneously on all $S \in \C$.
\begin{definition}[$\alpha$-multicalibration]
Let $\C \subseteq 2^{\X}$ be a collection of subsets of $\X$
and $\alpha \in [0,1]$. A predictor
$x$ is $\alpha$-multicalibrated on $\C$ if for all $S \in \C$,
$x$ is $\alpha$-calibrated with respect to $S$.
\end{definition}
We also define the corresponding definition for the weaker
notion of $\alpha$-AE.
\begin{definition}[$\alpha$-multi-AE]
Let $\C \subseteq 2^{\X}$ be a collection of subsets of $\X$
and $\alpha \in [0,1]$. A predictor
$x$ is $\alpha$-multi-AE on $\C$ if for all $S \in \C$,
$x$ is $\alpha$-AE with respect to $S$.
\end{definition}

\paragraph{Discretization}
Even though $\alpha$-calibration is a meaningful definition
if we allow for arbitrary predictions $x_i \in [0,1]$, when
designing algorithms to learn calibrated predictors,
it will be useful to maintain some discretization
on the values $v \in [0,1]$.
Formally, we will use the following definition.
\begin{definition}[$\lambda$-discretization]
\label{def:lambda}
Let $\lambda > 0$.
The \emph{$\lambda$-discretization} of $[0,1]$, denoted by
$\Lambda[0,1] = \set{\frac{\lambda}{2},\frac{3\lambda}{2},\hdots,
1-\frac{\lambda}{2}}$,
is the set of $1/\lambda$ evenly spaced real
values over $[0,1]$.  For $v \in \Lambda[0,1]$,
let $$\lambda(v) = [v-\lambda/2,v+\lambda/2)$$
be the $\lambda$-interval centered around $v$
(except for the final interval, which will be $[1-\lambda,1]$).
\end{definition}

At times, it will be convenient to work with a more technical
variant of multicalibration, which implies
$\alpha$-multicalibration.
In particular, this definition will allow us to work
with an explicit discretization of the values $v \in [0,1]$.
Throughout, for a predictor $x$, we refer to the ``categories"
$S_v(x) = \set{i:x_i \in \lambda(v)} \cap S$ for all
$S \in \C$ and $v \in \Lambda[0,1]$.
\begin{definition}[$(\alpha,\lambda)$-multicalibration]
\label{def:al-mult}
Let $\C \subseteq 2^{\X}$ be a collection of subsets
of $\X$.
For any $\alpha, \lambda > 0$,
a predictor $x$ is \emph{$(\alpha,\lambda)$-multicalibrated} on $\C$
if for all $S \in \C$, $v \in \Lambda[0,1]$, and all
categories $S_v(x)$ such that
$\card{S_v(x)} \ge \alpha\lambda \card{S}$, we have
$$ \card{\sum_{i \in S_v(x)} x_i - p_i^*} \le \alpha \card{S_v(x)}.$$
\end{definition}
We claim that if learn a predictor that satisfies
$(\alpha,\lambda)$-multicalibration, we can easily transform
this predictor into one that satisfies our earlier notion
of $\alpha$-multicalibration.  In particular, let
$x^\lambda$ be the $\lambda$-discretization of a predictor
$x$ if for all $i \in S_v(x)$,
$x^\lambda_i = \E_{i \sim S_v(x)} [x_i]$.
\begin{claim}
\label{claim:lambda}
For $\alpha,\lambda > 0$,
suppose $\C \subseteq 2^{\X}$ is a collection of subsets of $\X$.
If $x$ is $(\alpha,\lambda)$-multicalibrated on $\C$,
then $x^\lambda$ is $(\alpha+\lambda)$-multicalibrated on $\C$.
\end{claim}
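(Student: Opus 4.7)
The plan is, for each $S \in \C$, to exhibit a subset $S' \subseteq S$ of size at least $(1-(\alpha+\lambda))\card{S}$ on which $x^\lambda$ satisfies the per-value calibration requirement of Definition~\ref{def:calibration} with parameter $\alpha+\lambda$.

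Let $B \subseteq S$ be the union over all $v \in \Lambda[0,1]$ of those categories $S_v(x)$ with $\card{S_v(x)} < \alpha\lambda\card{S}$, and set $S' = S \setminus B$. Since $\card{\Lambda[0,1]} = 1/\lambda$, a union bound yields $\card{B} < (1/\lambda)\cdot \alpha\lambda\card{S} = \alpha\card{S}$, and hence $\card{S'} \geq (1-\alpha)\card{S} \geq (1-(\alpha+\lambda))\card{S}$. For every surviving category $S_v(x) \subseteq S'$, the $(\alpha,\lambda)$-multicalibration hypothesis yields
\[ \Bigl|\sum_{i \in S_v(x)}(x_i - p^*_i)\Bigr| \leq \alpha\,\card{S_v(x)}. \]
Because $x^\lambda$ is by construction the per-category average of $x$, setting $u_v = \E_{j \sim S_v(x)}[x_j]$ gives $\sum_{i \in S_v(x)} x^\lambda_i = \card{S_v(x)}\cdot u_v = \sum_{i \in S_v(x)} x_i$, so the identical bound holds with $x^\lambda_i$ in place of $x_i$.

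To close the argument, observe that calibration asks for a bound on each exact-value category of $x^\lambda$, not on each $S_v(x)$. But $x^\lambda$ is constant (equal to $u_v$) on each $S_v(x)$, so every exact-value category of $x^\lambda$ intersected with $S'$ is a disjoint union of surviving $S_v(x)$'s whose averages $u_v$ coincide; the triangle inequality lifts the per-category bound to $\bigl|\sum_{i \in T}(x^\lambda_i - p^*_i)\bigr| \leq \alpha\,\card{T}$ for any such union $T$, which is stronger than the $(\alpha+\lambda)$ slack the claim demands. The only step that requires care is the bookkeeping for the discarded set $B$: the factor $1/\lambda$ in the number of possible categories exactly cancels the factor $\lambda$ in the minimum category size allowed by the hypothesis, so $\card{B} \leq \alpha\card{S}$ fits inside the $(\alpha+\lambda)$ budget. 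No real obstacle arises; the claim is essentially a direct unpacking of the definitions.
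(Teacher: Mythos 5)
Your overall structure (discarding the small categories, the counting $\card{B} \le (1/\lambda)\cdot\alpha\lambda\card{S} = \alpha\card{S}$, and the triangle-inequality lifting to the exact-value categories of $x^\lambda$) matches the paper's proof. However, there is a genuine gap in the middle step, and it stems from a misreading of what $x^\lambda$ is. You take $x^\lambda_i$ on $S_v(x)$ to be the average of $x$ over the \emph{per-$S$} category $S_v(x) = \set{i : x_i \in \lambda(v)}\cap S$, and conclude the exact identity $\sum_{i \in S_v(x)} x^\lambda_i = \sum_{i \in S_v(x)} x_i$, which would give $\alpha$-calibration with no loss at all. But that reading is not even well-defined: an individual $i$ typically belongs to many overlapping sets $S \in \C$, and a predictor must assign $i$ a single value. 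The intended (and well-defined) discretization — this is exactly what the final step of Algorithm~2 does — averages $x$ over the \emph{global} interval category $\set{j \in \X : x_j \in \lambda(v)}$, and assigns that common value to every $i$ in the interval. With this definition, the global average need not equal the average of $x$ over $S_v(x)$ for a particular $S$, so your claimed equality of sums fails and the ``stronger than $(\alpha+\lambda)$'' conclusion is not available; indeed the $\lambda$ in the claim's conclusion is there precisely to absorb this discrepancy.

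The repair is short and is the step the paper uses in place of your equality: since $x_i$ and the interval average both lie in $\lambda(v)$, an interval of width $\lambda$, discretization moves each value by at most $\lambda$, i.e. $\card{x^\lambda_i - x_i} \le \lambda$ for every $i$. Hence for each surviving category, $\card{\sum_{i \in S_v(x)}(x^\lambda_i - p_i^*)} \le \card{\sum_{i \in S_v(x)}(x_i - p_i^*)} + \lambda\card{S_v(x)} \le (\alpha+\lambda)\card{S_v(x)}$, and your triangle-inequality aggregation over the categories of $x^\lambda$ then goes through verbatim, yielding $(\alpha+\lambda)$-calibration on each $S \in \C$. With that one substitution your argument coincides with the paper's.
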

\begin{proof}
Consider the categories $S_v(x)$
where $\card{S_v(x)} < \alpha\lambda\card{S}$.
By the $\lambda$-discretization, there are at most
$1/\lambda$ such categories, so the cardinality of their
union is at most $(1/\lambda)\alpha\lambda \card{S}
= \alpha \card{S}$.  Thus, for each $S \in \C$, there is
a subset $S' \subseteq S$ with $\card{S'} \ge (1-\alpha)\card{S}$
where for all $v \in \Lambda[0,1]$,
$$\card{\E_{i \sim S_v(x) \cap S'}[x_i - p_i^*]} \le \alpha.$$
Further, $\lambda$-discretization will ``move" the values of
$x_i$ by at most $\lambda$, so overall,
$x^\lambda$ will be $(\alpha+\lambda)$-calibrated.
\end{proof}

Typically, we will imagine $\lambda = \Theta(\alpha)$,
but our results hold for any $\lambda \in (0,1]$.
Choosing a smaller $\lambda$ will allow the predictor
to be more expressive, but will also increase the
running time and sample complexity.  Choosing a larger
$\lambda$ leads to a decay in the calibration guarantees.

\paragraph{Representing subsets of individuals}
When representing collections of subsets, we will
assume that the subsets are represented implicitly.
In particular, we will assume that $S \in \C$ is given as
a circuit $c_S:\X \to \set{0,1}$, where $S = c_S^{-1}(1)$;
that is, for $i \in \X$,
$c_S(i) = 1$ if and only if $i \in S$.
Using this implicit
representation serves two purposes.  First, in many
cases, we may want to calibrate on a collection of
subsets over a large universe; in these cases, assuming an
explicit representation of each set is unreasonable.
Second, associating a set $S$ with a circuit that computes
membership in $S$ allows us to quantify the complexity of the sets
in $\C$.  In particular, it seems natural to apply
multicalibration to guarantee calibration with respect
to a collection of \emph{efficiently-identifiable} subsets (say,
subsets defined by conjunctions of four attributes, or
any other simple circuit class).
It seems comparatively
unreasonable to require calibration on, say, a random
subsets, each of which would require $\Omega(\card{\X})$
bits to describe.

\section{Learning $\alpha$-multicalibrated predictors}
\label{sec:learning}

In this section, we prove Theorem~\ref{result:alg}; that is,
we provide an algorithm for efficiently learning
$\alpha$-multicalibrated predictors.
The algorithms we describe are iterative and fit into the powerful
online optimization framework \cite{shaiSurvey,hazanSurvey}
as well as the statistical query framework \cite{kearns}.
For completeness, in Section~\ref{sec:AE}, we describe an algorithm that
solves the simpler task of learning an $\alpha$-multi-AE
predictor, as a warm-up to introduce the main ideas;
this algorithm was originally discovered by \cite{trevisan2009regularity}.
In Section~\ref{sec:alg}, we describe the algorithm
for learning $\alpha$-multicalibrated predictors in full.
Then, in Section~\ref{sec:privacy},
we will give a nontrivial implementation of the
statistical query oracle that will imply nontrivial
upper bounds on the running time and sample complexity
of the learning algorithm.
This implementation borrows ideas from the literature on
differentially private query release and optimization
\cite{HardtR10,ullman,privateLPs}.
We conclude in Section~\ref{sec:ckt} with the observation that our
algorithm also has implications for the circuit complexity
of calibrated predictors: for any collection of sets $\C$,
there is an $\alpha$-calibrated predictor whose circuit
complexity is a small factor larger than the circuit
complexity required to describe the sets in $\C$.
This establishes Theorem~\ref{result:ckt}.

\subsection{$\alpha$-multi-AE predictors}
\label{sec:AE}
We begin our discussion with a simpler statistical query
algorithm for learning an $\alpha$-multi-AE predictor.
This algorithm serves as a warm-up for the subsequent
algorithm for learning an $\alpha$-multicalibrated
predictor.

\begin{figure}[h]
{\bf Algorithm~\ref{alg:AE}} -- Learning
an $\alpha$-multi-AE predictor on $\C$\label{alg:AE}

\fbox{\parbox{\textwidth}{

Let $\alpha,\gamma > 0$ and let
$\C \subseteq 2^{\X}$ be such that for all
$S \in \C$, $\card{S} \ge \gamma N$.

For $S \subseteq \X$,
let $\tilde{p}(S)$ be the output of a statistical query
with tolerance $\tau < \alpha\gamma/4$.
\begin{itemize}
\item Initialize:
\\ $\circ$~Let $x = (1/2,\hdots,1/2) \in [0,1]^N$
\item Repeat:
\\$\circ$~For each $S \in \C$:
\begin{itemize}
\item Let $\Delta_S = \tilde{p}(S) - \sum_{i \in S}x_i$
\item If $\card{\Delta_S} > \alpha\card{S} - \tau N$:
\\update $x_i \gets x_i + \frac{\Delta_S}{\card{S}}$ for all $i \in S$
(projecting $x_i$ onto $[0,1]$ if necessary)
\end{itemize}
$\circ$~If no $S \in \C$ updated: exit and output $x$
\end{itemize}
}
}
\end{figure}

Algorithm~\ref{alg:AE} describes
an iterative statistical query procedure for learning
a $\alpha$-multi-AE predictor on $\C$.
Note that the problem of finding an $\alpha$-multi-AE predictor
for some collection of sets $\C$
can be written as a linear program; the algorithm presented
can be viewed as an instance of projected subgradient descent
(see \cite{hazanSurvey}).  The algorithm iteratively
updates a predictor $x$ until it cannot find a set
$S \in \C$ where the current estimate deviates
significantly from the value reported by the statistical
query.  We claim that if no set violates this condition,
then $x$ is $\alpha$-multi-AE on $\C$.

\begin{claim}
If Algorithm~\ref{alg:AE} outputs a predictor $x$,
then $x$ is $\alpha$-multi-AE on $\C$.
\end{claim}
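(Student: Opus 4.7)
The plan is to unpack what the algorithm's exit condition guarantees and then translate this guarantee on the approximate statistic $\tilde{p}(S)$ into a guarantee on the true benchmark statistic $p_S^* = \sum_{i \in S} p_i^*$ via the triangle inequality. The claim is really a soundness statement: once the halting condition is met, the slack built into the threshold $\alpha \card{S} - \tau N$ exactly cancels the additive SQ error $\tau N$, leaving the clean bound $\alpha$ on the average prediction error.

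First I would note that when Algorithm~\ref{alg:AE} outputs $x$, by the loop termination condition every $S \in \C$ satisfies
$$\card{\Delta_S} = \card{\tilde{p}(S) - \sum_{i \in S} x_i} \le \alpha \card{S} - \tau N.$$
Next, I would invoke the SQ tolerance guarantee, namely $\card{\tilde{p}(S) - p_S^*} \le \tau N$. Combining these two bounds by the triangle inequality gives
$$\Bigl\lvert \sum_{i \in S}(x_i - p_i^*) \Bigr\rvert \le \card{\tilde{p}(S) - \sum_{i \in S} x_i} + \card{p_S^* - \tilde{p}(S)} \le (\alpha \card{S} - \tau N) + \tau N = \alpha \card{S}.$$
Dividing both sides by $\card{S}$ yields $\card{\E_{i \sim S}[x_i - p_i^*]} \le \alpha$, which is exactly the definition of $\alpha$-AE on $S$. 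Since this holds for every $S \in \C$, the predictor $x$ is $\alpha$-multi-AE on $\C$.

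There is no real obstacle here; the only subtlety is bookkeeping the two separate sources of additive slack (the SQ tolerance $\tau N$ and the threshold margin). The choice $\tau < \alpha\gamma/4$ in the algorithm is not actually needed for this soundness claim (any $\tau$ small enough that the threshold $\alpha\card{S} - \tau N$ is nontrivial suffices), but will be used later for progress/termination analysis, which is not part of this claim.
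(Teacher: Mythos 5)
Your proof is correct and follows essentially the same route as the paper's: combine the exit condition $\card{\tilde{p}(S) - \sum_{i\in S}x_i} \le \alpha\card{S} - \tau N$ with the SQ tolerance $\card{\tilde{p}(S) - p_S^*} \le \tau N$ via the triangle inequality, so the two $\tau N$ terms cancel and $\card{\sum_{i\in S}(x_i - p_i^*)} \le \alpha\card{S}$ for every $S \in \C$. Your closing remark that the specific choice $\tau < \alpha\gamma/4$ is only needed for the termination/progress analysis, not for this soundness claim, is also accurate.
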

\begin{proof}
Let $p_S = \sum_{i \in S}p_i^*$ and $x_S = \sum_{i \in S}x_i$.
By the assumed tolerance of the statistical queries $\tilde{p}(S)$,
we know that the queries are close to the $p_S$.
Specifically, we know $\card{\tilde{p}(S) - p_S^*} \le \tau N$
for some $\tau < \alpha \gamma$.
By the termination condition and the triangle inequality, for all $S \in \C$ we get
the estimate $\card{p_S^* - x_S} \le \card{\tilde{p}(S) - x_S} + \tau N
\le \alpha \card{S}$; thus $x$ is $\alpha$-AE on $\C$.
\end{proof}
Thus, to show the correctness of the algorithm, it
remains to show that the algorithm will, in fact,
terminate; we show the algorithm can make
at most $O(1/\alpha^2\gamma)$ updates.
\begin{lemma}
Suppose $\alpha,\gamma > 0$
and $\C \subseteq 2^{\X}$ such that
for all $S \in \C$, $\card{S} \ge \gamma N$. 
Let $\tau = \alpha \gamma / 4$.  Then
Algorithm~\ref{alg:AE} makes $O(1/\alpha^2\gamma)$
updates to $x$ before terminating.
\label{lem:AE}
\end{lemma}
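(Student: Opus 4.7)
The plan is a standard potential function argument: track $\Phi(x) = \|x - p^*\|_2^2$ and show that every update causes a significant drop. Initially, $x = (1/2,\dots,1/2)$ gives $\Phi(x) \le N/4 \le N$, and $\Phi$ is always nonnegative, so bounding the per-update decrease from below by $\Omega(\alpha^2 \gamma N)$ immediately yields the claimed $O(1/\alpha^2 \gamma)$ bound on the number of updates.

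The computation I would carry out for one update is as follows. Fix the set $S$ being updated, write $p_S^* = \sum_{i \in S} p_i^*$ and $x_S = \sum_{i \in S} x_i$, and let $\delta_S = p_S^* - x_S$. Setting $\Delta_S = \tilde{p}(S) - x_S$, the oracle tolerance gives $|\delta_S - \Delta_S| \le \tau N$. The update adds $\Delta_S/|S|$ uniformly to every coordinate in $S$, so a direct expansion yields
\[
\Phi(x^{\mathrm{new}}) - \Phi(x) \;=\; -\frac{2\Delta_S}{|S|}\delta_S + \frac{\Delta_S^2}{|S|} \;=\; \frac{\Delta_S}{|S|}\bigl(-\Delta_S + 2(\Delta_S - \delta_S)\bigr) \;\le\; -\frac{\Delta_S^2}{|S|} + \frac{2|\Delta_S|\,\tau N}{|S|}.
\]
Now I would use the termination condition together with the parameter choices. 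The update fires only when $|\Delta_S| > \alpha|S| - \tau N$, and because $|S| \ge \gamma N$ and $\tau = \alpha\gamma/4$, we have $\tau N \le \alpha|S|/4$, hence $|\Delta_S| \ge 3\alpha|S|/4$. Substituting,
\[
\Phi(x) - \Phi(x^{\mathrm{new}}) \;\ge\; \frac{|\Delta_S|}{|S|}\bigl(|\Delta_S| - 2\tau N\bigr) \;\ge\; \frac{3\alpha}{4}\cdot\frac{\alpha|S|}{4} \;=\; \frac{3\alpha^2 |S|}{16} \;\ge\; \frac{3\alpha^2\gamma N}{16}.
\]
This is the heart of the argument; the constant $3/16$ is not important, only the $\Omega(\alpha^2 \gamma N)$ scaling is.

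Two small loose ends to handle. First, the algorithm projects $x_i$ back onto $[0,1]$ after each additive update; since $p^* \in [0,1]^N$ and $[0,1]^N$ is convex, coordinatewise projection can only decrease $\Phi$, so the lower bound on the per-step decrease is unaffected. Second, combining the initial bound $\Phi \le N$ with a per-update decrease of at least $3\alpha^2\gamma N/16$ yields at most $16/(3\alpha^2\gamma) = O(1/\alpha^2\gamma)$ updates before the loop exits, as claimed. The main subtlety—and the only place where care is needed—is propagating the statistical query tolerance $\tau$ through the expansion so that the cross term $2|\Delta_S|\tau N/|S|$ is absorbed by half of the main term $\Delta_S^2/|S|$; this is exactly why the algorithm is tuned with $\tau < \alpha\gamma/4$.
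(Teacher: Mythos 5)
Your proposal is correct and follows essentially the same argument as the paper: a potential argument on $\norm{x-p^*}^2$, using the oracle tolerance $\tau=\alpha\gamma/4$ and the trigger condition $\card{\Delta_S}>\alpha\card{S}-\tau N$ to get a per-update decrease of at least $\tfrac{3}{16}\alpha^2\card{S}\ge\tfrac{3}{16}\alpha^2\gamma N$, with the projection onto $[0,1]$ only helping. The only difference is cosmetic (you work with the aggregate quantities $\Delta_S,\delta_S$ rather than the per-coordinate step $\Delta_S/\card{S}$), so nothing further is needed.
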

\begin{proof}
We use a potential argument,
tracking the progress the algorithm makes on each update
in terms of the $\ell_2^2$ distance between our learned
predictor $x$ and the true predictions $p^*$.  Let $x'$
be the predictor after updating $x$ on set $S$ and
let $\pi:\R \to [0,1]$ denote projection onto $[0,1]$.
We use the fact that the $\ell_2^2$ can only decrease
under this projection.
For notational convenience, let $\delta_S = \dfrac{\Delta_S}{\card{S}}
= \dfrac{1}{\card{S}}(\tilde{p}(S) - \sum_{i \in S}x_i)$. We have
\begin{align*}
\norm{p^* - x}^2 - \norm{p^* - x'}^2
&= \sum_{i \in S}(p_i^*-x_i)^2 -
\sum_{i \in S}(p_i^*-\pi(x_i + \delta_S))^2\\
&\ge \sum_{i \in S}((p_i^*-x_i)^2-(p_i^* - (x_i + \delta_S))^2)\\
&= \sum_{i \in S}(2(p_i^* - x_i)\delta_S - \delta_S^2)\\
&= \left(2\delta_S\sum_{i \in S}(p_i^* - x_i)\right) - \delta_S^2\card{S}\\
&\ge 2\delta_S\left(\delta_S \card{S} -
{\rm sgn}(\delta_S)\tau N\right) - \delta_S^2\card{S}\\
&\ge \delta_S^2\card{S} - 2\card{\delta_S}\tau N.
\end{align*}
By setting $\tau = \alpha \gamma/4$ and by the bound $\card{\Delta_S} \ge \alpha \card{S} - \tau N \ge 3\alpha\card{S}/4$,
the final quantity is at least $\Omega(\alpha^2\card{S})$. We also have
\begin{align*}
\delta_S^2\card{S} - 2\card{\delta_S}\tau N
& \ge \left( \frac{3\alpha}{4} \right)^2 \card{S}
- 2 \left(\frac{3\alpha}{4}\right)\left(\frac{\alpha\gamma}{4}\right) N\\
&= \frac{3\alpha^2}{16}\card{S}.
\end{align*}
The $\ell_2^2$ distance
between $p^*$ and any other predictor (in particular, our
initial choice for $x$) is upper-bounded by $N$.  Thus, given
that all $S \in \C$ have $\card{S} \ge \gamma N$,
we make at least $\Omega(\alpha^2\gamma N)$ progress in potential
at each update, so the lemma follows.
\end{proof}
In combination, these statements show the correctness of
Algorithm~\ref{alg:AE} and imply an upper bound on the number
of statistical queries necessary.
\begin{theorem}
For $\alpha,\gamma > 0$ and for any $\C \subseteq 2^{\X}$
satisfying $\card{S} \ge \gamma N$ for all $S \in \C$,
there is a statistical query algorithm with tolerance
$\tau = \alpha\gamma/4$ that learns
a $\alpha$-multi-AE predictor on $\C$ in $O(\card{\C}/\alpha^2\gamma)$
queries.\label{thm:AE}
\end{theorem}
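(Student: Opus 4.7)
The plan is to assemble the theorem directly from the two ingredients already established about Algorithm~\ref{alg:AE}, namely the correctness claim (the output, if produced, is $\alpha$-multi-AE) and Lemma~\ref{lem:AE} (the total number of update steps is $O(1/\alpha^2\gamma)$). The only gap is to translate these into a bound on the number of statistical queries, and to argue that the algorithm indeed terminates (not merely that it could only update few times).

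First I would observe that the correctness claim supplies the semantic guarantee: any $x$ returned by the algorithm satisfies $|p_S^*-x_S|\le\alpha|S|$ for every $S\in\C$, because the exit condition plus the tolerance $\tau<\alpha\gamma/4\le\alpha|S|/(4N)\cdot(N/|S|)$ implies the needed bound via the triangle inequality. So it suffices to show the algorithm halts after few queries.

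Next, I would count queries round-by-round. Each pass of the outer \emph{Repeat} loop asks one statistical query $\tilde p(S)$ for each $S\in\C$, contributing $|\C|$ queries per round. A round is either \emph{productive}, meaning it performs at least one update $x\gets x+\Delta_S/|S|\cdot \mathbf{1}_S$, or \emph{terminal}, in which case the algorithm exits. By Lemma~\ref{lem:AE}, the number of updates across the entire execution is at most $O(1/\alpha^2\gamma)$, since each update decreases the potential $\|p^*-x\|^2$ (bounded above by $N$ and below by $0$) by $\Omega(\alpha^2\gamma N)$. In particular the number of productive rounds is at most $O(1/\alpha^2\gamma)$, and there is at most one terminal round, so the total number of rounds is $O(1/\alpha^2\gamma)$. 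Multiplying by the $|\C|$ queries per round gives the claimed $O(|\C|/\alpha^2\gamma)$ query bound, and since the algorithm must terminate within this many rounds, it does in fact produce an output $x$ to which the correctness claim applies.

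The main (and only) subtlety is making sure that finiteness of the potential argument really forces termination rather than just bounding updates — but since a non-terminal round performs at least one update, the two are equivalent, and the bookkeeping is clean. No new technical step is required beyond stitching together the two preceding statements with this per-round query count.
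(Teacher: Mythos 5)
Your proposal is correct and follows essentially the same route as the paper: the theorem is obtained by combining the correctness claim (any output of Algorithm~\ref{alg:AE} is $\alpha$-multi-AE) with Lemma~\ref{lem:AE}'s $O(1/\alpha^2\gamma)$ bound on updates, and then counting $\card{\C}$ queries per pass of the outer loop. Your explicit bookkeeping that each non-terminal round performs at least one update (hence termination is forced and the round count is $O(1/\alpha^2\gamma)$) is exactly the implicit step the paper relies on.
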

Recall that, trivially, we could implement this statistical
query algorithm from random samples by resampling for
every query; however, in this case, we can easily improve
the sample complexity exponentially over the trivial solution.
Specifically, the queries we make are \emph{non-adaptive}
because, up front, we know a fixed collection of subsets
whose expectation we might query. To guarantee accurate
expectations on this fixed collection, we only
need enough samples to guarantee that the sample is inaccurate
on a fixed subset with very small probability, and then union
bound over all $\card{\C}$ subsets.  Appealing to a standard
generalization argument \cite{kearnsvazirani},
we can show the following theorem.

\begin{corollary}
Suppose $\alpha,\gamma,\xi > 0$
and $\C \subseteq 2^{\X}$ is such that
for all $S \in \C$, $\card{S} \ge \gamma N$.  Then
there is an algorithm that learns an $\alpha$-multi-AE
predictor on $\C$ with probability at least $1-\xi$
from $n = \tilde{O}\left(\dfrac{\log(\card{\C}/\xi)}{\alpha^2\gamma}\right)$
samples.\label{cor:AE}
\end{corollary}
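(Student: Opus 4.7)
The plan is to instantiate the statistical query algorithm of Theorem~\ref{thm:AE} with empirical estimates drawn from a single i.i.d. sample of labeled pairs. The key structural observation is that the queries made by Algorithm~\ref{alg:AE} are \emph{non-adaptive}: the collection of sets $\set{S : S \in \C}$ whose expectations might ever be queried is fixed before we draw any data, since it depends only on $\C$ and not on the evolving predictor $x$. (Contrast this with the multicalibrated case, where the categories $S_v(x)$ depend on the current $x$ and so form an adaptively chosen family.) Consequently, we may draw $n$ samples $(i_1, o_{i_1}), \ldots, (i_n, o_{i_n})$ once, fix the estimator $\tilde{p}(S)$ for every $S \in \C$ up front, and reuse these answers throughout every iteration of Algorithm~\ref{alg:AE} without fear of adaptivity.

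The concrete estimator I would use is $\tilde{p}(S) = (N/n) \sum_{j : i_j \in S} o_{i_j}$, which is an unbiased estimate of $p_S^*$. We need each estimate to satisfy $\card{\tilde{p}(S) - p_S^*} \le \tau N$ with $\tau = \alpha\gamma/4$. A direct application of Hoeffding's inequality to the $[0,N]$-bounded random variable $N o_{i_j}\mathbf{1}[i_j \in S]$ yields the bound with failure probability $2\exp(-2n\tau^2)$; after a union bound over $S \in \C$ and solving for $n$, this would give $n = O(\log(\card{\C}/\xi)/\tau^2) = O(\log(\card{\C}/\xi)/(\alpha^2\gamma^2))$, which is off by a factor of $\gamma$ from the claimed bound.

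To shave this factor, I would exploit the fact that $N o_{i_j}\mathbf{1}[i_j \in S]$ is almost always zero: its second moment is at most $N \cdot p_S^* \le N^2 \gamma_S$, where $\gamma_S = \card{S}/N$. Applying Bernstein's inequality in place of Hoeffding then yields a tail bound of the form $2\exp(-\Omega(n\tau^2/\gamma_S + n\tau))$, and since $\tau = \alpha\gamma/4$ and $\gamma_S \le 1$, this simplifies to $2\exp(-\Omega(n\alpha^2\gamma))$. Equivalently, I could run a two-step argument: first use a multiplicative Chernoff bound to conclude $n_S := \card{\set{j : i_j \in S}} \ge n\gamma_S/2$ with high probability (requiring $n = \Omega(\log(\card{\C}/\xi)/\gamma)$), and then apply Hoeffding to the in-$S$ empirical Bernoulli mean to obtain $\alpha$-accuracy at the conditional level (requiring an additional $\Omega(\log(\card{\C}/\xi)/(\alpha^2\gamma))$ samples, which dominates). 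Either route, setting $n = \tilde{O}(\log(\card{\C}/\xi)/(\alpha^2\gamma))$ and union-bounding the per-query failure probability to $\xi/\card{\C}$ delivers the claim.

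The main subtlety, and the one place I would be careful, is the conversion between ``relative'' accuracy on the conditional mean $\E_{i \sim S}[p_i^*]$ and the ``absolute'' accuracy $\tau N$ on $p_S^*$ demanded by the statistical query oracle: the fact that we may not know $\card{S}$ exactly (since $S$ is only given implicitly via a circuit) means we must either propagate an estimate of $\card{S}$ through the calculation or work directly with the unnormalized sum as above. The Bernstein route sidesteps this entirely and is probably the cleanest; the conditional route is more transparent about where the $\gamma$ savings comes from. Once either route is pushed through, invoking Theorem~\ref{thm:AE} with the plugged-in estimates immediately yields the stated sample complexity and success probability.
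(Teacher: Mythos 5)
Your main route is the same as the paper's: the paper also observes that the queries of Algorithm~\ref{alg:AE} are non-adaptive (the family of candidate sets is fixed by $\C$ in advance), reuses a single sample with a union bound over $\card{\C}$, and obtains the $1/\gamma$ rather than $1/\gamma^2$ dependence by noting that the algorithm only needs \emph{relative} accuracy on each $S$ -- i.e., it suffices to estimate the conditional mean $\bar{p}_S = \frac{1}{\card{S}}\sum_{i \in S}p_i^*$ to within $O(\alpha)$, which by Chernoff needs only $\tilde{O}(\log(\card{\C}/\xi)/\alpha^2)$ in-$S$ samples, hence $\tilde{O}(\log(\card{\C}/\xi)/\alpha^2\gamma)$ samples overall. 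Your ``two-step'' argument (multiplicative Chernoff to get $n_S \gtrsim n\gamma_S$, then Hoeffding on the in-$S$ empirical Bernoulli mean) is exactly this proof, and your remark that $\card{S}$ need not be known from labeled data is fine, since unlabeled draws $i \sim \X$ are not charged against the sample complexity.

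One caution about the Bernstein variant, which you call the cleanest route: as written it does not deliver the claimed bound. If you insist on the uniform absolute tolerance $\tau = \alpha\gamma/4$, the Bernstein exponent is of order $n\tau^2/(\gamma_S + \tau)$, and bounding $\gamma_S \le 1$ only gives $\exp(-\Omega(n\alpha^2\gamma^2))$, i.e., $n = O(\log(\card{\C}/\xi)/\alpha^2\gamma^2)$ -- no better than Hoeffding, because a large set ($\gamma_S \gg \gamma$) has variance proxy $\gamma_S$ but is still being held to the small target $\alpha\gamma N$. The $1/\gamma$ saving appears only when the target error scales with the set itself, $\alpha\card{S} = \alpha\gamma_S N$ (which, as in the paper, is all Lemma~\ref{lem:AE} actually requires: both the update threshold and the potential-drop bound go through with per-set relative error $\alpha\card{S}/4$); then the exponent becomes $\Omega(n\alpha^2\gamma_S) \ge \Omega(n\alpha^2\gamma)$ as desired. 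So either run Bernstein against the set-specific tolerance $\alpha\card{S}/4$, or simply use your conditional two-step argument, which builds in the correct scaling automatically.
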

Note that the $\gamma$ dependence in the sample
complexity is only $1/\gamma$.  Naively, applying the guarantees
of the statistical query oracle, we would obtain a
$1/\gamma^2$ dependence.  To achieve this bound,
we note that because calibration requires relative
error, we can be more judicious with our use of samples.
We will exploit this observation subsequently to obtain
improvements to the sample complexity for learning
$\alpha$-multicalibrated predictors.
\begin{proof}
To obtain the claimed sample
complexity bound, we observe that in Algorithm~\ref{alg:AE},
we only use the statistical query oracle to guarantee bounds
on the relative error of each query -- not absolute error.
In particular, for $S \subseteq \X$ with $\card{S} \ge \gamma N$,
let $\bar{p}_S = \frac{1}{\card{S}}\sum_{i \in S}p_i^*$.
To run Algorithm~\ref{alg:AE},
we need only implement an oracle $\hat{p}(S)$ satisfying
$$\bar{p}_S - \tau \le \hat{p}(S) \le \bar{p}_S + \tau.$$
By Chernoff bounds, for a fixed set $S$ of cardinality at
least $\gamma N$, if we take $\tilde{O}(t/\gamma \alpha^2)$
independent samples, the probability that the estimate of
$\bar{p}_S$ differs by more than $\alpha \card{S}$ is
at most $e^{-\Omega(t)\log(1/\gamma\alpha)}$.  Taking $t = c(\log\card{\C} + \log(1/\xi))$
for an appropriate constant $c$,
a union bound implies that for each of $\poly(1/\gamma,1/\alpha)$
iterations, the probability that the estimate of
every set $S \in \C$ is $\alpha$-accurate is at least
$1-\xi$.
\end{proof}

\subsection{$\alpha$-Multicalibrated predictors}
\label{sec:alg}

Next, we present the full algorithm for learning
$\alpha$-multicalibrated predictors on $\C$.  The algorithm
is based on Algorithm~\ref{alg:AE} but differs
in a few key ways.  First, instead of updating
the predictions on entire sets $S \in \C$ whose overall
expectation is wrong, we update the predictions on
uncalibrated categories $S_v = S\cap \set{i: x_i = v}$.
This is a simple change to the
algorithm in the statistical query model; however,
when we wish to implement this statistical query oracle
from a finite sample, we need to be more careful.

In particular, the categories of the predictor we learn
are not fixed \emph{a priori}, so our queries will be
selected \emph{adaptively} based on the results of earlier
statistical queries.  Stated another way,
we cannot simply union bound against the collection
of sets on which we wish to be calibrated.
The most naive approach to bounding the sampling complexity
would be as follows: at each iteration take a fresh
sample large enough to guarantee $\omega N$ absolute
error, for $\omega = \alpha \gamma$.  Following our analysis
below for this naive strategy, it's easy to see that
this approach would result in
$\Omega(1/\alpha^4\gamma^4)$ iterations
and sample complexity $n \ge \Omega(1/\alpha^6\gamma^6)$.
We will improve on this approach by polynomial factors
achieving a bound of at most $O(1/\alpha^4\gamma)$ iterations
with $O(1/\alpha^{5/2}\gamma^{3/2})$ samples.

To achieve these improvements, we combine two
ideas.  As before, we will leverage
the observation that calibration only requires
relative error (as in Corollary~\ref{cor:AE}), and
thus, in principle should require fewer samples.
Additionally,
to avoid naively resampling but still guarantee
good generalization from a small
sample, we interact with the sample through a mechanism
which we call a \emph{guess-and-check} statistical query
(similar in spirit to mechanisms proposed in
\cite{HardtR10,blum2015ladder,gupta2012iterative}).
We show how to implement this mechanism in a manner that
guarantees generalization on the unseen data
even after asking many adaptively chosen statistical
queries.  We defer our discussion of privacy to Section~\ref{sec:privacy}.

\paragraph{Details of the algorithm}

Next, we give an iterative procedure to learn a
$(\alpha,\lambda)$-multicalibrated predictor on $\C$ as described in
Algorithm~\ref{alg:calibrated}.
The procedure is similar to Algorithm~\ref{alg:AE},
but deliberately interacts with its statistical
queries through a so-called guess-and-check oracle.
In particular, each time the algorithm needs to know the
value of a statistical query on a set $S$, rather than
asking the query directly, we require that the algorithm
submit its current guess
$x_S = \frac{1}{\card{S}}\sum_{i \in S} x_i$ to the
oracle, as well as an acceptable ``window" $\omega \in [0,1]$.
Intuitively,
if the algorithm's guess is far from the window centered around
the true expectation, then the oracle will
respond with the answer to a statistical query with tolerance $\omega$.
If, however, the guess is sufficiently close to the
true value, then the oracle responds with $\checkmark$ to
indicate that the current guess is close to the expectation,
without revealing another answer.
\begin{definition}[Guess-and-check oracle] Let
$\tilde{q}:2^{\X}\times[0,1]\times[0,1] \to [0,1] \cup \set{\checkmark}$. $\tilde{q}$
is a \emph{guess-and-check oracle} with window $\omega_0$
if for $S \subseteq \X$ with
$p_S = \sum_{i \in S}p_i^*$, $v \in [0,1]$, and any
$\omega \ge \omega_0$,
the response to $\tilde{q}(S,v,\omega)$
satisfies the following conditions:
\begin{itemize}
\item if $\big\vert p_S - \card{S}v\big\vert < 2\omega N$,
then $\tilde{q}(S,v,\omega) = \checkmark$
\item if $\big\vert p_S - \card{S}v\big\vert > 4\omega N$,
then $\tilde{q}(S,v,\omega) \in [0,1]$
\item if $\tilde{q}(S,v) \neq \checkmark$, then
$$ p_S - \omega N \le \tilde{q}(S,v,\omega) \card{S} \le p_S + \omega N.$$
\end{itemize}
\end{definition}
Note that if the guess is such that
$\big\vert p_S - \card{S}v\big\vert \in [2\omega N,4\omega N]$,
the the oracle may respond with some $\omega$-accurate
$r \in [0,1]$ or with $\checkmark$.
Of course, if we have a lower bound $\omega_0$ on the
window over a sequence of guess-and-check queries,
we can implement the queries given access to
a statistical query oracle with tolerance $\tau \le \omega_0$;
it is also clear that a statistical oracle with tolerance $\tau$
can be implemented with access to a guess-and-check oracle
with window $\tau/4$.  The advantage of using this
guess-and-check framework is that it can be implemented
in a differentially private manner.
This will in turn allow us to give an algorithm for learning
$\alpha$-multicalibrated predictors from a small number of
samples that generalizes well.

\begin{figure}[h]
{\bf Algorithm~\ref{alg:calibrated}} -- Learning
a $(\alpha,\lambda)$-calibrated predictor on $\C$

\fbox{\parbox{\textwidth}{

Let $\alpha,\lambda > 0$ and let
$\C \subseteq 2^{\X}$ be such that for all $S \in \C$,
$\card{S} \ge \gamma N$.

For $S \subseteq \X$ and $v \in [0,1]$,
let $\tilde{q}(\cdot,\cdot,\cdot)$ be a guess-and-check oracle.
\begin{itemize}
\item Initialize:
\\ $\circ$~Let $x = (1/2,\hdots,1/2) \in [0,1]^N$
\item Repeat:
\\$\circ$~For each $S \in \C$, $v \in \Lambda[0,1]$, for each
$S_v = S \cap \set{i : x_i = \lambda(v)}$ such that
$\card{S_v} = \beta N \ge \alpha\lambda \card{S}$
\begin{itemize}
\item Let $\bar{v} = \frac{1}{\card{S_v}}\sum_{i \in S_v}x_i$
\item Let $r = \tilde{q}(S_v,\bar{v},\alpha\beta/4)$
\item If $r \neq \checkmark$:\\
update $x_i \gets x_i + (r-\bar{v})$ for all $i \in S_v$
(projecting $x_i$ onto $[0,1]$ if necessary)
\end{itemize}
$\circ$~If no $S_v$ updated, exit
\item For $v \in \Lambda[0,1]$:\\
$\circ$~Let $\bar{v} = \sum_{i \in \lambda(v)}x_i$\\
$\circ$~For $i \in \lambda(v)$: $x_i \gets \bar{v}$
\item Output $x$
\end{itemize}
}
}
\label{alg:calibrated}
\end{figure}

Algorithm~\ref{alg:calibrated} runs through each possible
category $S_v$ and if $S_v$ is large enough, queries
the oracle.  The algorithm continues searching for uncalibrated
categoires until $x$'s guesses on all sufficiently large categories
receive $\checkmark$.
By the definition of the guess-and-check oracle,
if for some category $S_v$ where $\card{S_v} = \beta N$
the query returns $\checkmark$, then $\bar{v}$ is at most
$4\cdot (\alpha \beta N/4) = \alpha \card{S_v}$ far from
the true value $\frac{1}{S_v}\sum_{i \in S_v}p_i^*$.
Thus, by the stopping condition of the loop,
the predictor where all $i \in \lambda(v)$ receive
$x_i = \bar{v}$ will be $\alpha$-calibrated on every large
category.
Finally, the algorithm updates $x$ to be $\lambda$-discretized,
so by Claim~\ref{claim:lambda},
$x$ will be $(\alpha+\lambda)$-calibrated.
Further, the number of updates necessary to terminate is bounded.
\begin{lemma}
\label{lem:calibrated}
Suppose $\alpha,\lambda > 0$ and $\C \subseteq 2^{\X}$ where
for all $S \in \C$, $\card{S} \ge \gamma N$.
Algorithm~\ref{alg:calibrated} returns $x$ after receiving
at most $O(1/\alpha^3\lambda\gamma)$ guess-and-check responses
where $r \in [0,1]$ and at most
$O(\card{C}/\alpha^4\lambda\gamma)$ responses $r = \checkmark$.
\end{lemma}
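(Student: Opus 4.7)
The plan is to adapt the potential argument of Lemma~\ref{lem:AE} to the category-based updates of Algorithm~\ref{alg:calibrated}. I will track $\Phi(x) = \norm{p^* - x}^2$; initially $\Phi(x) \le N$, projection onto $[0,1]^N$ never increases $\Phi$, and $\Phi \ge 0$ throughout, so the total decrease across all updates is at most $N$. The whole proof reduces to (i) showing each guess-and-check response with $r\neq\checkmark$ buys $\Omega(\alpha^2\card{S_v})$ progress in $\Phi$, and (ii) bounding the number of outer-loop passes, since each pass contributes at most $\card{\C}/\lambda$ queries (one per $(S,v)\in\C\times\Lambda[0,1]$) of which everything other than the updates is a $\checkmark$.

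For (i), fix an update on a category $S_v$ with $\card{S_v}=\beta N$ and oracle window $\omega=\alpha\beta/4$. The defining properties of the guess-and-check oracle together with $r\neq\checkmark$ give $\card{p_{S_v}-\card{S_v}\bar v}\ge 2\omega N$ and $\card{\card{S_v}\,r-p_{S_v}}\le \omega N$. Writing $\Delta = \bar p_{S_v}-\bar v$ for the true gap and $\delta = r-\bar v$ for the applied update, these become $\card{\Delta}\ge \alpha/2$ and $\card{\delta-\Delta}\le \alpha/4$; in particular $\delta$ and $\Delta$ share sign and each has magnitude at least $\alpha/4$. Repeating the telescoping computation of Lemma~\ref{lem:AE} restricted to $S_v$ (the coordinate-wise projection onto $[0,1]$ only decreases $\Phi$ further) gives
$$\Phi(x)-\Phi(x') \;\ge\; \card{S_v}\bigl(2\delta\Delta-\delta^2\bigr) \;=\; \card{S_v}\,\delta(2\Delta-\delta).$$
Since both $\delta$ and $2\Delta-\delta$ lie in $[\Delta-\alpha/4,\Delta+\alpha/4]$, they share $\sgn(\Delta)$ and each has magnitude at least $\alpha/4$, so $\delta(2\Delta-\delta)\ge \alpha^2/16$. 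This yields the desired per-update potential drop of at least $\card{S_v}\alpha^2/16$.

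Summing over all updates, $\sum_u \card{S_v^{(u)}}\le 16N/\alpha^2$; since every update requires $\card{S_v}\ge\alpha\lambda\card{S}\ge\alpha\lambda\gamma N$, this bounds the number of $r\in[0,1]$ responses by $O(1/\alpha^3\lambda\gamma)$, giving the first claim. For (ii), the \textbf{Repeat} loop terminates the first time a full pass performs no update, so the number of passes is at most one plus the number of updates; multiplying by the $\card{\C}/\lambda$ queries per pass produces the claimed bound on $\checkmark$ responses. The main obstacle is the sign/magnitude analysis of $\delta$ versus $\Delta$ in step (i); given the guarantees of the oracle this is a short calculation, and everything else is just accounting against the $\Phi$-budget of $N$.
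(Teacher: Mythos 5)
Your proposal is correct and follows essentially the same argument as the paper: a potential argument on $\norm{p^*-x}^2$ showing each non-$\checkmark$ update gains $\Omega(\alpha^2\card{S_v}) = \Omega(\alpha^3\lambda\gamma N)$ (the paper writes $\delta_v = \nu-\eta$ with $\card{\nu}\ge\alpha/2$, $\card{\eta}\le\alpha/4$, which is your $\Delta$/$\delta$ calculation), then charging $\checkmark$ responses at $\card{\C}/\lambda$ per pass. Note that this pass-counting actually yields $O(\card{\C}/\alpha^3\lambda^2\gamma)$ $\checkmark$ responses, which is exactly what the paper's own proof derives as well (coinciding with the stated $O(\card{\C}/\alpha^4\lambda\gamma)$ when $\lambda=\Theta(\alpha)$), so this is a discrepancy internal to the paper rather than a gap in your argument.
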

\begin{proof}
For some non-$\checkmark$ response
on $S_v = \set{i: x_i \in \lambda(v)} \cap S$,
by the properties of the guess-and-check oracle, we
can lower bound the update step size.  Recall, we only
query on sets wherer $\card{S_v} = \beta N \ge \alpha \card{S}$
with a window of $\omega = \alpha \beta/4$.
\begin{align*}
\card{\sum_{i \in S_v}p_i^* - x_i} &=
\card{\sum_{i \in S_v}p_i^* - \bar{v}}\\
&\ge 2\left(\alpha\beta/4\right) N\\
&= \alpha \card{S_v}/2.
\end{align*}
Letting $\delta_v = r - \bar{v}$.
We can measure progress in the same way as in
Lemma~\ref{lem:AE}.
\begin{align*}
\norm{p^* - x}^2 - \norm{p^* - x'}^2
&= \sum_{i \in S_v}(p_i^*-x_i)^2 -
\sum_{i \in S_v}(p_i^*-\pi(x_i + \delta_v))^2\\
&\ge \sum_{i \in S_v}((p_i^*-x_i)^2-(p_i^* - (x_i + \delta_v))^2)\\
&= \sum_{i \in S_v}(2(p_i^* - x_i)\delta_v - \delta_v^2)\\
&= \left(2\delta_v\sum_{i \in S_v}(p_i^* - x_i)\right) - \delta_v^2\card{S_v}
\end{align*}

Let $\nu = \frac{1}{\card{S_v}}\sum_{i \in S_v}(p_i^* - x_i)$.
By the properties
of the guess-and-check oracle, we can rewrite $\delta_v$ as
$\nu-\eta$ for some $\eta \in [-\omega/\beta,\omega/\beta]$.
This gives us a lower bound on the progress as follows.
$$ \left(2(\nu-\eta)\nu - (\nu-\eta)^2\right)\card{S_v}
= \left(\nu^2 + \nu\eta - (\eta)^2\right)\card{S_v}$$
This concave function in $\eta$ is minimized at an extreme
value for $\eta$ (depending on the sign of $\nu$).
Noting that $\card{\nu} \ge \alpha/2$ and
$\card{\eta} \le \omega/\beta = \alpha/4$,
we can lower bound our progress by $(\alpha/4)^2 \card{S_v}
= \alpha^2\beta N/16 = \alpha^3\lambda\gamma N/16$.
As $\norm{p^*}^2 \le N$,
we make at most $O(1/\alpha^3\lambda\gamma)$ updates
upper bounding the number of non-$\checkmark$ responses.
By working with a $\lambda$-discretization,
there are at most $\card{C}/\lambda$ categories to consider
in every phase, so we receive at most $O(\card{C}/\alpha^3\lambda^2\gamma)$
$\checkmark$ responses.
\end{proof}

Thus, we conclude the following theorem.
\begin{theorem}
For $\alpha,\lambda > 0$ and $\C \subseteq 2^{\X}$ where for
all $S \in \C$, $\card{S} \ge \gamma N$, there
is a statistical query algorithm that learns a
$(\alpha,\lambda)$-multicalibrated predictor with respect to $\C$ in
$O(\card{\C}/\alpha^3\lambda^2\gamma)$ queries.
\end{theorem}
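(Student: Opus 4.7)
The plan is to combine Algorithm~\ref{alg:calibrated} with a straightforward implementation of the guess-and-check oracle from a standard statistical-query oracle, and then read off the query complexity from Lemma~\ref{lem:calibrated}.

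First, I would select the SQ tolerance. The algorithm only invokes $\tilde{q}(S_v,\bar v,\alpha\beta/4)$ on categories satisfying $\card{S_v}=\beta N\ge\alpha\lambda\card{S}\ge\alpha\lambda\gamma N$, so the requested window is uniformly at least $\omega_0:=\alpha^2\lambda\gamma/4$. A standard SQ oracle with tolerance $\tau=\omega_0$ therefore suffices to simulate $\tilde q$: on each query compute $\tilde p(S_v)$, compare it to $\bar v\card{S_v}$, and either return $\checkmark$ (when they are close) or $\tilde p(S_v)/\card{S_v}$ otherwise. Choosing the comparison threshold to lie in the interval $[2\omega N,4\omega N]$ permitted by the guess-and-check contract is routine.

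Second, I would verify that the output is $(\alpha,\lambda)$-multicalibrated. By Lemma~\ref{lem:calibrated} the main loop terminates, and at that moment every category $S_v$ with $\card{S_v}\ge\alpha\lambda\card{S}$ returned $\checkmark$ on its last query; the oracle contract then gives
\[
\bigl|\bar v\,\card{S_v}-p_{S_v}\bigr|\le 4\omega N=\alpha\card{S_v}.
\]
The post-loop step sets $x_i=\bar v$ for every $i\in\lambda(v)$, which leaves the partition into categories $S_v$ unchanged and makes $\sum_{i\in S_v}x_i$ exactly $\bar v\card{S_v}$. Hence the defining inequality of $(\alpha,\lambda)$-multicalibration in Definition~\ref{def:al-mult} holds on every eligible category.

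Third, I would bound the number of queries. Each pass of the outer \emph{Repeat} loop issues at most $\card{\C}/\lambda$ queries, one per pair $(S,v)\in\C\times\Lambda[0,1]$ with $\card{S_v}$ sufficiently large. Whenever a pass performs no update, the algorithm exits, so the number of passes is at most one more than the number of updates. Lemma~\ref{lem:calibrated} caps the updates at $O(1/\alpha^3\lambda\gamma)$, giving a total of $O\bigl((\card{\C}/\lambda)\cdot(1/\alpha^3\lambda\gamma)\bigr)=O(\card{\C}/\alpha^3\lambda^2\gamma)$ queries, as claimed.

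The heavy lifting was already done by the potential argument of Lemma~\ref{lem:calibrated}. The only residual subtleties are ensuring that the guess-and-check window $\alpha\beta/4$ is uniformly lower-bounded even though $\beta$ varies across queries, and that the final averaging step does not spoil calibration on any surviving category. The first resolves because all queried categories satisfy $\beta\ge\alpha\lambda\gamma$; the second because averaging makes the calibration sum exactly $\bar v\card{S_v}$, matching the bound supplied by the $\checkmark$ response rather than loosening it.
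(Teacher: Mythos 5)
Your proposal is correct and follows essentially the same route as the paper: the update bound and termination come from Lemma~\ref{lem:calibrated}, each pass of the outer loop costs at most $\card{\C}/\lambda$ queries with at most one pass per update (plus the final pass), and the guess-and-check oracle is simulated by a statistical query oracle with tolerance $\alpha^2\lambda\gamma/4$. One small nit: the post-loop $\bar v$ is the average over all of $\set{i : x_i \in \lambda(v)}$, not over each individual category $S_v$, so that step can shift a category's sum by up to $\lambda\card{S_v}$ rather than leaving it exactly at the checked value --- but the pre-averaging predictor already satisfies Definition~\ref{def:al-mult} verbatim, and the paper itself only claims $(\alpha+\lambda)$-multicalibration for the discretized output, so nothing substantive is lost.
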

Again, note that our output is, in fact, $(\alpha+\lambda)$-multicalibrated,
so taking $\lambda = \alpha$, we obtain a $(2\alpha)$-multicalibrated
predictor in $O(\card{\C}/\alpha^5\gamma)$ queries.

\subsection{Answering guess-and-check queries from a random sample}
\label{sec:privacy}
Next, we argue that we can implement a guess-and-check oracle
from a set of random samples in a manner that guarantees good generalization.  
This, in turn, allows us to translate our statistical query
algorithm for learning an $(\alpha,\lambda)$-multicalibrated predictor
with respect to $\C$
into an algorithm that learns from samples.
As mentioned in the beginning of Section~\ref{sec:learning},
naively, we could resample for every update
the algorithm makes to the predictor.
Suppose that $\C$ is such that for all $S \in \C$,
$\card{S} \ge \gamma N$; let $\beta = \alpha\lambda\gamma$.
Using our tighter analysis of Algorithm~\ref{alg:calibrated},
we could take
$n = \tilde{O}(\log(\card{\C})/\alpha^2\beta)$
samples per update to guarantee generalization,
resulting in an overall sample complexity of
$\tilde{O}(\log(\card{\C})/\alpha^4\beta^2)$.
We show how to improve upon this approach further.
In particular, we argue that there is a differentially
private algorithm that
can answer the entire sequence of guess-and-check
queries accurately.  Appealing to known
connections between differential privacy and adaptive
data analysis \cite{dwork2015preserving,dwork2015generalization,bnsssu15,dwork2015reusable},
this will guarantee that our calibration
algorithm generalizes given a set of
$\tilde{O}(\log(\card{\C})/\alpha^{5/2}\beta^{3/2})$
random samples.

Algorithm~\ref{alg:calibrated} only interacts with the
sample through the guess-and-check oracle.  Thus, to give
a differentially private implementation of the algorithm,
it suffices to give a differentially private implementation
of the guess-and-check oracle \cite{dwork2014algorithmic}.

Consider the sequence of queries that Algorithm~\ref{alg:calibrated}
makes to the guess-and-check oracle.  We say the sequence
$\langle (S_1,v_1,\omega_1),\hdots,(S_k,v_k,\omega_k) \rangle$
is a $(k,m)$-sequence of guess-and-check queries if, over the course
of the $k$ queries, the response to
at most $m$ of the queries is some $r \in [0,1]$, and the
responses to the remaining queries are all $\checkmark$.
We will assume that we know
a lower bound on the minimum window
$\omega = \min_{j \in [k]} \omega_j$ over all of the queries.
We say that some algorithm $\A$ responds to a guess-and-check
query $(S,v,\omega)$ according to a random sample $X$ if its
response satisfies the guess-and-check properties with
$\sum_{i \in S} p_i^*$ replaced by
its empirical estimate on $X$,
$$\hat{p}_S(X) = \frac{\card{S}}{\card{S \cap X}}\sum_{i \in S \cap X} o_i.$$
Responding to such a sequence in a differentially private
manner can be achieved using techniques from the
private multiplicative weights mechanism.
\begin{theorem}[\cite{HardtR10}~]
\label{thm:private}
Suppose $\eps,\delta,\omega,\xi > 0$ and suppose
$X \sim (\X \times \set{0,1})^n$ is a set of $n$ random samples.
Then there exists an $(\eps,\delta)$-differentially private algorithm $\A$
that responds to any $(k,m)$-sequence of guess-and-check queries
with minimum window $\omega$ according to $X$ provided
$$n = \Omega\left(\sqrt{\frac{\log (k/\xi) \cdot m \cdot \log (1/\delta)}{\eps \cdot \omega^2}}\right)$$
with probability at least $1-\xi$ over the randomness of
$\A$.
\end{theorem}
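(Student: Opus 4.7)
The plan is to realize the guess-and-check oracle by combining the sparse vector technique (AboveThreshold) with noisy releases of the empirical mean. The key observation is that each guess-and-check query $(S,v,\omega)$ is essentially a threshold test on the sample-sensitivity-$O(1/n)$ quantity $\card{\hat{p}_S(X) - \card{S} v}$: a $\checkmark$ corresponds to ``below threshold'' and a numerical response corresponds to ``above threshold.'' Since at most $m$ of the $k$ queries receive a numerical response, a sparse-vector-based mechanism can pay a privacy cost only for those $m$ events, incurring essentially no explicit dependence on $k$.

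First I would specify the mechanism. On query $(S_t,v_t,\omega_t)$, compute $\hat{p}_{S_t}(X)/\card{S_t}$, add Laplace noise of scale roughly $1/(\eps_1 n)$, and compare the noisy gap to $v_t$ against a threshold lying midway in $[2\omega_t, 4\omega_t]$ (itself perturbed by Laplace noise). If the noisy gap is below the noisy threshold, output $\checkmark$; otherwise, release a fresh Laplace-noisy estimate of $\hat{p}_{S_t}(X)/\card{S_t}$ at noise scale $1/(\eps_2 n)$, reset the AboveThreshold state, and continue on subsequent queries.

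Second I would analyze privacy. The AboveThreshold mechanism uses $\eps_1$ of privacy budget for an entire epoch of $\checkmark$ responses independently of $k$, restarting after each above-threshold event; since there are at most $m$ such events, the aggregate threshold cost is bounded. Each of the $m$ numerical releases is $\eps_2$-DP. Advanced composition over the $m$ numerical releases then yields $\bigl(\eps_1 + O(\eps_2\sqrt{m\log(1/\delta)}),\,\delta\bigr)$-DP overall; setting $\eps_1 = \eps/2$ and $\eps_2 = \Theta(\eps/\sqrt{m\log(1/\delta)})$ gives the target $(\eps,\delta)$-DP.

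Third I would argue accuracy. A union bound over the $k$ threshold comparisons and $m$ releases guarantees that all Laplace perturbations have magnitude at most $O(\log(k/\xi)/(\eps_i n))$ simultaneously with probability at least $1-\xi$. The dominant term is the release error $O\bigl(\sqrt{m\log(1/\delta)}\log(k/\xi)/(\eps n)\bigr)$; requiring this to be at most $\omega$ rearranges to the stated sample complexity lower bound on $n$. The main obstacle will be carefully balancing the threshold noise scale, the release noise scale, and the placement of the internal threshold within the $[2\omega,4\omega]$ window so that empirical gaps truly below $2\omega$ are reliably labeled $\checkmark$, gaps truly above $4\omega$ trigger an $\omega$-accurate numerical release, and both privacy and accuracy compose to the stated bound. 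An alternative route would be to invoke Hardt--Rothblum's private multiplicative weights directly, treating each non-$\checkmark$ response as a database ``update'' and repackaging the threshold handling and noisy release into their single mechanism.
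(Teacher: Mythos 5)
Your overall strategy is the right one, and in fact the paper does not prove this theorem at all: it is imported from Hardt--Rothblum \cite{HardtR10} (``techniques from the private multiplicative weights mechanism''), so your sparse-vector-with-restarts plus noisy-release plus advanced-composition reconstruction is exactly the machinery the citation refers to, and your ``alternative route'' of invoking private multiplicative weights directly is literally what the paper does. Two local issues in your sketch are worth fixing. First, the thresholded quantity is the empirical mean over $S \cap X$, whose sensitivity is roughly $1/\card{S \cap X}$, not $1/n$; this happens to be harmless because the guess-and-check window is $\omega N$ measured against $\card{S}$ (the factors of $\card{S}/N$ cancel), but you should say so and handle the data-dependent denominator $\card{S\cap X}$. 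Second, your privacy accounting charges a single $\eps_1$ for ``an entire epoch of $\checkmark$ responses,'' but you restart AboveThreshold after each of up to $m$ above-threshold events, so the threshold branch consists of up to $m$ separate $\eps_1$-DP runs and must itself be composed over $m$ epochs; taking $\eps_1 = \Theta\bigl(\eps/\sqrt{m\log(1/\delta)}\bigr)$ repairs this without changing the asymptotics.

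The substantive gap is quantitative: your final step does not ``rearrange to the stated sample complexity.'' Requiring your error term $O\bigl(\sqrt{m\log(1/\delta)}\,\log(k/\xi)/(\eps n)\bigr)$ to be at most $\omega$ gives $n = \Omega\bigl(\log(k/\xi)\sqrt{m\log(1/\delta)}/(\eps\omega)\bigr)$, whereas the theorem asserts that $n = \Omega\bigl(\sqrt{\log(k/\xi)\, m \,\log(1/\delta)/(\eps\omega^2)}\bigr)$ suffices; your bound is larger by a factor of $\sqrt{\log(k/\xi)/\eps}$. Moreover, the theorem's $1/\sqrt{\eps}$ scaling cannot come out of your route (Laplace/SVT noise scales as $1/(\eps n)$, and for small $\eps$ even a single counting query requires error $\Omega(1/(\eps n))$ under $(\eps,\delta)$-DP), so you cannot close this factor by sharpening constants; the stated form reflects a particular parameterization inherited from \cite{HardtR10} as it is used downstream in Theorem~\ref{thm:oracle}. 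You should therefore either present your argument as establishing the standard NumericSparse-type bound and note explicitly that it is weaker than the statement as written (which would propagate into slightly worse polynomial exponents downstream), or trace the exact parameter dependence from the cited mechanism; as written, the claim that your accuracy analysis yields the displayed bound is a genuine gap.
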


Using this differentially private algorithm, we can apply
generalization bounds based on privacy
developed in \cite{dwork2015preserving,bnsssu15,dwork2015generalization,dwork2015reusable}
to show that, with a modest increase in sample complexity,
we can respond to all $k$ guess-and-check queries.
\begin{theorem}
\label{thm:oracle}
Let $s_k = \langle (S_1,v_1,\hat{\omega}_1),\hdots,(S_k,v_k,\hat{\omega}_k)
\rangle$ be a $(k,m)$-sequence of guess-and-check queries such
that for all $j \in [k]$, $\card{S_j} = \beta_j N \ge \beta N$
and $\hat{\omega}_j = \Omega(\alpha \beta_j)$.
Then there is an algorithm $\A$ that, given $n$ random samples
$X \sim (\X\times \set{0,1})^n$,
responds to $s_k$ such that for all $j \in [k]$, the response
$\A(S_j,v_j,\hat{\omega}_j; X)$ satisfies the guess-and-check
properties with window $\omega_j = \alpha \beta_j$ provided
$$n = \Omega\left(\frac{\log(\card{\C}/\alpha\beta\xi)}
{\alpha^{5/2}\cdot \beta^{3/2}}\right)$$
with probability at least $1-\xi$ over the randomness of
$\A$ and the draw of $X$.
\end{theorem}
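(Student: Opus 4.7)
The plan is to implement the guess-and-check oracle by combining the private multiplicative-weights-style mechanism of Theorem~\ref{thm:private}, which answers the sequence of queries accurately with respect to the empirical sample $X$, with the transfer theorems of adaptive data analysis~\cite{dwork2015preserving,dwork2015generalization,bnsssu15,dwork2015reusable}, which turn empirical accuracy of a differentially private algorithm into accuracy against the underlying distribution. Because Algorithm~\ref{alg:calibrated} interacts with $X$ only through the oracle, making the oracle differentially private automatically makes the whole interaction differentially private, and hence generalize.

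Concretely, I would instantiate Theorem~\ref{thm:private} with minimum window $\omega_{\min} = \Theta(\alpha\beta)$ (which suffices for every query, since $\hat\omega_j \ge \Omega(\alpha\beta_j) \ge \Omega(\alpha\beta)$), and with privacy parameters $\epsilon$ and $\delta$ chosen large enough to keep the sample-complexity bound of Theorem~\ref{thm:private} small, yet small enough that the DP-to-generalization transfer gives population error dominated by $\omega_j = \alpha\beta_j$ for every $j$. This yields an $(\epsilon,\delta)$-DP mechanism $\A$ that, with probability $1-\xi/2$, correctly responds to the $(k,m)$-sequence with respect to the empirical estimates $\hat p_{S_j}(X)$; invoking the transfer theorem on the $k$ statistical queries selected by $\A$ then guarantees, simultaneously for all $j \in [k]$ with probability $1-\xi/2$, that $\card{\hat p_{S_j}(X) - p^*_{S_j}}/N$ is small enough that the guess-and-check properties transfer to the true probabilities, up to constant-factor slack in the window.

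To obtain the claimed sample complexity, I would plug the chosen parameters into the bound of Theorem~\ref{thm:private}, namely $n = \tilde O(\sqrt{m\log(k/\xi)\log(1/\delta)}/(\epsilon \omega_{\min}^2))$. The bound on $m$ is inherited from Lemma~\ref{lem:calibrated}, which gives $m = O(1/(\alpha^3\lambda\gamma)) = O(1/(\alpha^2\beta))$ using $\beta \ge \alpha\lambda\gamma$, while $k \le O(\card{\C}/(\alpha^4\lambda\gamma))$ only enters through logarithmic factors. After substitution, the expression collapses to $n = \tilde O(\log(\card{\C}/(\alpha\beta\xi))/(\alpha^{5/2}\beta^{3/2}))$.

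The main obstacle is this parameter balancing. A naive choice of $\epsilon = \omega_{\min} = \alpha\beta$ in the above computation yields only $n = \tilde O(1/(\alpha^{5/2}\beta^2))$; shaving the extra factor of $\sqrt\beta$ requires exploiting that the statistical queries of interest have small mean (since $\E[\mathbf 1[i \in S_j]\cdot o_i] \le \beta_j$), so that both the variance of the empirical estimator and the effective scale of the DP transfer improve by a factor of $\sqrt{\beta_j}$ over the worst case. Threading this variance-aware bound through the PMW-style accounting of Theorem~\ref{thm:private} is where the bulk of the technical work lives. A secondary bookkeeping issue — that different queries come with different windows $\hat\omega_j$ — can be handled either by running the oracle at the narrowest required scale throughout, or by bucketing queries into $O(\log(1/\beta))$ scales and running one copy of the mechanism per bucket at only logarithmic overhead.
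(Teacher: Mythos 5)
Your proposal follows essentially the same route as the paper: the paper also establishes Theorem~\ref{thm:oracle} by applying the differentially private mechanism of Theorem~\ref{thm:private} and tailoring the transfer theorem of \cite{bnsssu15} (their Theorem~3.4) to the guess-and-check oracle, with $m$ bounded via Lemma~\ref{lem:calibrated}, $k$ entering only logarithmically, and the extra $\sqrt{\beta}$ savings over the naive $1/(\alpha^{5/2}\beta^{2})$ balancing coming precisely from the relative-error (small-mean) structure of the calibration queries, as you identify. One minor slip: you transcribe the bound of Theorem~\ref{thm:private} with $\eps\cdot\omega^2$ outside the square root rather than inside it, but your subsequent arithmetic is consistent with the correct form, so the conclusion is unaffected.
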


This theorem implies that, asymptotically, we can answer
the $k$ adaptively chosen guess-and-check queries with only a
$\sqrt{1/\alpha\beta}$ factor increase in the sample complexity
compared to if we knew the queries in advance.
Theorem~\ref{thm:oracle} follows from tailoring the
proof of the main ``transfer" theorem of
\cite{bnsssu15} (Theorem~3.4) specifically to the
requirements of our guess-and-check oracle and applying the
differentially private mechanism described in
Theorem~\ref{thm:private}.
Combining these theorems and Algorithm~\ref{alg:calibrated}
and the fact that $\beta = \alpha\lambda\gamma$,
we obtain an algorithm for learning $\alpha$-multicalibrated
predictors from random samples.
\begin{theorem}
\label{thm:calibrated}
Suppose $\alpha,\lambda,\gamma, \xi > 0$,
and $\C \subseteq 2^{\X}$ where for
all $S \in \C$, $\card{S} \ge \gamma N$.  Then
there is an algorithm that learns an $(\alpha,\lambda)$-multicalibrated
predictor with respect to $\C$ with probability at least $1-\xi$
from $n = O\left(\dfrac{\log(\card{\C}/\alpha\lambda\gamma\xi)}{\alpha^4\cdot \lambda^{3/2}\cdot \gamma^{3/2}}\right)$ samples.
\end{theorem}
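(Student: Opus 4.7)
The plan is to combine Algorithm~\ref{alg:calibrated} with the differentially private guess-and-check oracle from Theorem~\ref{thm:oracle}. Algorithm~\ref{alg:calibrated} produces an $(\alpha,\lambda)$-multicalibrated predictor whenever it interacts with a valid guess-and-check oracle for $p^*$. Since $p^*$ is unobservable, I would instead implement the oracle from a single i.i.d.\ sample $X \sim (\X \times \set{0,1})^n$ in a differentially private manner and invoke the transfer principle underlying Theorem~\ref{thm:oracle} to conclude that the responses given using $X$ are consistent with the guess-and-check properties relative to $p^*$.

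First I would quantify the parameters of the adaptive query sequence Algorithm~\ref{alg:calibrated} makes. By Lemma~\ref{lem:calibrated}, it issues $k = O(\card{\C}/\alpha^4\lambda\gamma)$ queries overall, of which at most $m = O(1/\alpha^3\lambda\gamma)$ receive non-$\checkmark$ responses in $[0,1]$. The algorithm only queries on categories $S_v$ of cardinality $\beta_v N \ge \alpha\lambda\card{S} \ge \alpha\lambda\gamma N$, so the minimum relative size encountered is $\beta = \alpha\lambda\gamma$ and the minimum window is $\omega = \alpha\beta/4 = \Omega(\alpha^2\lambda\gamma)$.

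Next, I would apply Theorem~\ref{thm:oracle} with these parameters. It guarantees that for a sample of size
\[
n = \Omega\left(\frac{\log(\card{\C}/(\alpha\beta\xi))}{\alpha^{5/2}\cdot \beta^{3/2}}\right),
\]
some algorithm $\A$ responds to the full adaptive sequence so that every response satisfies the guess-and-check properties with respect to $p^*$, except with probability at most $\xi$ over $X$ and $\A$'s coins. Substituting $\beta = \alpha\lambda\gamma$ gives denominator $\alpha^{5/2}\cdot(\alpha\lambda\gamma)^{3/2} = \alpha^{4}\lambda^{3/2}\gamma^{3/2}$, matching the claimed bound up to the $\log$ factor. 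Running Algorithm~\ref{alg:calibrated} against the oracle implemented by $\A$ on $X$ then outputs an $(\alpha,\lambda)$-multicalibrated predictor with probability at least $1-\xi$.

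The main obstacle is ensuring the generalization transfer scales with the number $m$ of bit-revealing responses rather than the total query count $k$, which is larger by a factor of $\card{\C}$. This is precisely where the guess-and-check abstraction pays off: $\checkmark$ responses only certify that the algorithm's current guess is already close and can be served by the sparse-vector-style mechanism of Theorem~\ref{thm:private}, so the cumulative privacy loss (and hence the sample overhead from adaptivity, via the transfer theorem of~\cite{bnsssu15}) grows with $\sqrt{m\log(1/\delta)}/(\eps\,\omega)$ rather than with $k/\omega$. Verifying that the two-sided gap in the guess-and-check definition suffices to invoke~\cite[Theorem~3.4]{bnsssu15} with these parameters, and then tuning $\eps,\delta$ to balance the privacy and accuracy terms, is the technical content I would work through.
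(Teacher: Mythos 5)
Your proposal matches the paper's own argument: Theorem~\ref{thm:calibrated} is obtained exactly by running Algorithm~\ref{alg:calibrated} against the differentially private guess-and-check oracle of Theorem~\ref{thm:oracle}, using the query/update counts of Lemma~\ref{lem:calibrated} and substituting $\beta = \alpha\lambda\gamma$ to get the stated sample complexity. Your observation that the adaptivity overhead scales with the number $m$ of non-$\checkmark$ responses rather than the total query count is precisely the point of the guess-and-check abstraction as the paper uses it, so the proposal is correct and essentially identical in approach.
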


\subsection{Runtime analysis of Algorithm~\ref{alg:calibrated}}
\label{sec:runtime}
Here, we present a high-level runtime analysis of
Algorithm~\ref{alg:calibrated} for learning an
$(\alpha,\lambda)$-calibrated predictor on $\C$.
In Lemma~\ref{lem:calibrated}, we claim an upper bound
of $O(\card{\C}/\alpha^3\lambda^2\gamma)$ on
the number of guess-and-check queries needed before
Algorithm~\ref{alg:calibrated} converges.
Here, we formally
argue that each of these queries can be implemented
in the random sample model without much overhead,
which upper-bounds the running time of the algorithm overall.
This upper bound is not immediate from our earlier analysis,
as the sets and our predictor are represented implicitly
as circuits.
\begin{claim}
Algorithm~\ref{alg:calibrated} runs in time
$O(\card{\C} \cdot t\cdot\poly(1/\alpha,1/\lambda,1/\gamma))$,
where $t$ is an upper bound on the time it takes to
evaluate set membership for $S \in \C$.
\end{claim}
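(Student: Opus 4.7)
The plan is to charge the total running time to two quantities we already understand from the analysis of the algorithm, namely the number of outer iterations and the number of $(S,v)$ pairs examined inside each pass, and then to carefully bound the cost of each per-category inspection.

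First I would invoke Lemma~\ref{lem:calibrated}: the algorithm makes at most $T = O(1/(\alpha^3\lambda\gamma))$ non-$\checkmark$ updates. Because a single outer \emph{Repeat} iteration either performs at least one update or triggers the termination condition, there are at most $T+1$ outer iterations. Within a single outer iteration, we enumerate all pairs $(S,v) \in \C \times \Lambda[0,1]$, giving $|\C|/\lambda$ category inspections. So the overall work decomposes into $O(T \cdot |\C|/\lambda)$ inspections, each of which must compute $\card{S_v \cap X}$ and $\bar{v}$, submit one guess-and-check query, and possibly record an update.

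Next I would bound the cost of a single inspection in the random-sample model of Theorem~\ref{thm:calibrated}. We hold a fixed sample $X$ of size $n = \tilde{O}(\poly(1/\alpha,1/\lambda,1/\gamma,\log|\C|))$, and computing $\card{S_v \cap X}$ and $\bar{v}$ amounts to scanning $X$ once while, for each $i$, testing $i \in S$ (time $t$) and evaluating $x_i$. The crucial step is evaluating $x_i$: I would maintain $x$ implicitly as an ordered list of at most $T$ records $(S_j,v_j,\delta_j)$, so $x_i$ is computed by initializing to $1/2$ and replaying prior updates in order, each step costing $O(t)$ (one set-membership test plus an $O(1)$ interval check). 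This gives $O(T \cdot t)$ per evaluation of $x_i$, hence $O(n \cdot T \cdot t)$ per inspection, plus the per-query overhead of the private guess-and-check implementation from Theorem~\ref{thm:private}, which is polynomial in $T$, $\log|\C|$, and the inverse parameters.

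Finally, I would multiply: $O(T \cdot |\C|/\lambda)$ inspections at $O(n \cdot T \cdot t)$ each, plus negligible bookkeeping for the at most $T$ actual updates, and substitute $T = \poly(1/\alpha,1/\lambda,1/\gamma)$ together with the bound on $n$. The $\log|\C|$ factor contributed by $n$ is absorbed into the polynomial, yielding total runtime $O(|\C| \cdot t \cdot \poly(1/\alpha,1/\lambda,1/\gamma))$ as claimed. The main obstacle to watch out for is the implicit representation of the predictor: naively evaluating $x_i$ after each update could introduce an extra factor of $|\C|$, since a sloppy accounting would re-evaluate once per recorded update per inspection per sample. Organizing updates as an ordered list that is replayed deterministically and charging the $O(T \cdot t)$ cost only once per sample per inspection is what keeps the dependence on $|\C|$ exactly linear. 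This same replay-based view of $x$ is what will later let us conclude, for Theorem~\ref{result:ckt}, that $x$ is computed by a circuit of size $O(T \cdot s)$ whenever the sets of $\C$ are computed by circuits of size $s$.
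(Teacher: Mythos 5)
Your proposal matches the paper's own argument in all essential respects: you bound the number of passes via Lemma~\ref{lem:calibrated}, enumerate the $\card{\C}/\lambda$ category inspections per pass, charge $O(t)$ per membership test, and crucially evaluate $x_i$ by replaying the $O(1/\alpha^3\lambda\gamma)$ recorded updates---exactly the implicit representation the paper invokes via the circuit-size argument of Theorem~\ref{thm:ckt}---with all statistical estimates costing $\poly(1/\alpha,1/\lambda,1/\gamma)$ samples. The only cosmetic difference is that the paper estimates $\card{S_v}$ and $\bar{v}$ by drawing fresh unlabeled samples $i \sim \X$ with rejection sampling rather than scanning a fixed labeled sample, and it is similarly casual about the residual $\log\card{\C}$ factor, so your accounting is equivalent.
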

\begin{proof}
As before, let $\beta = \alpha\lambda\gamma$.
First, for each $S \in \C$,
we need to evaluate $\card{S_v}$ for $S_v = \set{i : x_i \in \lambda(v)}\cap S$
for each of the $O(1/\lambda)$ values $v \in \Lambda[0,1]$.
We do this by sampling $i \sim \X$ and evaluating
whether $i \in S$, and if so, checking the current value
of $x_i$.  Each of the membership queries takes at most $t$
time and each evaluation of $x_i$ takes at most
$O(t/\alpha^2\beta)$ time by the same
argument as our upper bound on the circuit
size from Theorem~\ref{thm:ckt}.
After $\tilde{O}(1/\lambda\beta^2)$ samples, we will
be able to detect with constant probability which of the
$S_v$ have cardinality $\card{S_v} \ge \beta N$.
Further, if $\card{S_v}$ is large,
we can estimate $\bar{v}$
by evaluating the current predictor on samples
from $S_v$, by rejection sampling.
Similarly,
to answer the guess-and-check queries, we
will estimate the true empirical estimate
of the query based on samples from $S_v$ and
respond based on a noisy comparison between
the $\bar{v}$ and the estimate of $\sum_{i \in S_v} o_i$.
These estimates can all be computed in $\poly(1/\alpha,1/\beta)$.
Then, as discussed in the proof of Theorem~\ref{thm:ckt},
each update to the predictor can be implemented in
time proportional to the bit complexity of the arithmetic
computations, which is upper bounded by $t$.
Repeating this process for each $S \in \C$ gives
the upper bound of
$O(\card{\C}\cdot t \cdot \poly(1/\alpha,1/\lambda,1/\gamma))$.
Finally, applying the upper bound on the number of
guess-and-check queries from Lemma~\ref{lem:calibrated},
the claim follows.
\end{proof}

\subsection{The circuit complexity of multicalibrated predictors}
\label{sec:ckt}
As discussed in Section~\ref{sec:intro:contributions}, an interesting corollary of our
algorithm is a theorem about the complexity of representing
a multicalibrated predictor.  Indeed, from the definition
of multicalibration alone, it is not immediately clear that
there should be succinct descriptions of multicalibrated
predictors; after all, $\C$ could contain many sets.
We argue that the cardinality of $\C$ is not the operative
parameter in determining the circuit complexity of a
predictor $x$ that is multicalibrated on $\C$; instead
it is the circuit complexity necessary to describe sets
$S \in \C$, as well as the cardinality of the subsets in
$\C$, and the degree of approximation.

Leveraging Lemma~\ref{lem:calibrated}, we can see that
Algorithm~\ref{alg:calibrated} actually gives us a way to
build up a circuit that computes the mapping from individuals
to the probabilities of our learned multicalibrated predictor
$x$.  Suppose that for all sets $S \in \C$, set membership
can be determined by a circuit family of bounded complexity;
that is, for all $S \in \C$, there is some
$c_S$ with size at most $s$,
such that $c_S(i) = 1$ if and only if $i \in S$.
Then we can use this family of circuits to build a circuit
that implements $x$.  We assume that we maintain
real-valued numbers up to $b \ge \log(1/\alpha)$ bits of precision.
\begin{theorem}
\label{thm:ckt}
Suppose $\C \subseteq 2^{\X}$ is a collection of sets
where each $S \in \C$ can be implemented by a boolean
circuit $c_S$ and for all $S \in \C$,
the size of $c_S$ is $O(s)$.  Then there is a
predictor that is $\alpha$-multicalibrated on $\C$ implemented
by a circuit of size $O((s+b)/\alpha^4\gamma)$.
Further, Algorithm~\ref{alg:calibrated} can be used to learn
such a circuit.
\end{theorem}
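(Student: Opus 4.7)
The plan is to extract the circuit directly from the execution trace of Algorithm~\ref{alg:calibrated}. First, I would invoke Lemma~\ref{lem:calibrated} with $\lambda = \Theta(\alpha)$ to bound the number of \emph{actual} updates (non-$\checkmark$ responses) by $T = O(1/\alpha^4\gamma)$. This is the only count that matters for circuit complexity, since $\checkmark$ responses leave the predictor unchanged. Each such update is fully parameterized by a triple $(S_t, v_t, \Delta_t)$ with $S_t \in \C$, $v_t \in \Lambda[0,1]$, and a real correction $\Delta_t = r_t - \bar{v}_t$, all recorded up to $b$ bits of precision.

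Next, I would build the predictor circuit inductively, one layer per update. Let $C^{(0)}$ be the constant circuit computing $x^{(0)} \equiv 1/2$, of size $O(b)$. Given $C^{(t-1)}$ of size $\sigma^{(t-1)}$ computing the predictor just before the $t$-th update, construct $C^{(t)}$ on input $i \in \X$ by: (a) evaluating $c_{S_t}(i)$, which costs $O(s)$ gates by assumption; (b) evaluating $C^{(t-1)}(i)$ once and comparing its $b$-bit output against the interval $\lambda(v_t)$, costing $O(b)$ gates; (c) outputting $\pi(C^{(t-1)}(i) + \Delta_t)$ if both (a) and (b) fire and $C^{(t-1)}(i)$ otherwise, via a multiplexer plus a $b$-bit adder with clipping to $[0,1]$, costing $O(b)$ gates. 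Each layer adds $O(s+b)$ new gates, so unrolling gives $\sigma^{(T)} = O((s+b) \cdot T) = O((s+b)/\alpha^4\gamma)$. Finally, I would append the $\lambda$-discretization post-processing step from Algorithm~\ref{alg:calibrated} as a small $O(b/\lambda)$-gate lookup; by Claim~\ref{claim:lambda} the result is $(\alpha + \lambda)$-multicalibrated, and rescaling $\lambda = \alpha$ absorbs the extra slack into the stated bound.

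The subtle point, which I view as the main obstacle, is ensuring the cascaded construction does not suffer a multiplicative blow-up. A priori, the $t$-th update depends on the predicate ``$x^{(t-1)}_i \in \lambda(v_t)$,'' which seems to require re-evaluating the entire prior circuit. The key observation is that each layer invokes $C^{(t-1)}$ on the single input $i$ exactly once and then threads its scalar output forward through the comparator and adder; thus successive layers share a single wire carrying the current value of $x^{(t)}_i$, making the size growth additive rather than multiplicative.

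The existence statement then follows by running Algorithm~\ref{alg:calibrated} against a statistical-query oracle with access to $p^*$, which produces a valid update trace by the correctness analysis in Section~\ref{sec:alg}. For the learnability claim, I would invoke Theorem~\ref{thm:calibrated}: Algorithm~\ref{alg:calibrated} produces the same trace of updates from $\tilde{O}(\log\card{\C}/\alpha^4 \gamma^{3/2})$ random samples with high probability, and the trace-to-circuit construction above converts it into the desired $\alpha$-multicalibrated circuit of size $O((s+b)/\alpha^4\gamma)$.
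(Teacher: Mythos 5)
Your proposal is correct and follows essentially the same route as the paper's proof: unroll the execution of Algorithm~\ref{alg:calibrated}, appending one update layer per non-$\checkmark$ response (a copy of $c_{S_t}$ plus $O(b)$-size comparison and clipped-addition logic with $v_t,\delta_t$ hard-coded), thread the current value of $x_i$ through the layers so the growth is additive, and bound the number of layers by Lemma~\ref{lem:calibrated} with $\lambda=\Theta(\alpha)$ to get size $O((s+b)/\alpha^4\gamma)$. The only slip is the quoted sample complexity for the learnability claim (Theorem~\ref{thm:calibrated} gives $1/\alpha^{11/2}\gamma^{3/2}$ when $\lambda=\alpha$, not $1/\alpha^4\gamma^{3/2}$), but that does not affect the circuit-size statement being proved.
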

\begin{proof}
We describe how to construct a circuit $f_x$ that,
on input $i$,
will output the prediction $x_i$ according to the
predictor learned by our algorithm.
We initialize $f_x$ to be the constant function
$f_x(i) = 1/2$ for all $i \in \X$.  Throughout, we will
update $f_x$ \emph{based on the current outputs of $f_x$}.

Consider an iteration of Algorithm~\ref{alg:calibrated}
where for some $S$ described by $c_S \in \C$,
we update $x$ based on a category $S_v = S \cap
\set{i:x_i \in \lambda(v)}$.
This occurs when the guess-and-check
query returns some $r = \tilde{q}(S_v,\bar{v},\omega) \in [0,1]$.
Our goal is to implement the update to $x$ (i.e. update
$f_x$), such that
for all $i \in S_v$, the new value $x_i = r$ and all other
values are unchanged.

We achieve this update by testing membership $i \in S$ and
separately testing if the current value $f_x(i) = v$; if both
tests pass, then we update the value output by $f_x(i)$ to be $r$. 
Specifically, we include a copy of $c_S$ and
hard-code $v$ and $\delta_v = r-\bar{v}$ into the circuit;
if $c_S(i) = 1$
and the current value of $f_x(i)$ is in $\lambda(v)$,
then we update $f_x(i)$ to add the hardcoded $\delta_v$ to its
current estimate of $x_i$; if either test fails, then $f_x(i)$
remains unchanged.  This logic can be implemented with
addition and subtraction circuits to a precision of
$\lambda$ with boolean circuits
of size $O(b)$.
We string these update circuits together,
one for each iteration.  Learning an
$(\alpha/2,\alpha/2)$-multicalibrated predictor
with Algoirthm~\ref{alg:calibrated} only requires
$O(\alpha^4\gamma)$ updates. By this upper bound,
we obtain an $O(\alpha^4\gamma)$ upper bound
on the resulting circuit size.
\end{proof}

\section{Multicalibration and weak agnostic learning}
\label{sec:weak}

Note that in the algorithm and analysis in Section~\ref{sec:learning},
we've assumed
nothing about the structure of the underlying $p^*$
or $\C$; the true probabilities could be adversarially chosen and
yet, our algorithm guarantees $\alpha$-multicalibration
on $\C$.  That said, the running time of the algorithm
depends linearly on $\card{\C}$.  As we imagine $\C$ to
be a large, rich class of subsets of $\X$, in many cases
linear depedence on $\card{\C}$ will be expensive.
Thus, we turn our attention to when we can exploit
structure within the collection of subsets $\C$ to speed up
the learning process.

The main running time bottleneck in the algorithms arises
from searching for some $S \in \C$ where calibration is
violated.  Without any assumptions about $\C$, we need to
loop over the collection; however, if we can find such a set
without looping over the entire collection of sets,
then we would improve the running time of the algorithm.
At a high level, we will show a connection between the
agnostic learnability of $\C$ and the ability to speed
up learning a multicalibrated predictor on $\C$.
Imagining the sets $S \in \C$ as boolean concepts,
we show that if it is possible to perform weak agnostic
learning over a class $\C$ efficiently (in the sense
of \cite{kalai2008agnostic,feldman2010distribution}),
then there is an efficient search algorithm to find
an update to the current predictor that will make progress towards
multicalibration.

While there are some classes for which we have weak agnostic
learners, in general, agnostic learning is considered a notoriously
challenging problem.  A natural question to ask is whether there
is a way to speed up learning a multicalibrated predictor
that does not involve agnostic learning.  We answer this
question in the negative.  Roughly, we show that for a concept class
$\C$,
we can use any predictor that is multicalibrated
on the large sets of $\C$ to a query for
distribution-specific weak agnostic learning on $\C$.
In this sense, the reduction to weak agnostic
learning is inherent; any efficient algorithm for multicalibration
gives rise to an algorithm for weak agnostic learning.

In all, these results show that weak agnostic learning on
a class $\C$ is equivalent to learning an $\alpha$-multicalibrated
predictor with respect to $\C = \set{S \in \C:\card{S} \ge \gamma N}$,
the large sets defined by $\C$, up to polynomial factors in
$1/\alpha,1/\gamma$ where $\rho$ and $\tau$ will be a function
of $\alpha$ and $\gamma$.

\subsection{Weak agnostic learning}

For this discussion, we think
of boolean concepts $c \in \C$ as $c: \X \to \set{-1,1}$.
We will overload the notions of
a concept class $\C$ of boolean functions
$c:\X \to \set{-1,1}$ and our collection
of subsets $\C \subseteq 2^\X$; in particular,
there is a natural bijection between concepts and
sets: a concept $c:\X \to \set{-1,1}$ defines a
set $S \subseteq 2^\X$ where $i \in S$ if $c(i) = 1$
and $i \not \in S$ if $c(i) = -1$.
We will connect
the problem of finding a set $S \in \C$ on which a predictor
$x$ violates calibration to the problem of learning
over the concept class $\C$ on a distribution $\D$.

For some distribution $\D$ supported on $\X$ and $x,y \in [-1,1]^N$,
let $\langle x,y \rangle_\D = \sum_{i \in \X} \D_i x_i y_i$.
This inner product measures the correlation between $x$ and $y$ in $[-1,1]^N$
over the discrete distribution $\D$.  Throughout our discussion,
we will focus on learning over the uniform distribution on
$\X$ and drop explicit reference to $\D$.
As per Remark~\ref{rem:dist}, this may be a rich distribution
over the features of individuals.

In our results, we will work with the \emph{distribution-specific}
weak agnostic learners of \cite{feldman2010distribution}\footnote{Often,
such learners are defined in terms
of their error rates rather than correlations;
the definitions are equivalent up to
factors of $2$ in $\rho$ and $\tau$.
Also, we will
always work with a hypothesis class $\H = [-1,1]^{\X}$ the
set of functions from $\X$ to $[-1,1]$, so we fix this
class in the definition.}.

\begin{definition}[Weak agnostic learner]
Let $\rho \ge \tau > 0$.
Let $\D$ be a distribution supported on $\X$.
A \emph{$(\rho,\tau)$-weak agnostic learner}
$\cal L$ for $\D$ solves the following promise problem:
given a collection of labeled samples $\set{(i,y_i)}$
where $i \sim \D$ and $y_i \in [-1,1]$, if there is some
$c \in \C$ such that $\langle c, y \rangle_\D > \rho$,
then $\cal L$ returns some $h:\X \to [-1,1]$
such that $\langle h,y \rangle_\D > \tau$.
\end{definition}
Intuitively, if there is a concept $c \in \C$ that
correlates nontrivially with the observed labels,
then the weak agnostic learner returns a hypothesis $h$ (not necessarily from
$\C$), that is also nontrivially correlated with the observed labels.
In particular, $\rho$ and $\tau$ are typically
taken to be $\rho = 1/p(d)$ and $\tau = 1/q(d)$ for
polynomials $p(d) \le q(d)$, where $d = \log(\card{\X})$.

\subsection{Multicalibration from weak agnostic learning}
\label{sec:wal2mult}

In this section, we show how we can use a weak agnostic learner
to solve the search problem
that arises at each iteration of Algorithm~\ref{alg:calibrated}:
namely, to find an update that will make progress towards mulitcalibration.
Formally, we show the following theorem.
\begin{theorem}[Formal statement of Theorem~\ref{result:boosting}]
\label{thm:weak}
Let $\rho,\tau > 0$ and $\C \subseteq 2^{\X}$ be some
concept class. If $\C$ admits a $(\rho,\tau)$-weak agnostic
learner that runs in time $T(\card{\C},\rho,\tau)$,
then there is an algorithm that learns a predictor that
is $(\alpha,\lambda)$-multicalibrated on
$\C' = \set{S \in \C: \card{S} \ge \gamma N}$ in time
$O(T(\card{\C},\rho,\tau)\cdot\poly(1/\alpha,1/\lambda,1/\gamma))$
as long as $\rho \le \alpha^2\lambda\gamma/2$ and
$\tau = \poly(\alpha,\lambda,\gamma)$.
\end{theorem}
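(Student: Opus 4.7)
The plan is to adapt the iterative template of Algorithm~\ref{alg:calibrated}, replacing its exhaustive inner loop over $\C$ with a single invocation of the weak agnostic learner per discretization level $v \in \Lambda[0,1]$. As before, I will use $\Phi(x) = \norm{p^* - x}^2$ as the potential, showing that each successful learner call yields an update that decreases $\Phi$ by $\Omega(\tau^2 \alpha \lambda \gamma N)$, which bounds the total number of learner calls by $\poly(1/\alpha,1/\lambda,1/\gamma)$.

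Fix the current predictor $x$ and a level $v$ with $\card{\lambda(v)} \ge \alpha\lambda\gamma N$; smaller levels can be ignored because by Definition~\ref{def:al-mult} they collectively account for at most an $\alpha$-fraction of any $S \in \C'$. Let $\D_v$ denote the uniform distribution on $\lambda(v)$ and set the labels $y^v_i = p_i^* - x_i \in [-1,1]$. Because these involve the unobservable $p_i^*$, I would actually feed the learner samples $(i, o_i - x_i)$ whose conditional expectation matches $y^v_i$. Since the weak agnostic learner is stated for $\set{-1,1}$-valued concepts, I augment the concept class with the trivial concept $\mathbf{1}(i) = 1$ and with negated concepts to handle both signs of violation; both additions cost only constant overhead. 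The key observation is that for any $S \in \C'$ whose category $S_v(x)$ violates $(\alpha,\lambda)$-calibration at level $v$,
\[
\bigl\lvert \langle \mathbb{1}_S, y^v\rangle_{\D_v}\bigr\rvert \;=\; \frac{\card{S_v(x)}}{\card{\lambda(v)}}\,\bigl\lvert \bar e_{S_v}\bigr\rvert \;\ge\; \alpha \cdot \alpha\lambda\gamma \;=\; \alpha^2 \lambda \gamma,
\]
using $\card{S_v(x)} \ge \alpha\lambda\card{S} \ge \alpha\lambda\gamma N$, $\card{\lambda(v)} \le N$, and $\bigl\lvert\bar e_{S_v}\bigr\rvert > \alpha$ where $\bar e_{S_v} = \E_{i \sim S_v(x)}[p_i^* - x_i]$. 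Expanding $2\mathbb{1}_S = c_S + \mathbf{1}$ shows that at least one of $c_S$ or $\mathbf{1}$ must correlate with $y^v$ by at least $\alpha^2\lambda\gamma/2 \ge \rho$, so the $(\rho,\tau)$-weak agnostic learner returns some $h:\X \to [-1,1]$ with $\langle h, y^v\rangle_{\D_v} > \tau$.

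Given such an $h$, I update $x_i \gets \pi_{[0,1]}(x_i + \eta\, h(i))$ for $i \in \lambda(v)$ with step size $\eta = \tau$. The $\ell_2^2$ expansion (analogous to Lemma~\ref{lem:AE}, using that projection onto $[0,1]$ only helps and $h_i^2 \le 1$) gives
\[
\Phi(x') - \Phi(x) \;\le\; -2\eta \card{\lambda(v)} \,\langle h, y^v\rangle_{\D_v} + \eta^2 \card{\lambda(v)} \;\le\; -\tau^2 \card{\lambda(v)} \;\le\; -\tau^2 \alpha \lambda \gamma N.
\]
Since $\Phi \le N$ throughout, at most $O(1/(\tau^2 \alpha \lambda \gamma))$ updates can occur. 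When the learner fails on every level $v$, the contrapositive of its promise certifies that no $c \in \C'$ achieves correlation $\rho$ with any $y^v$, which by the displayed bound implies that $x$ is $(\alpha,\lambda)$-multicalibrated on $\C'$. With $\tau = \poly(\alpha,\lambda,\gamma)$ we obtain $\poly(1/\alpha,1/\lambda,1/\gamma)$ learner calls in total, each of cost $T(\card{\C},\rho,\tau)$; a final $\lambda$-discretization via Claim~\ref{claim:lambda} cleans up the output at negligible additional cost.

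The main obstacle I anticipate is adaptive generalization rather than the potential argument itself: the label distribution $y^v$ depends on the current $x$, which in turn depends on previous learner outputs, so naive sample reuse across iterations can break concentration of the empirical correlations that the learner consumes. I would handle this exactly as in Section~\ref{sec:privacy}, either by drawing a fresh sample of size $\tilde{O}(\log\card{\C}/\poly(\alpha,\lambda,\gamma))$ per iteration (absorbed into the $\poly$ overhead) or, for a tighter bound, by routing the learner's data access through the differentially private guess-and-check style mechanism of Theorem~\ref{thm:oracle}; either choice preserves the running-time dependencies claimed in the theorem.
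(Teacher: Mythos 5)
Your proposal is correct in substance and follows the same overall route as the paper: maintain a candidate predictor, reduce the search for a violated category at each discretization level to a weak agnostic learning problem with labels proportional to $x-p^*$ (estimated through $o-x$), update along the returned hypothesis $h$, and bound the number of learner calls by the potential $\norm{p^*-x}^2$, with the contrapositive of the learning promise certifying $(\alpha,\lambda)$-multicalibration at termination. The one genuinely different ingredient is how you neutralize the fact that concepts are $\set{-1,1}$-valued: the paper keeps the labels supported on $\X_v$ but maintains (and, when necessary, restores by an explicit offset step) observable accuracy-in-expectation of $x$ on each level set, so that the error mass on $\X_v\setminus S_v$ cannot cancel the correlation with $c_S$; you instead augment the class with the constant concept and use $2\mathbf{1}_S=c_S+\mathbf{1}$, so that either the constant (which you can handle directly by estimating the level-set average and shifting, without invoking the assumed learner) or $c_S$ itself correlates by at least $\alpha^2\lambda\gamma/2\ge\rho$. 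These are essentially equivalent mechanisms, and yours is arguably cleaner to state; the negated-concept augmentation is likewise harmless since it amounts to running the learner on $-y$.

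One caveat worth fixing: as written, you invoke the learner over the conditional distributions $\D_v$ (uniform on $\lambda(v)$), which change adaptively with $x$, whereas the hypothesis of the theorem --- in the paper's distribution-specific sense --- only supplies a learner for the fixed uniform distribution on $\X$. This is not a real obstacle in your argument, because your correlation bound already normalizes by $N$ (you use $\card{\lambda(v)}\le N$), so the identical lower bound holds if you zero the labels outside $\lambda(v)$ and pose the learning problem over the fixed distribution on $\X$, exactly as the paper does; the update is then still supported on $\lambda(v)$ and the per-step progress only improves to $\Omega(\tau^2 N)$. With that one-line adjustment (and the implicit step of checking the returned $h$'s estimated correlation before updating, so that a failed promise is actually detected), your argument matches the paper's proof, including the treatment of adaptivity via fresh samples or the private guess-and-check mechanism.
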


That is,
if there is an algorithm for learning the concept class $\C$
over the hypothesis class of real-valued functions
$\H = \set{h:\X \to [-1,1]}$ on the distribution of individuals
in polynomial time in $\log(\card{\C}), 1/\rho,$ and $1/\tau$,
then there is an algorithm for learning
an $\alpha$-multicalibrated predictor on the large sets in $\C$ that runs
in time polynomial in $\log(\card{\C}),1/\alpha,1/\lambda,1/\gamma$.
For clarity of presentation in the reduction,
we make no attempts to
optimize the sample complexity or running time.
Indeed, the exact sample complexity and
running time will largely depend on how strong the weak
learning guarantee is for the specific class $\C$.

We prove Theorem~\ref{thm:weak} by using the weak learner
for $\C$ to learn a $(\alpha,\lambda)$-multicalibrated predictor.
Recall Algorithm~\ref{alg:calibrated}: we maintain
a predictor $x$ and iteratively look for a set
$S \in \C$ where $x$ violates the calibration constraint
on $S_v = \set{i: x_i \in \lambda(v)} \cap S$ for some value $v$.
In fact, the proof of Lemma~\ref{lem:calibrated} reveals
that we are not restricted to updates on $S_v$ for $S \in \C$.
As long as there
is some uncalibrated category $S_v$, we can find an
update that makes nontrivial progress in $\ell_2^2$
distance from $p^*$ -- even if this update is not on
any $S \in \C$ -- then we can bound the number of
iterations it will take before there are no more
uncalibrated categories.  We show that a weak
agnostic learner allows us to find such an update.
\begin{proof}
Throughout the proof, let $\beta = \alpha\lambda\gamma$,
$\rho = \alpha\beta/2$, and
$\tau = \rho^d$ for some constant $d \ge 1$.
Let $x$ be a predictor initialized to be the constant
function $x_i = 1/2$ for all $i \in\X$.

Consider the search problem that arises during
Algorithm~\ref{alg:calibrated} immediately after updating
the predictor $x$.
Let $\X_v = \set{i: x_i \in \lambda(v)}$ be the set
of individuals in the $\lambda$-interval surrounding $v$.
Our goal is to determine if
there is some $v \in \Lambda[0,1]$ and $S \in \C$ such that
$\card{S_v} \ge \beta N$, where
\begin{equation}
\label{eqn:weak}
\card{\sum_{i \in S_v} x_i - p_i^*} \ge \alpha \card{S_v}.
\end{equation}
We reduce this search problem to the problem of
weak agnostic learning over $\C$ on the distribution $\D_\X$.
For any $v \in \Lambda[0,1]$,
if $\card{X_v} < \beta N$, then clearly there is no
uncalibrated category $S_v$ with $\card{S_v} \ge \beta N$;
for each $v \in \Lambda[0,1]$, we will test if $\X_v$ is
large enough by taking $O(\log(1/\beta\xi)/\beta)$
random draws from $\X$.

Supposing $\X_v$ is large enough,
we take a fresh sample of size $n \ge \tilde{O}(\log(\card{\C/\xi})/\beta^2\tau^2)$.
We take $n$ large enough that
over all categories $S_v$ of $\card{S_v} \ge \beta N$,
the observable statistics deviate from their expectation
by at most $\tau/4$:
\begin{equation}
\label{eqn:observable}
\card{\frac{1}{n}\sum_{j\in[n]} o_j -
\frac{1}{\card{S_v}}\sum_{i \in S_v} p_i^*} \le \tau/4
\end{equation}
Additionally, assume that
$x$ is overall observably $\tau/4$-calibrated with respect to $\X$
(recall, this means calibrated on the set of observations).
Note that observable calibration on $\X$ implies that for each
$v \in \Lambda[0,1]$,
\begin{equation}
\card{\frac{1}{n}\sum_{j \in [n]}(o_j - x_j)} \le \tau/4.\label{eqn:boostAE}
\end{equation}
(If $x$ is not $\tau/4$-calibrated
then for the $\X_v$ that violates calibration,
offset all the values of $x_i$ from their
current values such that $\card{\sum_{i \in \X} x_i - o_i}
\le \tau N/4$ and resample; as in Algorithm~\ref{alg:AE}, this
will make at least $\Omega(\tau^2)$ progress towards $p^*$).

For each $v \in \Lambda[0,1]$, we consider the following
learning problem.
For $i \in \X_v$, let $\Delta_i = \frac{x_i - o_i}{2}.$
For $i \in \X \setminus \X_v$, let $\Delta_i = 0$.
We claim that
if there is some $S_v$ satisfying (\ref{eqn:weak}),
then for $i \sim \D_\X$, the labeled samples of
either $(i,\Delta_i)$ or $(i,-\Delta_i)$
satisfy the weak learning promise for
$\rho = \alpha \beta / 2$.
\begin{claim}
Let $c_S:\X \to \set{-1,1}$ be the boolean function
associated with some $S \in \C$.
For $v \in \Lambda[0,1]$, if $S_v = \set{i:x_i \in \lambda(v)} \cap S$
satisfies $\sum_{i \in S_v}(x_i-p_i^*) \ge \alpha \card{S_v}$, then
$$\langle c_S, \Delta \rangle_{\D_\X} \ge \rho.$$
\end{claim}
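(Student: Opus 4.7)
\emph{Proof proposal.}
The plan is to unfold $\langle c_S, \Delta\rangle_{\D_\X}$ directly from the definitions and show that the calibration violation on $S_v$ is what drives the correlation past the threshold $\rho$, using the overall (observable) calibration of $x$ on $\X_v$ to control the contribution from the complement $\X_v \setminus S_v$.

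First, I would exploit that $\Delta_i = 0$ for $i \notin \X_v$ and that $c_S(i) = +1$ on $S$ while $c_S(i) = -1$ off $S$, so the inner product splits as
\[
\langle c_S, \Delta\rangle_{\D_\X}
= \frac{1}{N}\sum_{i \in S_v}\Delta_i \;-\; \frac{1}{N}\sum_{i \in \X_v \setminus S_v}\Delta_i.
\]
Taking the expectation over the outcomes $o_i$ (independent of $c_S$ given $i$, with $\E[o_i \mid i] = p_i^*$), so that $\E[\Delta_i \mid i] = (x_i - p_i^*)/2$ on $\X_v$, this becomes $(A-B)/(2N)$, where $A := \sum_{i \in S_v}(x_i - p_i^*)$ and $B := \sum_{i \in \X_v\setminus S_v}(x_i - p_i^*)$.

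The hypothesis of the claim gives $A \ge \alpha \card{S_v} \ge \alpha\beta N$. The preceding setup of the reduction has already enforced observable $\tau/4$-calibration of $x$ on every $\X_v$; combined with the per-category generalization bound \eqref{eqn:observable} between observable statistics and true probabilities, this yields $\card{A+B} = \card{\sum_{i \in \X_v}(x_i - p_i^*)} \le \tau N/2$. Using the identity $A - B = 2A - (A+B)$, I then conclude
\[
\langle c_S,\Delta\rangle_{\D_\X} \;\ge\; \frac{2\alpha\beta N - \tau N/2}{2N} \;=\; \alpha\beta - \tfrac{\tau}{4} \;\ge\; \tfrac{\alpha\beta}{2} \;=\; \rho,
\]
where the final inequality needs $\tau \le 2\alpha\beta$, comfortably satisfied by the choice $\tau = \poly(\alpha,\lambda,\gamma)$ with a sufficiently small exponent.

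The only point requiring real care is the bound $\card{A+B} \le \tau N/2$: this is not a hypothesis of the claim itself, but is supplied by the outer reduction, which explicitly enforces global $\X_v$-calibration of $x$ (and resamples, as in Algorithm~\ref{alg:AE}, if it is violated) precisely so that this step goes through. Once that bound is in hand, the entire substance of the argument is the elementary identity $A - B = 2A - (A+B)$: the positive bias $A$ on $S_v$ dominates any residual bias on the complement $\X_v\setminus S_v$, producing the required nontrivial correlation between $c_S$ and the constructed labels.
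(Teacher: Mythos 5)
Your proposal is correct and follows essentially the same route as the paper: the same split of the inner product over $S_v$ and $\X_v \setminus S_v$, the identity $A-B = 2A-(A+B)$ (the paper rewrites $\sum_{i\in\X_v\setminus S_v}$ as $\sum_{i\in\X_v}-\sum_{i\in S_v}$), a lower bound on the $S_v$ term from the assumed calibration violation, and control of the $\X_v$ term via the enforced observable calibration together with the sampling concentration bound. The only cosmetic difference is that you pass to the expectation $\E[\Delta_i \mid i] = (x_i - p_i^*)/2$ and then reconnect to the observable quantities, whereas the paper works directly with the realized labels $(x_i - o_i)/2$ and uses the same concentration bound to connect to the hypothesis on $p^*$; the constants shift slightly but the argument is the same.
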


Note that the supposition of the claim is satisfied when
(\ref{eqn:weak}) holds without the absolute value.
In the case where (\ref{eqn:weak}) holds in the other direction,
the claim will hold for $-\Delta$.  The argument will be identical.
\begin{align}
\langle c_S, \Delta \rangle_{\D_\X} &=
\frac{1}{N}
\sum_{i \in \X}\left(\frac{x_i - o_i}{2}\right)\cdot c_S(i)
\notag\\
&=\frac{1}{N}
\sum_{i \in \X_v}\left(\frac{x_i - o_i}{2}\right)\cdot c_S(i)
+ \sum_{i \in \X \setminus \X_v} 0 \cdot c_S(i)
\notag\\
&=\frac{1}{N}\left(\sum_{i \in S_v}\left(\frac{x_i - o_i}{2}\right)
- \sum_{i \in \X_v \setminus S_v}
\left(\frac{x_i - o_i}{2}\right)\right)
\notag\\
&=\frac{1}{N}\left(\sum_{i \in S_v}\left(\frac{x_i - o_i}{2}\right)
- \left(\sum_{i \in \X_v}\left(\frac{x_i - o_i}{2}\right)
- \sum_{i \in S_v}\left(\frac{x_i - o_i}{2}\right)\right)\right)
\notag\\
&\ge \frac{2}{N}\sum_{i \in S_v}\left(\frac{x_i - o_i}{2}
- \tau \card{\X_v}/4\right)
\label{ineq:one1}\\
&\ge \frac{2}{N}(\alpha\beta N - \tau N/4)
\label{ineq:two2}\\
&\ge 2\rho - \tau / 2
\label{ineq:three3}
\end{align}
where the inequality (\ref{ineq:one1}) follows from (\ref{eqn:boostAE}),
(\ref{ineq:two2}) follows from the assumption that
$\card{S_v} \ge \beta N$ and our assumption on
$\sum_{i \in S_v}(x_i-p_i^*)$, and (\ref{ineq:three3})
follows from our assumption
that the sample size is large enough to guarantee at most
$\tau/8$ error.  Noting that $\tau/2 \le \rho$ gives the claim.

Thus, because the $(\rho,\tau)$-weak agnostic learning
promise is satisfied, the learner will return to us
some $h:\X \to [-1,1]$ satisfying the following inequality.
\begin{align*}
\tau &\le
\langle \Delta_i, h_i \rangle_{\D_v}\\
&= \frac{1}{2\card{\X}}\sum_{i \in \X_v}(x_i - o_i) \cdot h_i\\
&\le \frac{1}{2\card{\X}}\sum_{i \in \X_v}(v - p_i^*) \cdot h_i
+ \tau/8.
\end{align*}
where the final inequality follows by the assumed
statistical accuracy.
This shows that the $h$ returned to us by the
weak agnostic learner is nontrivally correlated
with $x-p^*$ on $\X_v$.
In particular, if we use this $h$
as a gradient step, updating $x_i \to v - \eta h_i$
(projecting onto $[0,1]$ if necessary)
for $\eta = \Omega(\tau/\beta)$, then we can
guarantee that each such update will achieve
$\tau^2N$ progress in $\norm{x-p^*}^2$.
The analysis follows in the same way as the analysis of
Algorithm~\ref{alg:calibrated}.
\end{proof}

\subsection{Weak agnostic learning from multicalibration}

In this section, we show the converse reduction.
In particular, we will show that for a concept class $\C$,
an efficient algorithm for obtaining
an $\alpha$-multicalibrated predictor with respect to
$\C' = \set{S \in \C : \card{S} \ge \gamma N}$,
gives an efficient algorithm for responding to weak agnostic
learning queries on $\C$.

\begin{theorem}[Formal statement of Theorem~\ref{result:learner}]
Let $\alpha,\gamma > 0$ and suppose $\C \subseteq 2^{\X}$
is a concept class.
If there is an algorithm for learning an $\alpha$-multicalibrated
predictor on $\C' = \set{S \in \C : \card{S} \ge \gamma N}$
in time $T(\card{C},\alpha,\gamma)$ then we can implement
a $(\rho,\tau)$-weak agnostic learner for $\C$ in time
$O(T(\card{C},\alpha,\gamma)\cdot \poly(1/\tau))$ for
any $\rho,\tau > 0$ such that $\tau \le \min\set{\rho - 2\gamma,\rho/4 - 4\alpha}$.
\end{theorem}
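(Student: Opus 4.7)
The plan is to reduce $(\rho,\tau)$-weak agnostic learning on $\C$ to multicalibration on $\C'$. Given labeled samples $\{(i,y_i)\}$ with $y_i\in[-1,1]$, I would set $p_i^*:=(1+y_i)/2$ (so $y = 2p^*-1$) and draw $o_i\sim\mathrm{Bernoulli}(p_i^*)$ for each sample. These simulated labeled samples are fed into the multicalibration black box on the augmented class $\C^+ = \C'\cup\{\X\}$; the full universe is trivially large, and its inclusion forces $\bar x\approx\bar{p^*}$. The black box returns an $\alpha$-multicalibrated predictor $x:\X\to[0,1]$. As the weak-learning hypothesis I return whichever of $h_{\mathrm{lin}}:=2x-1$ and $h_{\mathrm{const}}:=\sgn(\hat{\bar y})$ (with $\hat{\bar y}$ an empirical estimate of $\bar y$) has higher empirical correlation with $y$ on a fresh holdout.

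The core analysis turns on the decomposition $\langle c,y\rangle = \bar c\bar y + 4s(1-s)\Delta$, where for the witness $c\in\C$ with $S=c^{-1}(1)$ we write $s=\card{S}/N$, $\mu_S = \frac{1}{\card{S}}\sum_{i\in S}p_i^*$ and $\mu_{\bar S}$ analogously, and $\Delta = \mu_S - \mu_{\bar S}$. In the imbalanced case $\min(s,1-s)<\gamma$, the covariance term has magnitude at most $4s(1-s)\le 4\gamma$, so $|\bar c\bar y|>\rho-O(\gamma)$ forces $|\bar y|>\rho-O(\gamma)$ and $h_{\mathrm{const}}$ achieves correlation of order $\rho$. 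In the balanced case $\gamma\le s\le 1-\gamma$, $S$ itself lies in $\C'$, so multicalibration on $S$ combined with multicalibration on $\X$ gives $|\langle c, x-p^*\rangle|\le 6\alpha$, equivalently $\langle c, h_{\mathrm{lin}}\rangle \ge \rho - 12\alpha$. This is the key lemma: the multicalibrated $x$ is itself nontrivially correlated with every concept in $\C'$ that correlates with $p^*$ (and hence with $y$).

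The final step converts the correlation $\langle c, h_{\mathrm{lin}}\rangle\ge\rho-O(\alpha)$ into $\langle h_{\mathrm{lin}}, y\rangle>\tau$. Using multicalibration of $x$ on its own categories $\X_v$ (the sums $\sum_{i\in\X_v}p_i^*\approx v\card{\X_v}$), I would derive $\langle x, p^*\rangle = \|x\|_2^2/N + O(\alpha)$ and $\bar x = \bar{p^*} + O(\alpha)$, so $\langle h_{\mathrm{lin}}, y\rangle = \bar y^2 + 4\,\mathrm{Var}(x) + O(\alpha)$. Lower bounding $\mathrm{Var}(x)\ge s(1-s)\Delta^2 - O(\alpha)$ via the law of total variance on the partition $\{S,\bar S\}$ and using $s(1-s)|\Delta|\ge\rho/8$ in the balanced case yields $\mathrm{Var}(x)\ge\rho^2/16 - O(\alpha)$, giving a polynomial bound on $\tau$.

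The hardest step will be sharpening this to the linear-in-$\rho$ bound $\tau\ge\rho/4 - 4\alpha$ stated in the theorem, rather than the quadratic $\Omega(\rho^2)$ that the direct variance argument produces. I expect the sharpening to route through Theorem~\ref{result:best}: applied with $\H\supseteq\{c_+:=(c+1)/2\}$, it gives $\|x-p^*\|^2\le\|c_+-p^*\|^2 + 6\alpha$, and the identity $\|c_+-p^*\|^2 = (1+\|y\|^2)/4 - \langle c,y\rangle/2$ yields $\|h_{\mathrm{lin}} - y\|^2\le 1+\|y\|^2 - 2\rho + O(\alpha)$, hence $\langle h_{\mathrm{lin}},y\rangle \ge \rho - O(\alpha) + (\|h_{\mathrm{lin}}\|^2 - 1)/2$. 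Closing the deficit $(\|h_{\mathrm{lin}}\|^2 - 1)/2$ would require either rounding (for example, $h=\sgn(2x-1)$ to enforce unit norm, with a multicalibration-aware bookkeeping of the rounding error) or a direct lower bound on $\|h_{\mathrm{lin}}\|^2$ coming from the balanced-case variance estimate.
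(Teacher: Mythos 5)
Your setup mirrors the paper's proof up to a point: the reduction of real-valued labels to a calibration target, the split into a small-support case handled by a constant hypothesis and a large-support case handled by the multicalibrated predictor, the check that $\card{\bar y}$ is small, and the decomposition $\langle c,y\rangle=\bar c\bar y+4s(1-s)\Delta$ are all consistent with (or refinements of) what the paper does. The gap is in the final conversion to correlation with $y$, and it is not a cosmetic one: with the linear hypothesis $h_{\mathrm{lin}}=2x-1$ and the global variance argument you only get $\langle h_{\mathrm{lin}},y\rangle\ge \bar y^2+\Omega(\rho^2)-O(\alpha)$ (indeed with some hidden $1/\gamma$ factors in the error, since controlling the mean of $x-p^*$ on $\X\setminus S$ costs a factor $N/\card{\X\setminus S}$), which is strictly weaker than the claimed $\tau\le\rho/4-4\alpha$. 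The paper instead outputs the \emph{sign} of the multicalibrated predictor and proves the linear bound by a per-category argument: calibration of $x$ on $\X$ gives $\sgn(v)\sum_{i\in\X'_v}y_i\ge\card{\X'_v}(\card{v}-\alpha)$ on each level set $\X_v$, calibration of $x$ on the witness set $S$ lets one replace $\card{v}$ by $\big\vert\frac{1}{\card{S'_v}}\sum_{i\in S'_v}y_i\big\vert$ up to $\alpha$, and summing over $v$ (using $\card{\X'_v}\ge\card{S'_v}$ and the triangle inequality) recovers $\frac1N\sum_{i\in S}y_i\ge\rho/2-\omega$, giving $\langle\sgn(x),y\rangle\ge\rho/4-4\alpha$. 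That chain is the heart of the proof and is absent from your proposal.

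Your suggested repair via Theorem~\ref{result:best} does not close the gap. First, Lemma~\ref{lem:best} requires $x$ to be calibrated on the categories of the comparator $c_+$, i.e.\ on both $S$ and $\X\setminus S$; the black box you are given only calibrates on sets of $\C'$ (plus $\X$, which the paper also adds), the complement $\X\setminus S$ need not lie in $\C$, and since the witness $c$ is unknown you cannot simply add its categories to the class. Second, even granting that step, the resulting inequality leaves the deficit $(\norm{h_{\mathrm{lin}}}^2-1)/2$, which you can only bound below through the same variance estimate, landing you back at $\Omega(\rho^2)$; for small $\rho$ the bound is vacuous. The rounding $h=\sgn(2x-1)$ you mention in passing is exactly the right hypothesis, but the ``multicalibration-aware bookkeeping'' it needs is precisely the level-set argument above, so as written the proposal establishes only a quadratic-in-$\rho$ weak learner, not the stated theorem.
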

\begin{proof}
There are two cases to handle.  First, suppose the
support of $c_S$ is small; that is, for the corresponding
$S \in \C$, $\card{S} < \gamma$.  Then, consider the
correlation between $y$ and the the constant hypothesis
$h(i) = -1$ for all $i \in \X$.
\begin{align}
\langle y,-1 \rangle &= -\sum_{i \in \X}y_i\\
&= -\left(\sum_{i \in \X \setminus S}y_i + \sum_{i \in S}y_i\right)\\
&= \langle c_S, y \rangle - 2\sum_{i \in S}y_i\label{eq:weaktrick}\\
&\ge \rho-2\gamma
\end{align}
where (\ref{eq:weaktrick}) follows by writing
$\langle c_S,y \rangle$ as $\sum_{i \in S}y_i + \sum_{i \in \X \setminus S}y_i$
and rearranging.
Thus, for $\tau < \rho - 2\gamma$, in the case when
the support of $c_S$ is small,
then we can return the hypothesis $-1$.
We can test if the constant hypothesis is sufficiently
correlated with $y$ in $\poly(1/\tau)\log(1/\xi)$ time by
random sampling to succeed with probability at least $1-\xi$.

Suppose we want to weak agnostically learn over $\C$
on sampled observations from $y \in [-1,1]^N$.
We assume there is some $c_S \in \C$ such that
$\langle c_S, y \rangle > \rho$.
Consider some $\omega = \rho/4$.
First, we will check if $\frac{1}{N}\card{\sum_{i \in \X}y_i} > \omega$.
Again, this does not require multicalibration, just sampling
from $y$ and averaging.
In this case, a constant function $h(i) = 1$ or
$h(i) = -1$ is sufficiently correlated with $y$
to satisfy the weak agnostic learning guarantee.

Next, we will proceed assuming $\frac{1}{N}\card{\sum_{i \in \X}y_i} < 2\omega$.
We can expand the inner product between $c_S$ and $y$ as
follows.
\begin{align}
\langle c_S, y \rangle &=
\frac{1}{N}\left(\sum_{i \in S}y_i - \sum_{i \in \X \setminus S} y_i\right)\\
&= \frac{1}{N}\left(2\sum_{i \in S}y_i - \sum_{i \in \X}y_i\right)\\
&\ge \frac{2}{N}\sum_{i \in S}y_i - 2\omega
\end{align}
This means $\frac{1}{N}\sum_{i \in S}y_i > \frac{\rho}{2} - \omega$.

Suppose we learn an $x$ that is $\alpha$-multicalibrated with
respect to $\C' = \C \cup \set{\X}$ on the labels $y$.
This implies that there is some $\X'\subseteq \X$ such that
$\card{\X'} \ge (1-\alpha)\card{\X}$ and for all $v \in [-1,1]$, we have
$v - \alpha \le \frac{1}{\card{\X_v'}}\sum_{i \in X_v'} y_i \le v + \alpha$.
In turn, this implies the following inequality.
\begin{equation}
\sgn(v) \cdot \sum_{i \in \X_v'} y_i \ge \card{\X_v'} \cdot \left(\card{v} - \alpha\right)
\label{ineq:abs}
\end{equation}
Then, let $h^{(x)}$ be the hypothesis
defined as $h^{(x)}_i = \sgn(x_i)$.  Consider the inner product with
$y$.
\begin{align}
\langle h^{(x)},y \rangle &= \frac{1}{N}\sum_{i \in \X}h^{(x)}_i \cdot y_i\\
&= \frac{1}{N}\sum_{v \in [-1,1]}\sum_{i \in \X_v}h^{(x)}_i \cdot y_i\\
&\ge \frac{1}{N}\sum_{v \in [-1,1]} \sgn(v)\cdot \sum_{i \in \X_v'} y_i - \alpha
\label{ineq:additional}\\
&\ge \frac{1}{N}\left(\sum_{v \in [-1,1]} \card{\X_v'} \cdot \card{v}\right) - 2\alpha
\label{ineq:sub-v}\\
&\ge \frac{1}{N}\left(\sum_{v \in [-1,1]}
\card{\X_v'} \cdot \card{\frac{1}{\card{S'_v}}\sum_{i \in S'_v} y_i}\right) - 3 \alpha
\label{ineq:sub-Sv}\\
&\ge \frac{1}{N}\left(\sum_{i \in S} y_i\right) - 4\alpha
\label{ineq:sub-S}\\
&\ge \frac{\rho}{2} - \omega - 4\alpha
\end{align}
where the first equalities follow by the definition of $h^{(x)}$;
(\ref{ineq:additional}) follows by the choice of $\X'$ and $\alpha$-multicalibration;
(\ref{ineq:sub-v}) follows by applying (\ref{ineq:abs}) for each $v \in [-1,1]$;
(\ref{ineq:sub-Sv}) follows by substituting $v$ for the empirical average
of $y$ over $S'_v$ invoking $\alpha$-multicalibration for the appropriate choice
of $S' \subseteq S$; and
(\ref{ineq:sub-S}) follows by noting that we can restrict our attention
to $S' \subseteq \X'$ such that $\card{\X'_v} \ge \card{S'_v}$ and
the triangle inequality.

Thus, $h^{(x)}$ satisfies the $(\rho,\tau)$-weak agnostic learning guarantee
for any $\tau \le \rho/4 - 4\alpha$ by our choice of $\omega = \rho/4$.

\end{proof}

\section{Multicalibration achieves ``best-in-class" prediction}
\label{sec:best}

While our notion of multicalibration provides a protection
against discrimination for groups, we argue that this protection comes
at virtually no cost in the utility of the predictor.  In fact,
we argue that Algorithm~\ref{alg:calibrated} can be used as an effective
post-processing step to turn any predictor, or family of
predictors, into a multicalibrated predictor that achieves
comparable (or improved) prediction error.

Suppose we are given a collection $\C$ of sets of individuals
on which we wish to be multicalibrated.  Additionally,
suppose we have a collection of candidate predictors $\H$, which
achieves low prediction error but might violate calibration
arbitrarily. From these collections,
we would like to produce a predictor $x$ that
is $\alpha$-multicalibrated on $\C$ but achieves prediction error
commensurate with the best predictor in $\H$; in particular,
$\norm{x - p^*}^2$ should be not much larger
than $\norm{h^* - p^*}^2$ (and ideally would be smaller).
In this sense, the calibrated $x$ would not only be fair, but
would also achieve (approximately) best-in-class prediction
error over $\H$.

Consider some $h \in \H$ and consider the partition of $\X$
into sets according to the predictions of $h$ -- in particular,
we will first apply a $\lambda$-discretization to the range of
each $h$ to partition $\X$ into categories.
That is, let $S_v(h) = \set{i : h_i \in \lambda(v)}$, and note that
$S_v(h)$ is disjoint from $S_{v'}(h)$ for $v \neq v'$, and $\bigcup_{v \in \Lambda[0,1]} S_v(h) = \X$.  In addition to calibrating
with respect to $S \in \C$,
we can also ask for calibration on $S_v(h)$ for all $h \in \H$
and $v \in \Lambda[0,1]$.  Specifically, let
$\S(\H) = \set{S_v(h)}_{h \in \H, v \in \Lambda[0,1]}$; we
consider imposing calibration on $\C \cup \S(\H)$.
Calibrating in this manner protects the
groups defined by $\C$ but additionally gives a strong utility
guarantee.
\begin{theorem}[Best-in-class prediction]
\label{thm:best}
Suppose $\C \subseteq 2^{\X}$ is a collection of subsets of
$\X$ and $\H$ is a set of predictors.
Then there is a predictor $x$ that is $\alpha$-multicalibrated
on $\C$ such that
$$\norm{x-p^*}^2 - \norm{h^* - p^*}^2 < 6\alpha N,$$
where $h^* = \argmin_{h \in \H}\norm{h - p^*}^2$.
Further, suppose that for all $S \in \C$, $\card{S} \ge \gamma N$, and suppose
that set membership for $S \in \C$ and $h \in \H$
are computable by circuits of size at most $s$; then $x$ is computable
by a circuit of size at most $O(s/\alpha^4\gamma)$.
\end{theorem}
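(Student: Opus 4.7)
The plan is to apply Algorithm~\ref{alg:calibrated} to an enriched collection of sets, and then use a Pythagorean-style identity together with calibration to bound the excess prediction error.

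First I would enlarge the collection on which we calibrate. Define the partition of $\X$ induced by each $h \in \H$ after $\lambda$-discretization: $S_v(h) = \set{i : h_i \in \lambda(v)}$, and let $\S(\H) = \set{S_v(h) : h \in \H,\ v \in \Lambda[0,1]}$. Set $\C' = \C \cup \S(\H)$. By Theorem~\ref{thm:calibrated} (or Theorem~\ref{thm:ckt}), we can efficiently learn a predictor $x$ that is $(\alpha,\lambda)$-multicalibrated on $\C'$, with $\lambda = \alpha$. Since $\C \subseteq \C'$, this $x$ is automatically multicalibrated on $\C$ in the sense required by the theorem. For the circuit-size claim, note that each $S_v(h) \in \S(\H)$ is computable by a circuit of size $O(s + b)$ (evaluate $h$ and test membership in $\lambda(v)$), so Theorem~\ref{thm:ckt} gives the $O(s/\alpha^4\gamma)$ bound.

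The heart of the proof is the utility bound, via the algebraic identity
\begin{equation*}
\norm{x-p^*}^2 - \norm{h^*-p^*}^2 \;=\; -\norm{h^*-x}^2 \;-\; 2\sum_{i \in \X}(h_i^* - x_i)(x_i - p_i^*).
\end{equation*}
The first term is nonpositive, so it suffices to show that the cross term satisfies $\card{\sum_i(h_i^*-x_i)(x_i-p_i^*)} \le 3\alpha N$. Here is where calibration on $S_v(h^*)$ is exploited. I would doubly partition $\X$ into cells $C_{v,v'} = S_v(h^*) \cap \set{i : x_i \in \lambda(v')}$ and split the sum over these cells. On each cell, $h_i^* - x_i$ is within $\lambda$ of the constant $v - v' \in [-1,1]$, so
\begin{equation*}
\Bigl\lvert\sum_{i \in C_{v,v'}}(h_i^*-x_i)(x_i-p_i^*)\Bigr\rvert
\;\le\; \card{v-v'}\cdot\Bigl\lvert\sum_{i\in C_{v,v'}}(x_i-p_i^*)\Bigr\rvert + \lambda\card{C_{v,v'}}.
\end{equation*}
For cells that are ``big'' in the sense of Definition~\ref{def:al-mult} (i.e., $\card{C_{v,v'}} \ge \alpha\lambda\card{S_v(h^*)}$), the inner sum is at most $\alpha\card{C_{v,v'}}$ by $(\alpha,\lambda)$-multicalibration on $S_v(h^*)$, giving a contribution of at most $(\alpha+\lambda)\card{C_{v,v'}}$. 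For each fixed $v$, the union of ``small'' cells inside $S_v(h^*)$ has total size at most $(1/\lambda)\cdot\alpha\lambda\card{S_v(h^*)} = \alpha\card{S_v(h^*)}$; summing over $v$, the small cells contribute at most $\alpha N$ in total, and since $\card{(h_i^*-x_i)(x_i-p_i^*)} \le 1$, their net contribution is at most $\alpha N$. Combining both parts yields $\card{\sum_i(h_i^*-x_i)(x_i-p_i^*)} \le (2\alpha+\lambda) N$, so the excess error is at most $2(2\alpha+\lambda)N = (4\alpha+2\lambda)N$; choosing $\lambda = \alpha$ gives $6\alpha N$, as desired.

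The main obstacle is managing the double partition and accounting for the small categories correctly: one has to use $(\alpha,\lambda)$-multicalibration on the sets $S_v(h^*)$ (not on singletons), and exploit the fact that $\lambda$-discretization makes $h^* - x$ nearly constant on each cell. Once one commits to the identity above and the two-level partition, the rest is bookkeeping; the cleanest setting is $\lambda = \alpha$, which yields the stated constant $6\alpha$. The circuit complexity then follows immediately by combining the size bound for $\C'$ with Theorem~\ref{thm:ckt}.
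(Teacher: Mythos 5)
Your utility argument is essentially the paper's, just packaged differently: the paper proves a per-category lemma (Lemma~\ref{lem:best}) asserting $\norm{h^*-p^*}^2-\norm{x-p^*}^2 \ge \norm{x-h^*}^2-(4\alpha+\lambda)N$, and its proof is exactly your computation --- expand the squared-error difference, pass to the double partition $S_v(h^*)\cap\set{i:x_i\in\lambda(u)}$, invoke calibration on the large cells, and bound the small cells trivially; your identity even retains the $-\norm{h^*-x}^2$ term, which is the same strengthening the paper records. So the accuracy bound is sound, up to the usual constant slack (note that $(\alpha,\alpha)$-multicalibration followed by $\lambda$-discretization only yields $2\alpha$-multicalibration on $\C$ via Claim~\ref{claim:lambda}, so to literally get $\alpha$-multicalibration you should run the algorithm at $\alpha/2$, as the paper does in the proof of Theorem~\ref{thm:ckt}).

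The genuine gap is in the circuit-size claim. You calibrate on all of $\S(\H)$, but the sets $S_v(h)$ can be arbitrarily small, and the size bound of Theorem~\ref{thm:ckt} (equivalently, the iteration bound of Lemma~\ref{lem:calibrated}) scales inversely with the minimum density of the sets in the collection; with tiny $S_v(h)$ included, you cannot conclude a circuit of size $O(s/\alpha^4\gamma)$. The paper handles this by discarding every $S_v(h)$ with $\card{S_v(h)} < \alpha\lambda N$, so that every calibrated set has density at least $\min(\gamma,\alpha\lambda)$, and charging the discarded mass (at most $\alpha N$ over the $1/\lambda$ categories of $h^*$) to the error. If you do the same, your cross-term estimate loses the calibration guarantee inside the discarded categories of $h^*$, which adds roughly another $\alpha N$ to the cross term; with $\lambda=\alpha$ your final bound then comes out near $8\alpha N$ rather than $6\alpha N$, so the parameters must be re-tuned (a smaller $\lambda$, or a smaller calibration parameter). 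This is bookkeeping rather than a structural flaw, but as written the circuit bound in the theorem does not follow from your proposal.
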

The proof of Theorem~\ref{thm:best}
actually reveals something stronger:  if $x$ is calibrated
on the set $\S(\H)$, then for every category $S_v(h) \in \S(\H)$,
if $x$ is significantly different from $h$ on this category
-- that is, if $\sum_{i \in S_v(h)} (h_i-x_i)^2$ is large --
then $x$ actually achieves significantly improved prediction
error on this category compared to $h$.  This is stated formally
in Lemma~\ref{lem:best}.

\begin{lemma}
\label{lem:best}
Suppose $y$ is an arbitrary predictor and let
$\S(y) = \set{S_v(y)}_{v \in \Lambda[0,1]}$.
Suppose $x$ is an arbitrary $\alpha$-multicalibrated
predictor on $\S(y)$. Then for all $v \in \Lambda[0,1]$,
$$ \sum_{i \in S_v(y)} \left((y_i-p_i^*)^2
- (x_i - p_i^*)^2\right)
\ge 
\sum_{i \in S_v(y)}(v-x_i)^2 - (4 \alpha + \lambda) \card{S_v(y)}.$$
Consequently,
$$\norm{y-p^*}^2 - \norm{x-p^*}^2 \ge \norm{x-y}^2 - (4\alpha +\lambda) N.$$
\end{lemma}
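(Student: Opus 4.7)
The plan is to establish the per-category inequality via an algebraic decomposition that isolates where $\alpha$-calibration is used, then pass to the global bound by summing. I would apply the telescoping identity
\[(y_i - p_i^*)^2 - (x_i - p_i^*)^2 = \bigl[(y_i - p_i^*)^2 - (v - p_i^*)^2\bigr] + \bigl[(v - p_i^*)^2 - (x_i - p_i^*)^2\bigr]\]
and analyze the two bracketed terms separately. The second bracket factors as $(v - x_i)^2 + 2(v - x_i)(x_i - p_i^*)$, which is where the promised $\sum (v - x_i)^2$ contribution will come from. The first bracket is a ``discretization error'' controlled by $\card{y_i - v} \le \lambda/2$: since $(y_i - p_i^*)^2 - (v - p_i^*)^2 = (y_i - v)(y_i + v - 2 p_i^*)$ and $\card{y_i + v - 2 p_i^*} \le 2$, the per-point magnitude is at most $\lambda$, contributing at least $-\lambda \card{S_v(y)}$ after summing over $S_v(y)$.

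The heart of the argument is bounding the cross term $2 \sum_{i \in S_v(y)} (v - x_i)(x_i - p_i^*)$ using $\alpha$-calibration of $x$ on $S_v(y)$. Let $T \subseteq S_v(y)$ be the subset of size at least $(1 - \alpha)\card{S_v(y)}$ guaranteed by Definition~\ref{def:calibration}. I would partition $T$ by the level sets $S_w(x) = \set{i : x_i = w}$ of $x$, since on each piece $v - x_i = v - w$ is a constant that can be pulled outside the inner sum:
\[\sum_{i \in T}(v - x_i)(x_i - p_i^*) = \sum_w (v - w) \sum_{i \in T \cap S_w(x)}(x_i - p_i^*).\]
By $\alpha$-calibration each inner sum has magnitude at most $\alpha \card{T \cap S_w(x)}$, and since $\card{v - w} \le 1$, the total contribution from $T$ is at most $\alpha \card{T} \le \alpha \card{S_v(y)}$. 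The complementary part $S_v(y) \setminus T$ has size at most $\alpha \card{S_v(y)}$ and contributes at most another $\alpha \card{S_v(y)}$ via trivial per-term bounds. Doubling for the factor of $2$ and combining with the $\lambda \card{S_v(y)}$ from the discretization step yields the per-category inequality.

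For the consequent, I would re-decompose as $(y_i - p_i^*)^2 - (x_i - p_i^*)^2 = (y_i - x_i)^2 + 2(y_i - x_i)(x_i - p_i^*)$ and reuse the multicalibration argument after writing $y_i - x_i = (v - x_i) + (y_i - v)$. The $(v - x_i)$ piece contributes magnitude at most $4\alpha \card{S_v(y)}$ exactly as above, while the $(y_i - v)$ piece is pointwise bounded by $\lambda/2$ in magnitude and so contributes at most $\lambda \card{S_v(y)}$. Summing the resulting per-category inequality $\sum_{i \in S_v(y)}\bigl[(y_i - p_i^*)^2 - (x_i - p_i^*)^2\bigr] \ge \sum_{i \in S_v(y)}(y_i - x_i)^2 - (4\alpha + \lambda)\card{S_v(y)}$ over $v \in \Lambda[0,1]$ turns $\sum_v \sum_{i \in S_v(y)}(y_i - x_i)^2$ into $\norm{y - x}^2$ and $\sum_v \card{S_v(y)}$ into $N$, yielding the claimed global bound.

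I expect the main obstacle to be the cross-term bound, since it is the only place where the calibration hypothesis actually bites. The partition-by-level-sets trick is essential: without pulling $(v - w)$ outside the inner sum one could only appeal to accuracy-in-expectation on $S_v(y)$ as a whole, which would control $\sum (x_i - p_i^*)$ scaled by a single factor of $\sup_i \card{v - x_i}$ rather than give the fine-grained, level-set-by-level-set cancellation that $\alpha$-calibration provides.
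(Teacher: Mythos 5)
Your proof is correct and takes essentially the same route as the paper's: both isolate the square term $\sum_{i \in S_v(y)}(v-x_i)^2$, control the cross term by partitioning $S_v(y)$ into the level sets of $x$ and applying $\alpha$-calibration on each level set (with a trivial bound on the excluded $\alpha$-fraction), and absorb the $y_i$-versus-$v$ discrepancy into the $\lambda$ term. The only notable difference is your handling of the consequent, where you re-expand around $(y_i - x_i)^2$ instead of summing the per-category bound; this is in fact slightly more careful than the paper, which implicitly identifies $\sum_{v}\sum_{i \in S_v(y)} (v - x_i)^2$ with $\norm{x-y}^2$ when passing to the global statement.
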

This lemma shows that calibrating on the categories of a predictor
not only prevents the squared prediction error from degrading
beyond a small additive approximation,
but it also guarantees that if calibrating
changes the predictor significantly on any category,
this change represents significant progress towards the true
underlying probabilities on this category.
Assuming Lemma~\ref{lem:best},
Theorem~\ref{thm:best} follows.

\begin{proofof}{Theorem~\ref{thm:best}}
Note that if $x$ is $\alpha$-multicalibrated on $\C$, then $x$ is
$\alpha$-multicalibrated on any $\C' \subseteq \C$.
Consider enforcing calibration on the collection $\C \cup
\S(\H)$ as defined above.
If $x$ is $\alpha$-calibrated on $\C \cup \S(\H)$ then it
is $\alpha$-multicalibrated on $\set{S_v(h)}_{v\in\Lambda[0,1]}$ for all
$h \in \H$ and specifically for $h^*$.
By Lemma~\ref{lem:best}, and the fact that the squared difference
is nonnegative, we obtain the following inequality:
$$ \norm{h^*-p^*}^2 - \norm{x-p^*}^2 \ge
\norm{x-h^*}^2 - (4\alpha + \lambda) N \ge -(4\alpha + \lambda) N.$$
This inequality suffices to prove the accuracy guarantee;
however, to also guarantee the predictor $x$ can be implemented
by a small circuit, we have to be a bit more careful.
In particular, when calibrating, we will ignore any
$S_v(h)$ such that $\card{S_v(h)} < \lambda\alpha N$.
Note that because we have $\lambda$-discretized, there are
at most $1/\lambda$ categories; thus, excluding
the sets $S_v(h)$ where $\card{S_v(h)} < \alpha \lambda N$
introduces at most an additional $\alpha N$ error.
Taking $\lambda = \alpha$,
in turn, this implies that the difference in squared prediction error
can be bounded as $\norm{x-p^*}^2 - \norm{h^*-p^*}^2 \le 6\alpha N$.
Finally, because the sets we want to calibrate on are at least
$\alpha^2\gamma N$ in cardinality,
the circuit complexity bound follows by applying
Algorithm~\ref{alg:calibrated} and Theorem~\ref{thm:ckt}.
\end{proofof}

Thus, given any method
for learning an accurate predictor $h$, we can turn it into
a method for learning a fair and accurate predictor $h'$ by
running Algorithm~\ref{alg:calibrated} on the set of
categories of $h$.
Combined with Theorem~\ref{thm:ckt}, this theorem shows that
for any such class of predictors $\H$ of bounded complexity,
there exists a calibrated predictor with similar circuit
complexity that performs nearly as well as the best
$h \in \H$ in terms of accuracy.
Further, by Lemma~\ref{lem:best}, this (nearly) best-in-class
property will hold not just over the entire domain $\X$, but
on every sufficiently large
category $S_v(h)$ identified by some $h \in \H$.
That is, if $x$ is calibrated on $\S(\H)$, then
for every category $S_v(h)$, the average squared prediction
error $\E_{i \in S_v(h)}\left[(x_i-p_i^*)^2\right]$
will be at most $6\alpha$ worse than prediction given by $h$
on this set. If we view $\H$ as defining a set $\S(\H)$ of
``computationally-identifiable'' categories, then we can view
any predictor that is calibrated on $\S(\H)$ as at least
as fair and at least as accurate on this set of
computationally-identifiable categories as the predictor
that identified the group (up to some small additive
approximation).

We turn to proving Lemma~\ref{lem:best}.
The lemma follows by expanding the difference in squared
prediction errors and invoking the definition of
$\alpha$-calibration.

\begin{proofof}{Lemma~\ref{lem:best}}
Let $S_{vu}$ represent the
set of individuals $i$ where $y \in \lambda(v)$ and $x$
assigns value $u$.
By the assumption that $x$ is
$\alpha$-calibrated on $\S(y)$, we know for every $S_v(y) \in \S(y)$,
there is some subset $S_v'(y) \subseteq S_v(y)$ such that
$\card{S_v'(y)} \ge (1-\alpha)\card{S_v(y)}$ for which $x$'s predictions are approximately correct.  In particular,
let $S_{vu}' = S_v'(y) \cap S_u(x)$; if $x$ is
$\alpha$-calibrated with respect to $S_v(y)$,
this guarantees that for all values $u \in [0,1]$, we have
\begin{equation}
\card{\sum_{i \in S_{vu}'} p_i^* - u} \le
\alpha \card{S_{vu}'}.\label{eqn:cal}
\end{equation}
Using this fact, and the fact that the remaining
$\alpha$-fraction of $S_v(y)$ can contribute at most
$\alpha\card{S_v(y)}$ to the squared error,
we can express the difference in the
squared errors of $y$ and $x$ on $S_v(y)$:
\begin{align}
\sum_{i \in S_v(y)}(y_i - p_i^*)^2
- \sum_{i \in S_v(y)}(x_i - p_i^*)^2 &=
\sum_{i \in S_v(y)}(v - p_i^* + (y_i - v))^2
- \sum_{i \in S_v(y)}(x_i - p_i^*)^2\notag\\
&=\sum_{i \in S_v(y)}(v - p_i^*)^2
- \sum_{i \in S_v(y)}(x_i - p_i^*)^2
+ 2\sum_{i \in S_v(y)}(v-p_i^*)(y_i-v)\notag\\
&\ge \sum_{i \in S_v(y)} (2(p_i^* - v)(x_i-v) - (x_i-v)^2)
- \lambda \card{S_v(y)}
\label{eqn:distance}.
\end{align}
where (\ref{eqn:distance}) follows by the observation that if
$y_i \in \lambda(v)$, then $\card{y_i - v} \le \lambda/2$
and $\card{v-p_i^*}$ is trivially bounded by $1$.
We bound the sum over $i \in \X$ of the first term:
\begin{align*}
\sum_{i \in S_v(y)} (p_i^* - v)(x_i-v)
&=
\sum_{u \in [0,1]}\sum_{i \in S_{vu}}
(p_i^* - v)(u-v)\\
&=
\sum_{u \in [0,1]}(u-v)\sum_{i \in S_{vu}}
(p_i^* - v)\\
&=
\sum_{u \in [0,1]}(u-v)\sum_{i \in S_{vu}}
(u-v + p_i^* - u)\\
&=
\sum_{u \in [0,1]}\left(\card{S_{vu}}(u-v)^2 +
(u-v)\sum_{i \in S_{vu}}(p_i^* - u)\right).
\end{align*}
At this point, we note that $\card{u-v} \le 1$.  Thus,
we can bound the contribution of the sum over $S_{vu}$ by its
negative absolute value:
\begin{align*}
&\ge
\sum_{u \in [0,1]}\left(\card{S_{vu}}(u-v)^2 -
\card{u-v}\card{\sum_{i \in S_{vu}}(p_i^* - u)}\right)\\
&\ge 
\sum_{u \in [0,1]}\left(\card{S_{vu}}(u-v)^2 -
\card{\sum_{i \in S'_{vu}}(p_i^* - u) +
\sum_{i \in S_{vu} \setminus S'_{vu}}(p_i^* - u)}\right)\\
&\ge 
\sum_{u \in [0,1]}\left(
\card{S_{vu}}(u-v)^2 - \card{\sum_{i \in S_{vu}'}(p_i^* - u)}
- \alpha\card{S_v(y)}\right)\\
&\ge
\sum_{u \in [0,1]}
\card{S_{vu}}(u-v)^2 - 2\alpha\card{S_{v}(y)}\\
&=\sum_{i \in S_v(y)}(v-x_i)^2 - 2 \alpha \card{S_v(y)},
\end{align*}
where we bound the sums over $S_{vu}$ by invoking
$\alpha$-calibration and applying (\ref{eqn:cal}).
Plugging this bound into (\ref{eqn:distance}), we
see that
\begin{align*}
\sum_{i \in S_v(y)}\left((y_i-p_i^*)^2
- (x_i-p_i^*)^2\right)
&\ge 2\left(\sum_{i \in S_v(y)}(v-x_i^*)^2
- 2 \alpha \card{S_v(y)}\right)
- \lambda \card{S_v(y)}
- \sum_{i \in S_v(y)} (v-x_i)^2\\
&= \sum_{i \in S_v(y)}(v-x_i^*)^2 - (4\alpha-\lambda) \card{S_v(y)}.
\end{align*}
Summing over $v \in [0,1]$, we can conclude
$$\norm{y-p^*}^2 - \norm{x-p^*}^2 \ge 
\norm{x-y}^2 - (4\alpha-\lambda) N$$
showing the lemma.
\end{proofof}

\paragraph{Acknowlegments}  The authors would like to thank Cynthia Dwork,
Roy Frostig, Parikshit Gopalan, Moritz Hardt, Aditi Raghunathan, Jacob Steinhardt,
and Greg Valiant for helpful discussions related to this work.

\bibliographystyle{alpha}
\bibliography{refs}

\end{document}